\renewcommand{\labelenumi}{(\alph{enumi})}
\renewcommand\theenumi\labelenumi
\newtheorem{theorem}{Theorem}
\newtheorem{lemma}[theorem]{Lemma}
\newcommand{\ooea}{(1+1)~EA\xspace}
\newcommand{\oplea}{(1+$\lambda$)~EA\xspace}
\newcommand{\mpoea}{($\mu$+1)~EA\xspace}
\newcommand{\oclea}{(1,$\lambda$)~EA\xspace}
\newcommand{\om}{\textsc{OneMax}\xspace}
\newcommand{\onemax}{\om}
\newcommand{\lo}{\textsc{LeadingOnes}\xspace}
\newcommand{\leadingones}{\lo}
\newcommand{\R}{\ensuremath{\mathbb{R}}}
\newcommand{\N}{\ensuremath{\mathbb{N}}}
\newcommand{\Z}{\ensuremath{\mathbb{Z}}}
\DeclareMathOperator{\Prob}{Pr}
\DeclareMathOperator{\init}{init}
\DeclareMathOperator{\Bin}{Bin}
\newcommand{\xmin}{x_{\mathrm{min}}}
\newcommand{\xmax}{x_{\mathrm{max}}}
\newcommand{\ie}{i.\,e.\xspace}
\newcommand{\Var}{\mathrm{Var}\xspace} 
\newcommand{\eps}{\varepsilon} 
\newenvironment{proofof}[1]{\begin{proof}[Proof of~#1]}{\end{proof}}
\title{The (1+$\lambda$)~Evolutionary Algorithm with Self-Adjusting Mutation Rate\thanks{An extended abstract 
of this report appeared in the proceedings of the 2017 Genetic and Evolutionary Computation Conference (GECCO~2017) \cite{DoerrGWYGECCO17}.}}
\author{Benjamin Doerr\\
Laboratoire d'Informatique (LIX)\\{\'E}cole Polytechnique\\Palaiseau, France
\and
Christian Gießen\\
DTU Compute\\Technical University of Denmark\\Kgs.\ Lyngby, Denmark
\and
Carsten Witt\\
DTU Compute\\Technical University of Denmark\\Kgs.\ Lyngby, Denmark
\and
Jing Yang\\
Laboratoire d'Informatique (LIX)\\{\'E}cole Polytechnique\\Palaiseau, France}
\begin{document}
\maketitle

\begin{abstract}
We propose a new way to self-adjust the mutation rate in population-based evolutionary algorithms in discrete search spaces. Roughly speaking, it consists of creating half the offspring with a mutation rate that is twice the current mutation rate and the other half with half the current rate. The mutation rate is then updated to the rate used in that subpopulation which contains the best offspring. 

We analyze how the $(1+\lambda)$ evolutionary algorithm with this self-adjusting mutation rate optimizes the OneMax test function.  
We prove that this dynamic version of the $(1+\lambda)$~EA finds the optimum in an expected optimization time (number of fitness evaluations) of $O(n\lambda/\log\lambda+n\log n)$. This time is asymptotically smaller than the optimization time of the classic $(1+\lambda)$ EA. Previous work shows that this performance is best-possible among all $\lambda$-parallel mutation-based unbiased black-box algorithms.

This result shows that the new way of adjusting the mutation rate can find optimal dynamic parameter values on the fly. Since our adjustment mechanism is simpler than the ones previously used for adjusting the mutation rate and does not have parameters itself, we are optimistic that it will find other applications.
\end{abstract}

\section{Introduction}

Evolutionary algorithms (EAs) have shown a remarkable performance in a broad range of applications.
However, it has often been observed that this performance depends crucially on the use of the right parameter settings. Parameter optimization and parameter control are therefore key topics in EA research. Since these have very different characteristics in discrete and continuous search spaces, we discuss in this work only evolutionary algorithms for discrete search spaces.

Theoretical research 
has contributed to our understanding of these algorithms with mathematically founded runtime analyses, many of which show how the runtime of an EA is determined by its parameters.
The majority of these works investigate \emph{static parameter settings},
\ie, the parameters are fixed before the start of the algorithm and are not changed during its execution. More recently, a number of results were shown which prove an advantage of \emph{dynamic parameter settings}, that is, the parameters of the algorithm are changed during its execution. Many of these rely on making the parameters functionally dependent on the current state of the search process, e.g., on the fitness of the current-best individual. While this provably can lead to better performances, it leaves the algorithm designer with an even greater parameter setting task, namely inventing a suitable functional dependence instead of fixing numerical values for the parameters. This problem has been solved by theoretical means for a small number of easy benchmark problems, but it is highly unclear how to find such functional relations in the general case.

A more designer-friendly way to work with dynamic parameters is to modify the parameters based on simple rules taking into account the recent performance. A number of recent results shows that such \emph{on the fly} or \emph{self-adjusting} parameter settings can give an equally good performance as the optimal fitness-dependent parameter setting, however, with much less input from the algorithm designer. For example, good results have been obtained by increasing or decreasing a parameter depending on whether the current iteration improved the best-so-far solution or not, e.g., in a way resembling the $1/5$-th rule from continuous optimization.

Such success-based self-adjusting parameter settings can work well when there is a simple monotonic relation between success and parameter value, e.g., when one speculates that increasing the size of the population in an EA helps when no progress was made. For parameters like the mutation rate, it is not clear what a success-based rule can look like, since a low success rate can either stem from a too small mutation rate (regenerating the parent with high probability) or a destructive too high mutation rate. In~\cite{DoerrYangGECCO16}, a relatively complicated learning mechanism was presented that tries to learn the right mutation strength by computing a time-discounted average of the past performance stemming from different parameter values. This learning mechanism needed a careful trade-off between exploiting the currently most profitably mutation strength and experimenting with other parameter values and a careful choice of the parameter controlling by how much older experience is taken less into account than more recent observations.

\subsection{A New Self-Adjusting Mechanism for Population-Based EAs}

In this work, we propose an alternative way to adjust the mutation rate on the fly for algorithms using larger offspring populations. It aims at overcoming some of the difficulties of the learning mechanism just described. The simple idea is to create half the offspring with twice the current mutation rate and the other half using half the current rate. The mutation rate is then modified to the rate which was used to create the best of these offspring (choosing the winning offspring randomly among all best in case of ambiguity). We do not allow the mutation rate to leave the interval $[2/n, 1/4]$, so that the rates used in the subpopulations are always in the interval $[1/n, 1/2]$.

We add one modification to the very basic idea described in the first paragraph of this section. Instead of always modifying the mutation rate to the rate of the best offspring, we shall take this winner's rate only with probability a half and else modify the mutation rate to a random one of the two possible values (twice and half the current rate). Our motivation for this modification is that we feel that the additional random noise will not prevent the algorithm from adjusting the mutation rate into a direction that is more profitable. However, the increased amount of randomness may allow the algorithm to leave a possible basin of attraction of a locally optimal mutation rate. Observe that with probability $\Theta(1/n^2)$, a sequence of $\log_2 n$ random modifications all in the same direction appears. Hence with this inverse-polynomial rate, the algorithm can jump from any mutation rate to any other (with the restriction that only a discrete set of mutation rates can appear). We note that the existence of random modifications is also exploited in our runtime analysis, which will show that the new self-adjusting mechanism selects mutation rates good enough to lead to the asymptotically optimal runtime among all dynamic choices of the mutation rate for the \oplea.

In this first work proposing this mechanism, we shall not spend much effort fine-tuning it, but rather show in a proof-of-concept manner that it can find very good mutation rates. In a real application, it is likely that better results are obtained by working with three subpopulations, namely an additional one using (that is, exploiting) the current mutation rate. Also, it seems natural that more modest adjustments of the mutation rate, that is, multiplying and dividing the rate by a number $F$ that is smaller than the value $F=2$ used by our mechanism, is profitable. We conduct some elementary experiments supporting this intuition in Section~\ref{sec:experiments}.

%
%

\subsection{Runtime Analysis for the Self-Adjusting \oplea on \onemax}

To prove that the self-adjusting mechanism just presented can indeed find good dynamic mutation rates, we conduct a rigorous runtime analysis for our self-adjusting \oplea on the classic test function \[\onemax : \{0,1\}^n \to \R; (x_1,\dots,x_n) \mapsto \sum_{i=1}^n x_i.\]

The runtime of the \oplea with fixed mutation rates on \om is well understood~\cite{DoerrKuennemannTCS15,GiessenWittALGO16}.
In particular, \citet{GiessenWittALGO16} show that the expected runtime (number of generations) 
is $(1\pm o(1))\left(\frac{1}{2}\cdot\frac{n\ln\ln\lambda}{\ln\lambda}+\frac{e^r}{r}\cdot\frac{n\ln n}{\lambda}\right)$
when a mutation rate of $r/n$, $r$~a constant, is used. Thus for $\lambda$ not too large,
the mutation rate determines the leading constant of the runtime, and a rate of $1/n$ gives the asymptotically best runtime.

As a consequence of their work on parallel black-box complexities, Badkobeh, Lehre, and Sudholt~\cite{BadkobehPPSN14} showed that the \oplea with a suitable fitness-dependent mutation rate finds the optimum of \om in an asymptotically better runtime of $O(\frac{n}{\log\lambda}+\frac{n\log n}{\lambda})$, where 
the improvement is by a factor of $\Theta(\log\log\lambda)$.\footnote{As usual in the analysis of algorithms, we write $\log$ when there is no need to specify the base of the logarithm, e.g., in asymptotic terms like $\Theta(\log n)$. To avoid the possible risk of an ambiguity for small arguments, as usual, we set $\log x = \max\{1, \log x\}$. All other logarithms are used in their classic meaning, that is, for all $x > 0$, we denote by $\ln x$ the natural logarithm of $x$ and by $\log_2 x$ its binary logarithm.} This runtime is best-possible among all $\lambda$-parallel unary unbiased black-box optimization algorithms. In particular, no other dynamic choice of the mutation rate in the \oplea can achieve an asymptotically better runtime. The way how the mutation rate depends on the fitness in the above result, however, is not trivial. When the parent individual has fitness distance $d$, then mutation rate employed is $p = \max\{\frac{\ln \lambda}{n \ln(en/d)}, \frac 1n\}$. 

%


Our main technical result is that the \oplea adjusting the mutation rate according to the mechanism described above has the same (optimal) asymptotic runtime. Consequently, the self-adjusting mechanism is able to find on the fly a mutation rate that is sufficiently close to the one proposed in~\cite{BadkobehPPSN14} to achieve asymptotically the same expected runtime.

\begin{theorem}
\label{thm:main}
Let $\lambda \ge 45$ and $\lambda=n^{O(1)}$. 
Let $T$ denote the number of generations 
of the \oplea with self-adjusting mutation rate on \om.
Then,
\begin{equation*}
E(T) = \Theta\left( \frac{n}{\log \lambda} + \frac{n\log n}{\lambda}  \right).
\end{equation*}
This corresponds to an expected number of function evaluations of 
$\Theta(\frac{\lambda n}{\log\lambda}+n\log n)$.
\end{theorem}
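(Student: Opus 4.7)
The lower bound $E(T) = \Omega(n/\log\lambda + n\log n/\lambda)$ is inherited directly from the parallel unary unbiased black-box complexity bound of Badkobeh, Lehre, and Sudholt, since the self-adjusting \oplea lies in that class. The substantive work is the matching upper bound, for which I would use drift analysis on a joint potential tracking the fitness distance $d_t$ and the current mutation rate $r_t$. The goal is to show that $r_t$ stays within a constant factor of the fitness-dependent optimum $p^*(d) = \max\{\ln\lambda/(n\ln(en/d)), 1/n\}$ for a constant fraction of generations. When this holds, the standard best-of-$\lambda$-offspring analysis on \onemax yields an expected one-generation fitness improvement of order $\Theta(\log\lambda/\log(en/d))$ in the far regime $d = \omega(n/\log\lambda)$ and of order $\Theta(\lambda d/n)$ in the near regime $d = O(n/\log\lambda)$. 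Summing $1/\text{drift}$ over $d$ then yields $O(n/\log\lambda)$ from the far regime and $O(n\log n/\lambda)$ from the near regime.

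The central technical lemma is a rate-tracking statement: compare, at a fixed $d$ and $r$, the probability that the best offspring lies in subpopulation $A$ (rate $r/2$) versus $B$ (rate $2r$). Using the (approximate) unimodality in $p$ of the expected maximum fitness gain over $\lambda/2$ offspring at rate $p$, together with concentration estimates for the number of one-bits and zero-bits flipped, I would show that if $r \le p^*(d)/C$ for a sufficiently large constant $C$, then $B$ wins strictly more often than $A$ by a constant margin, and symmetrically for $r \ge C\,p^*(d)$. Combined with the update rule (winning subpopulation's rate with probability $3/4$, losing one's with probability $1/4$), this gives a constant additive drift of $\log_2 r_t$ toward $\log_2 p^*(d_t)$ whenever $r_t$ is a constant factor off target, plus an $\Omega(1)$ per-step randomness that prevents the walk from getting stuck.

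The main obstacle will be the coupling between $d_t$ and $r_t$, since both $p^*(d)$ and the fitness drift are functions of $d_t$ which itself changes stochastically. My plan is to partition the run into phases in which $d_t$ remains within a factor of two of its value at the start of the phase, so that $p^*(d)$ changes by only an $O(1)$ factor during the phase. Within such a phase the rate walk has a constant drift toward a nearly-fixed target; an additive drift argument plus a Chernoff-type tail bound then shows that $r_t$ is within a constant factor of the target in a constant fraction of steps of the phase, so the effective fitness drift is within a constant factor of the drift at $p^*$. A final integration via the variable drift theorem delivers the claimed bound. Boundary effects (rate pinned at $2/n$ or $1/4$, relevant only in the near regime and handled by a reflection-type argument) and the transition region $d \approx n/\log\lambda$ (where both runtime terms are of the same order and must be combined) account for the remaining technical overhead but should not change the asymptotics.
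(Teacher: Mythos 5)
Your overall architecture (lower bound inherited from the $\lambda$-parallel unbiased black-box complexity of Badkobeh et al., upper bound via rate-tracking plus variable drift on the fitness) matches the paper's, and your far-regime treatment is essentially Lemma~\ref{lem:fardrift} and Theorem~\ref{thm:expected-time-far}. However, there are two genuine gaps. The first is your drift claim of $\Theta(\lambda d/n)$ for all $d = O(n/\log\lambda)$: this is false for $n/\lambda \ll d \le n/\log\lambda$. At $d = n/\log\lambda$ your formula asserts a per-generation gain of order $\lambda/\log\lambda$, whereas the expected gain of the best of $\lambda$ offspring is at most $O(\log\lambda/\log\log\lambda)$ there for \emph{every} choice of the mutation rate (a gain of $g$ requires one offspring to flip $g$ more one-bits than zero-bits, and the union bound over $\lambda$ offspring caps $g$ at $O(\log\lambda/\log(en/d))$). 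Since the runtime upper bound needs a valid \emph{lower} bound on the drift, your integration step is unsound on this band. The paper closes it with a separate \emph{middle region} $n/\lambda \le k \le n/\ln\lambda$ (Lemma~\ref{lem:middle-region-drift}, Theorem~\ref{thm:expected-time-middle}) in which only the one-sided condition $r \le \ln\lambda$ is enforced — no tracking of $p^*(d)$ from below is needed — yielding the weaker but correct drift $\min\{1/8, \sqrt{\lambda}k/(32n)\}$ (the $\sqrt{\lambda}$ arises because with rate up to $\ln(\lambda)/2$ only about $\lambda e^{-r/2} \ge \sqrt{\lambda}$ offspring avoid flipping a zero-bit), which still integrates to $O(n/\log\lambda)$.

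The second and deeper gap is the ``symmetric'' half of your central rate-tracking lemma. In the near region $d \le n/\lambda$, where the target rate is the minimal one, a typical iteration produces \emph{no} strict improvement in either subpopulation: the best offspring are exact copies of the parent, present on both sides, and the winner is drawn uniformly among them. The low-rate side then wins not because the high-rate side produces worse best offspring, but only because it contains \emph{more} parent copies; quantifying this (Lemma~\ref{lem:near-region-rate-drift}) yields a margin of merely $0.5099$ over $1/2$, and only for $4 \le r_t \le \ln(\lambda)/4$. In the band $(\ln\lambda)/4 \le r_t \le 16\ln\lambda$ no constant-margin success-based statement is available at all: near $r \approx \ln\lambda$ typically \emph{all} offspring are strictly worse than the parent, so the comparison of subpopulations carries essentially no signal, and ``unimodality plus concentration'' does not deliver your claimed constant margin. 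The paper bridges this band in the proof of Theorem~\ref{thm:expected-time-near} with a concave potential function on $\log_2 r_t$ whose down-steps are four times its up-steps across the band, so that the unconditional probability of at least $1/4$ of a random downward adjustment alone forces positive drift even if the selection signal were adversarial. Your closing remark that the random steps ``prevent the walk from getting stuck'' gestures at this, but in the near region the random adjustments are load-bearing rather than a convenience, and your lemma as stated is false on that band; any repair must replace the symmetric claim by an argument of this occupation-plus-potential type.
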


To the best of our knowledge,
this is the first time that a simple mutation-based EA
achieves a super-constant speed-up via a self-adjusting choice of the mutation rate.

As an interesting side remark, our proofs reveal that 
a quite non-standard but fixed mutation rate of $r=\ln(\lambda)/2$ also achieves the $\Theta(\log\log\lambda)$ improvement as it 
implies the bound of $\Theta(n/\log \lambda)$ 
generations if $\lambda$ is not too small. Hence, the constant choice $r=O(1)$ as studied in 
\cite{GiessenWittALGO16} does not yield the asymptotically optimal number of generations unless 
$\lambda$ is so small that the $n\log n$-term dominates.

\begin{lemma}
\label{lem:main-2}
Let $\lambda\ge 45$ and $\lambda=n^{O(1)}$, 
Let $T$ denote the number of generations 
of the \oplea with fixed mutation rate $r=\ln(\lambda)/2$.
Then,
\begin{equation*}
E(T) = O\left( \frac{n}{\log \lambda} + \frac{n\log n}{\sqrt{\lambda}}  \right).
\end{equation*}
This corresponds to an expected number of function evaluations of 
$O(\frac{\lambda n}{\log\lambda}+\sqrt{\lambda} n\log n)$.
\end{lemma}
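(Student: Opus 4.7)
The plan is to apply the variable drift theorem to the Hamming distance $X_t$ between the current best-so-far individual and the optimum. The central identity, for $p = r/n = \ln(\lambda)/(2n)$, is
\[(1-p)^n \;=\; \Theta(e^{-pn}) \;=\; \Theta(\lambda^{-1/2}),\]
which is precisely why this unusual fixed rate is interesting: the survival factor $(1-p)^{n-1}$ of a beneficial mutation equals $\Theta(1/\sqrt{\lambda})$, matching exactly what a population of $\lambda$ offspring can amplify. I would split the analysis at the threshold $d^{\star} \coloneqq n/(\sqrt{\lambda}\log\lambda)$ and derive a pointwise lower bound $h(d)$ on the drift $E[X_t - X_{t+1} \mid X_t = d]$ in each regime.

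In the small-distance regime $1 \le d \le d^{\star}$, the dominant event is that some offspring flips exactly one $0$-bit and no $1$-bits. Each offspring does so with probability $q(d) = dp(1-p)^{n-1} = \Theta(d\log(\lambda)/(n\sqrt{\lambda}))$, and since $\lambda q(d) = O(1)$ throughout this range, the standard inequality $1-(1-q)^\lambda = \Omega(\min\{1,\lambda q\})$ yields $h(d) = \Omega(d\sqrt{\lambda}\log(\lambda)/n)$. In the large-distance regime $d > d^{\star}$, I would lower-bound the probability that a single offspring flips exactly $j$ zero-bits and no $1$-bits by $\binom{d}{j}p^{j}(1-p)^{n-j} \ge (dp/j)^{j}(1-p)^{n-j}$, taking $j = \lceil c_0 \log(\lambda)/\log(en/d)\rceil$ for a sufficiently small constant $c_0$. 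A short calculation using $(1-p)^n = \Theta(\lambda^{-1/2})$ shows this probability is at least $1/\lambda$, so with constant probability at least one of the $\lambda$ offspring improves by $j$, giving $h(d) = \Omega(\log(\lambda)/\log(en/d))$. In particular, $h(d) = \Omega(\log \lambda)$ whenever $d = \Omega(n)$.

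Plugging both bounds into the variable drift theorem and integrating yields
\[E[T] \;=\; O\!\left(\int_1^{d^{\star}} \frac{n\,\mathrm{d}x}{x\sqrt{\lambda}\log\lambda}\right) + O\!\left(\int_{d^{\star}}^{n} \frac{\log(en/x)}{\log\lambda}\,\mathrm{d}x\right) \;=\; O\!\left(\frac{n\log n}{\sqrt{\lambda}\log\lambda} + \frac{n}{\log\lambda}\right),\]
where the second integral is $O(n/\log\lambda)$ because $\int_0^n \log(en/x)\,\mathrm{d}x = O(n)$. The stated bound follows since $\log\lambda \ge \log 45 > 1$. The main obstacle is the calibration of $j$ in the large-distance regime: the quantity $(dp/j)^{j}$ must beat the $1/\sqrt{\lambda}$ coming from $(1-p)^n$ uniformly in $d \in (d^{\star}, n]$, and the choice $j \sim \log(\lambda)/\log(en/d)$ is dictated precisely by that balance. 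The assumption $\lambda = n^{O(1)}$ ensures $\log\lambda = \Theta(\log n)$, so no logarithmic factors are lost when merging the two regimes.
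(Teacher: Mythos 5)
Your proof is correct, and it reaches the bound by a genuinely more self-contained route than the paper. The paper's proof of this lemma simply revisits its three regions and recycles the machinery built for the self-adjusting algorithm: Lemmas~\ref{lem:super-far-region-rate-drift} and~\ref{lem:far-region-drift} give the drift $\Omega(\log\lambda/\log(en/k))$ in the far region (variable drift), Lemma~\ref{lem:middle-region-drift} gives drift $\Omega(1)$ for $k\ge n/\sqrt{\lambda}$ (additive drift), and a reduced-population argument --- every offspring avoids flipping a zero-bit with probability $(1-r/n)^n=\Omega(\lambda^{-1/2})$, so one may pessimistically work with $\Omega(\sqrt{\lambda})$ ``clean'' offspring --- yields drift $\Omega(\sqrt{\lambda}\,Y_t/n)$ below $n/\sqrt{\lambda}$ (multiplicative drift). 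You instead derive the same two quantitative drift bounds from scratch by direct binomial estimates (your calibration $j\sim\log\lambda/\log(en/d)$ is exactly the mechanism behind the paper's choice $i\approx c_1(k)\ln(\lambda)/2$ in Lemma~\ref{lem:far-region-drift}, and your identity $(1-p)^n=\Theta(\lambda^{-1/2})$ is the paper's key observation in the near region) and then feed a single two-piece function $h$ into one application of the variable drift theorem, with the paper's middle region absorbed into the tail of your large-distance regime. What your packaging buys is a shorter, standalone argument, a cleaner threshold $d^\star=n/(\sqrt{\lambda}\log\lambda)$ at which the two bounds match up to constants, and a slightly sharper small-distance drift $\Omega(d\sqrt{\lambda}\log(\lambda)/n)$ (the paper drops the $\log\lambda$ factor); what the paper's packaging buys is reuse of already-proven lemmas and no need to re-verify the binomial tail computations. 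Two minor points to fix: first, your closing claim that $\lambda=n^{O(1)}$ ensures $\log\lambda=\Theta(\log n)$ is false --- it only gives $\log\lambda=O(\log n)$, since $\lambda$ may be a constant $\ge 45$ --- but nothing in your argument actually needs it, because the regime-merging works purely from the fact that both bounds are $\Theta(1)$ at $d^\star$; second, Theorem~\ref{theo:variable-upper} requires a monotone increasing $h$, so you should note explicitly that both pieces are increasing in $d$ and agree up to constant factors at $d^\star$, whence a suitably rescaled pointwise minimum is monotone and the integration goes through as you wrote it.
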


The paper is structured as follows:
In Section~\ref{sec:related} we  give an overview
over previous analyses of the \oplea and of self-adjusting parameter control mechanism
in EAs from a theoretical perspective.
In Section~\ref{sec:preliminaries} we give the algorithm and the mutation scheme.
For convenience, we also state some key theorems
that we will frequently use in the rest of the paper. Afterwards, Section~\ref{sec:proof-overview} 
gives a high-level overview of our main proof strategy. 
The following technical sections deal with the runtime analysis
of the expected time spent by the \oplea on \om
in each of three regions of the fitness distance~$d$.
We label these regions the \emph{far region}, \emph{middle region} and \emph{near region}, each 
of which 
will be dealt with in a separate section. The proof of the main theorem and of Lemma~\ref{lem:main-2} is then given in 
Section~\ref{sec:main-theo}. 
%
Finally, we conclude in Section~\ref{sec:conclusion}.

\section{Related Work}
\label{sec:related}

Since this is a theoretically oriented work on how a dynamic parameter choice speeds up the runtime of the \oplea on the test function \onemax, let us brief{}ly review what is known about the theory of this EA and dynamic parameter choices in general. 

\subsection{The \oplea}

The first to conduct a rigorous runtime analysis of the \oplea were Jansen, De Jong, and Wegener~\cite{JansenDW05}. They proved, among other results, that when optimizing  \onemax  a linear speed-up exists up to a population size of $\Theta(\log(n)\log\log(n)/\log\log\log(n))$, that is, for $\lambda = O(\log(n)\log\log(n)/\log\log\log(n))$, finding the optimal solution takes an expected number of $\Theta(n \log(n) / \lambda)$ generations, whereas for larger $\lambda$ at least $\omega(n \log(n)/\lambda)$ generations are necessary. This picture was completed in~\cite{DoerrKuennemannTCS15} with a proof that the expected number of generations taken to find the optimum is $\Theta(\frac{n \log n}{\lambda} + \frac{n \log\log\lambda}{\log\lambda})$. The implicit constants were determined in~\cite{GiessenWittALGO16}, 
giving the bound of $(1 \pm o(1)) (\frac 12 \frac{n \ln\ln\lambda}{\ln\lambda}+\frac{e^r}{r} \frac{n\ln n}{\lambda})  $, for any constant~$r$, as 
mentioned in the introduction.

Aside from the optimization behavior on \onemax, not too much is known for the \oplea, or is at least not made explicit (it is easy to see that waiting times for an improvement which are larger than $\lambda$ reduce by a factor of $\Theta(\lambda)$ compared to one-individual offspring populations). Results made explicit are the $\Theta(n^2 / \lambda + n)$ expected runtime (number of generations) on \leadingones~\cite{JansenDW05}, the worst-case $\Theta(n+ n \log(n) / \lambda )$ expected runtime on linear functions~\cite{DoerrKuennemannTCS15}, and the $O(m^2 (\log n + \log w_{\max}) / \lambda)$ runtime estimate for minimum spanning trees valid for $\lambda \le m^2/n$~\cite{NeumannW07}, where $n$ denotes the number of vertices of the input graph, $m$ the number of edges, and $w_{\max}$ the maximum of the integral edge weights.

\subsection{Dynamic Parameter Choices}

While it is clear the EAs with parameters changing during the run of the algorithm (dynamic parameter settings) can be more powerful than those only using static parameter settings, only recently considerable advantages of dynamic choices could be demonstrated by mathematical means (for discrete optimization problems; in continuous optimization, step size adaptation is obviously necessary to approach arbitrarily closely a target point). To describe the different ways to dynamically control parameters, we use in the following the language proposed in Eiben, Hinterding, and Michalewicz~\cite{EibenHM99} and its extension from~\cite{DoerrD15}. 

\subsubsection{Deterministic Parameter Control}

In this language, \emph{deterministic parameter control} means that the dynamic choice of a parameter does not depend on the fitness landscape. The first to rigorously analyze a deterministic parameter control scheme are Jansen and Wegener~\cite{JansenW06}. They regard the performance of the \ooea which uses in iteration $t$ the mutation rate $k/n$, where $k \in \{1, 2, \ldots, 2^{\lceil \log_2 n \rceil-2}\}$ is chosen such that $\log_2(k) \equiv t-1 \,\,\,(\text{mod}\, \lceil \log_2 n \rceil-1)$. In other words, they cyclically use the mutation rates $1/n, 2/n, 4/n, \ldots, K/n$, where $K$ is the largest power of two that is less than $n/2$. Jansen and Wegener demonstrate that there exists an example function where this dynamic EA significantly outperforms the \ooea with any static mutation rate. However, they also observe that for many classic problems, this EA is slower by a factor of $\Theta(\log n)$. 


In~\cite{OlivetoCEC09}, a rank-based mutation rate was analyzed for the \mpoea. 
A previous experimental study~\cite{CervantesS08} suggested that this is a profitable approach, but the mathematical runtime analysis in~\cite{OlivetoCEC09} rather indicates the opposite. While there are artificial examples where a huge runtime gain could be shown and also the worst-case runtime of the \mpoea reduces from essentially $n^n$ to $O(3^n)$, a rigorous analysis on the \onemax function rather suggests that the high rate of offspring generated with a mutation rate much higher than $1/n$ brings a significant risk of slowing down the optimization process.

For two non-standard settings in evolutionary computation, deterministic parameter control mechanisms also gave interesting results. For problems where the solution length is not known~\cite{CathabardLY11}, more precisely, where the number or the set of bits relevant for the solution quality is unknown, again random mutation rates gave good results~\cite{DoerrDK15,DoerrDK17}. Here however, not a power-law scheme was used, but rather one based on very slowly decreasing summable sequences. For problems where the discrete variables take many values, e.g., the search space is $\{0, \dots, r-1\}^n$ for some large $r$, the question is how to change the value of an individual variable. The results in~\cite{DoerrDK16} suggest that a harmonic mutation strength, that is, changing a variable value by $\pm i$ with $i$ chosen randomly with probability proportional to $1/i$, can be beneficial. This distribution was analyzed earlier in~\cite{DietzfelbingerRWW10} for the one-dimensional case, where it was also shown to give the asymptotically best performance on a \onemax type problem.

For randomized search heuristics outside evolutionary computation, Wegener~\cite{Wegener05} showed that simulated annealing (using a time-dependent temperature) can beat the Metropolis algorithm (using a static temperature).

\subsubsection{Adaptive Parameter Control}

A parameter control scheme is called \emph{adaptive} if it used some kind of feedback from the optimization process. This can be functionally dependent (e.g., the mutation rate depends on the fitness of the parent) or success-based (e.g., a 1/5th rule). 

The first to conduct a runtime analysis for an adaptive parameter control mechanism (and show a small advantage over static choices) were B\"ottcher, Doerr, and Neumann~\cite{BottcherDN10}. They proposed to use the \emph{fitness-dependent} mutation rate of $1/(\lo(x)+1)$ for the optimization of the \leadingones test function. They proved that with this choice, the runtime of the \ooea improves to roughly $0.68n^2$ compared to a time of $0.86n^2$ stemming from the classic mutation rate $1/n$ or a runtime of $0.77n^2$ stemming from the asymptotically optimal static rate of approximately $1.59/n$. 

\begin{sloppypar}
For the $(1+(\lambda,\lambda))$~GA, a fitness-dependent offspring population size of order $\lambda = \Theta(\sqrt{n/d(x)}\,)$ was suggested in~\cite{DoerrEbelTCS15}, where $d(x)$ is the fitness-distance of the parent individual to the optimum. This choice improves the optimization time (number of fitness evaluations until the optimum is found) on \onemax from $\Theta(n\sqrt{\log(n)\log\log\log(n)/\log\log (n)}\,)$ stemming from the optimal static parameter choice~\cite{Doerr16} to $O(n)$. Since in this adaptive algorithm the mutation rate $p$ is functionally dependent on the offspring population size, namely via $p = \lambda/n$, the dynamic choice of $\lambda$ is equivalent to a fitness-dependent mutation rate of $1/\sqrt{nd(x)}$. 
\end{sloppypar}

In the aforementioned work by Badkobeh et al.~\cite{BadkobehPPSN14}, a fitness-dependent mutation rate of $\max\bigl\{\frac{\ln\lambda}{n \ln(en/d(x))}, \frac 1n\bigr\}$ was shown to improve the classic runtime of $O\bigl(\frac{n \log\log\lambda}{\log\lambda}+\frac{n \log n}{\lambda} \bigr)$ to $O\bigl(\frac{n}{\log\lambda}+\frac{n \log n}{\lambda} \bigr)$. In~\cite{DoerrYangGECCO16}, the \ooea using a $k$-bit flip mutation operator together with a fitness-dependent choice of $k$ was shown to give a performance on \onemax that is very close to the theoretical optimum (among all unary unbiased black-box algorithms), however, this differs only by lower order terms from the performance of the simple randomized local search heuristic (RLS). For nature-inspired algorithms other than evolutionary ones, Zarges~\cite{Zarges08,Zarges09} proved that fitness-dependent mutation rates can be beneficial in artificial immune systems.

\subsubsection{Self-adjusting and Self-adaptive Parameter Control}

While all these results show an advantage of adaptive parameter settings, it remains questionable if an algorithm user would be able to find such a functional dependence of the parameter on the fitness. This difficulty can be overcome via \emph{self-adjusting} parameter choices, where the parameter is modified according to a simple rule often based on the success or progress of previous iterations, or via \emph{self-adaptation}, where the parameter is encoded in the genome and thus subject to variation and selection. The understanding of self-adaptation is still very limited. The only theoretical work on this topic~\cite{DangL16}, however, is promising and shows examples where self-adaptation can lead to significant speed-ups for non-elitist evolutionary algorithms.\footnote{Note added in proof: Very recently, it was shown in~\cite{DoerrWY18} that the \oclea with self-adaptive mutation rate has a runtime asymptotically equivalent to the one of the algorithm regarded in this work.}

In contrast to this, the last years have produced a profound understanding of self-adjusting parameter choices. The first to perform a mathematical analysis were \citet{LaessigFOGA11}, who considered the \oplea and a simple parallel island model together with two self-adjusting mechanisms for population size or island number, including halving or doubling it depending on whether the current iteration led to an improvement or not. These mechanisms were proven to give significant improvements of the ``parallel'' runtime (number of generations) on various test functions  without increasing significantly the ``sequential'' runtime (number of fitness evaluations).

In~\cite{DoerrD15} it was shown that the fitness-dependent choice of $\lambda$ for the $(1+(\lambda,\lambda))$~GA described above can also be found in a self-adjusting way. To this aim, another success-based mechanism was proposed, which imitates the $1/5$-th rule from evolution strategies. With some modifications, this mechanism also works on random satisfiability problems~\cite{BuzdalovD17}. For the problem of optimizing an $r$-valued \onemax function, a self-adjustment of the step size inspired by the $1/5$-th rule was found to find the asymptotically best possible runtime in~\cite{DoerrKoetzingPPSN16}. 

These results indicate that success-based dynamics work well for adjusting parameters when a monotonic relation like ``if progress is difficult, then increase the population size'' holds. For adjusting a parameter like the mutation rate, it is less obvious how to do this. For example, in the search space $\{0,1\}^n$ both a too large mutation rate (creating a stronger drift towards a Hamming distance of $n/2$ from the optimum)  and a too small mutation rate (giving a too small radius of exploration) can be detrimental. For this reason, to obtain a self-adjusting version of their result on the optimal number $k$ to optimize \onemax via $k$-bit flips~\cite{DoerrYangGECCO16}, in~\cite{DoerrYangPPSN16} a learning mechanism was proposed that from the medium-term past estimates the efficiency of different parameter values. As shown there, this does find the optimal mutation strength sufficiently well to obtain essentially the runtime stemming from the fitness-dependent mutation strength exhibited before. 

In the light of these works, our result from the methodological perspective shows that some of the difficulties of the learning mechanism of~\cite{DoerrYangPPSN16}, e.g., the whole book-keeping being part of it and also the setting of the parameters regulating how to discount information over time, can be overcome by the mechanism proposed in this work. In a sense, the use of larger populations enables us to adjust the mutation rate solely on information learned in the current iteration. However, we do also use the idea of~\cite{DoerrYangPPSN16} to intentionally use parameter settings which appear to be slightly off the current optimum to gain additional insight.

\subsubsection{Selection Hyper-Heuristics}

We note that so-called \emph{selection hyper-heuristics} may lead to processes resembling dynamic parameter choices. Selection hyper-heuristics are methods that select, during the run of the algorithm, which one out of several pre-specified simpler algorithmic building blocks to use. When the different pre-specified choices are essentially identical apart from an internal parameter, then this selection hyper-heuristic approach could also be interpreted as a dynamic choice of the internal parameter. For example, when only the two mutation operators are available that flip exactly one or exactly two bits, then a selection hyper-heuristic choosing between them could also be interpreted as the \emph{randomized local search} heuristic using a dynamic choice of the number of bits it flips. We brief{}ly review the main runtime analysis works of this flavor.

The first to conduct a rigorous runtime analysis for selection hyper-heuristics were Lehre and \"Oczan~\cite{LehreO13}. They show that the \ooea using the \emph{mixed strategy} of choosing the mutation operator randomly between the 1-bit flip operator (with probability $p$) and the $2$-bit flip operator (with probability $1-p$) optimizes the \onemax function in time 
$\min\{\frac 1p \, n (1+\ln n), \,\frac{1}{1-p} \, n^2 (1-\frac 1n)\}$. 
It appears to us that, most likely, this result is not absolutely correct, since, e.g., in the case $p=0$ the expected optimization time is clearly infinite (if the random initial search point has an odd Hamming distance from the optimum, then the optimum cannot be reached only via $2$-bit flips). So most likely, the upper bound should be increased by an additive term of $\frac 1p n$. 

Lehre and \"Oczan~\cite{LehreO13} further construct an example function \textsc{GapPath}, which has the property that it can only be optimized when both the $1$-bit and the $2$-bit flip mutation operator occur with positive probability and use this to discuss various static and dynamic ways to randomly decide between the two operators. A similar example for a multi-objective optimization setting was given by Qian, Tang, and Zhou~\cite{QianTZ16}.

Note that for $p=\frac 12$, the randomized selection heuristic is the classic \emph{randomized search heuristic} using $1$-bit and $2$-bit flips, which was regarded, among others, in~\cite{GielW03,NeumannW07}. 

In~\cite{AlanaziL14}, Alanazi and Lehre extent the previous work to several classic selection hyper-heuristics like \emph{simple random} (take a random low-level heuristic in each iteration), \emph{random gradient} (take a random low-level heuristic and repeat using it as long as a true fitness improvement is obtained), \emph{greedy} (in each iteration, use all low-level heuristic in parallel and continue with best result obtained), \emph{permutation} (generate initially a random cyclic order of the low-level heuristics and then use them in this order). Again for the choice between $1$-bit and $2$-bit flips, they prove upper and lower bounds for the expected optimization time on the \leadingones benchmark function. While the results are relatively tight (the corresponding upper and lower bounds deviate by at most a factor of $6$), the intervals of possible asymptotic runtimes intersect, so this first runtime analysis of this type does not yet give a conclusive picture on how the different selection heuristics compare.

Given that the probabilities to find a true improvement are very low in this discrete optimization problem, one would expect that all these four heuristics have the same runtime (apart from lower order terms), and this is indeed the first set of results by Lissovoi, Oliveto, and Warwicker~\cite{LissovoiOW17}, who show that the expected runtime in all cases is $(1+o(1)) \frac 12 \ln(3) n^2$. They build on this strong result by proposing to use a slower adaptation. For the random gradient method, they propose to use a random low-level heuristic for up to $\tau$ iterations. If an improvement is found, immediately another phase with this operator is started. If a phase of $\tau$ iterations does not see a fitness improvement, then a new phase is started with a random operator. 

For this \emph{generalized random gradient} heuristic using a phase length of $\tau=cn$ for a constant $c$, they show (still for the \leadingones problem and the $1$-bit and $2$-bit mutation operator) an expected runtime of $(1+o(1)) g(c) n^2$, where $g(c)$ is a constant depending on $c$ only that tends to $\frac{\ln(2)+1}{4}$ when $c$ is tending to infinity. Consequently, this new hyper-heuristic outperforms the previously investigated ones and approaches arbitrarily well the best-possible performance that can be obtained from the two mutation operators, which is, as also shown in~\cite{LissovoiOW17}, $(1+o(1)) \frac{\ln(2)+1}{4} n^2$.

The \emph{generalized random gradient} heuristic was further extended in~\cite{DoerrLOW18}. There an operator was defined as successful (which leads to another phase using this operator) if it leads to $\sigma$ improvements in a phase of at most $\tau$ iterations. Hence in this language, the previous \emph{generalized random gradient} heuristic uses $\sigma=1$. By using a larger value of $\sigma$, the algorithm is able to take more robust decisions on what is a success. This was used in~\cite{DoerrLOW18} to determine the phase length $\tau$ in a self-adjusting manner. While the previous work~\cite{LissovoiOW17} does not state this explicitly, the choice of $\tau$ is crucial. A $\tau$-value of smaller asymptotic order than $\Theta(n)$ leads to typically no improvement within a phase and thus reverts the algorithms to the simple random heuristic. A $\tau$-value of more than $c n \ln(n)$, where $c$ is a suitable constant, results in that both operators are successful in most parts of the search space. Consequently, the algorithm sticks to the first choice for a large majority of the optimization process and thus does not profit from the availability of both operators. 

Since the choice of $\tau$ is that critical, a mechanism successfully adjusting it to the right value is  desirable. In~\cite{DoerrLOW18} it is shown that by choosing $\sigma \in \Omega(\log^4 n) \cap o(\sqrt{n / \log n})$---note that this is a quite wide range---the value of $\tau$ can be easily adjusted on the fly via a multiplicative update rule resembling vaguely the $1/5$-th success rule. This gives again the asymptotically optimal runtime of  $(1+o(1)) \frac{\ln(2)+1}{4} n^2$.

\section{Preliminaries}
\label{sec:preliminaries}
We shall now formally define the algorithm analyzed and present some fundamental 
tools for the analysis.

\subsection{Algorithm}
We consider the \oplea with self-adjusting mutation rate for the minimization
of pseudo-boolean functions $f:\{0,1\}^n\to\R$,
defined as Algorithm~\ref{alg:onelambda}.

The general idea of the mutation scheme is to adjust the mutation strength
according to its success in the population.
We perform mutation by applying standard bit mutation
with two different mutation probabilities $r/(2n)$ and $2r/n$
and we call $r$ the \emph{mutation rate}.
More precisely, for an even number $\lambda\geq 2$
the algorithm creates $\lambda/2$ offspring with mutation rate $r/2$ and with $2r$ each.

The mutation rate is adjusted after each selection. With probability a half, the new rate is taken as the mutation rate
that the best individual (\ie the one with the lowest fitness, ties broken uniformly at random) was created with (\emph{success-based adjustment}). With the other 50\% probability, the mutation rate is adjusted to a random value in $\{r/2, 2r\}$ (\emph{random adjustment}). 
Note that the mutation rate is adjusted in each iteration, that is, also when all offspring are worse than the parent and thus the parent is kept for the next iteration. 

If an adjustment of the rate results in a new rate $r$ outside the interval $[2,n/4]$, we replace this rate with the corresponding boundary value. Note that in the case of $r<2$, a subpopulation with rate less than $1$ would be generated, which means flipping less than one bit in expectation.
At a rate $r > n/4$, a subpopulation with rate larger than $n/2$ would be created, which again is not a very useful choice.

We formulate the algorithm to start with an initial mutation rate $r^{\init}$.
The only assumption on $r^{\init}$ is to be greater than or equal to $2$. 
The \oplea with this self-adjusting choice of the mutation rate is given as pseudocode in Algorithm~\ref{alg:onelambda}.

\begin{algorithm}
\caption{\oplea with two-rate standard bit mutation}
\label{alg:onelambda}
	\begin{algorithmic}
		\State Select $x$ uniformly at random from $\{0, 1\}^n$ and set $r \gets r^{\init}$.
		\For{$t \gets 1, 2, \dots$}
			\For{$i \gets 1, \dots, \lambda$}
        \State Create $x_i$ by flipping each bit in a copy of $x$ independently with probability $r_t/(2n)$ if $i\leq\lambda/2$ and with probability $2r_t/n$ otherwise.
			\EndFor
			\State $x^* \gets \arg\min_{x_i} f(x_i)$ (breaking ties randomly).
			\If{$f(x^*) \leq f(x)$}
				\State $x \gets x^*$.
			\EndIf
      \State Perform one of the following two actions with prob.~$1/2$:
      \begin{itemize}
      {\setlength\itemindent{10pt}}\item Replace $r_t$ with the mutation rate that $x^*$ has been created with.
      {\setlength\itemindent{10pt}}\item Replace $r_t$ with either $r_t/2$ or $2r_t$, each with probability~$1/2$.
      \end{itemize}
      \State Replace $r_t$ with $\min\{\max\{2, r_t\}, n/4\}$.
		\EndFor
	\end{algorithmic}
\end{algorithm}


Let us explain the motivation for the random adjustments of the rate. Without such random adjustments, the rate can only be changed into some direction if a winning offspring is generated with this rate. For simple functions like \onemax, this is most likely sufficient. However, when the fitness of the best of $\lambda/2$ offspring, viewed as a function of the rate, is not unimodal, then several adjustments into a direction at first not yielding good offspring might be needed to reach good values of the rate. Here, our random adjustments enable the algorithm to cross such a valley of unfavorable rate values. We note that such ideas are not uncommon in evolutionary computation, with standard-bit mutation being the most prominent example (allowing to perform several local-search steps in one iteration to cross fitness valleys). 

A different way to implement a mechanism allowing larger changes of the rate to cross unfavorable regions would have been to not only generate offspring with rates $r/2$ and $2r$, but to allow larger deviations from the current rate with some small probability. One idea could be choosing for each offspring independently the rate $r 2^{-i}$ with probability $2^{-|i|-1}$ for all $i \in \Z$, $i \neq 0$. This should give similar results, but to us the process appears more chaotic (e.g., with not the same number of individuals produced with rates $r/2$ and $2r$). 


The \emph{runtime}, also called the \emph{optimization time}, of the \oplea
is the smallest~$t$ such that an individual of minimum $f$\nobreakdash-value
has been found.
Note that $t$ corresponds to a number of iterations (also called generations),
where each generation creates $\lambda$ offspring.
Since each of these offspring has to be evaluated,
the number of function evaluations, which is a classical cost measure,
is by a factor of $\lambda$ larger than the runtime as defined here.
However, assuming a massively parallel architecture
that allows for parallel evaluation of the offspring,
counting the number of generations seems also a valid cost measure.
In particular, a speed-up on the function $\om(x_1,\dots,x_n):=x_1+\dots+x_n$ by increasing~$\lambda$ can only
be observed in terms of the number of generations.
Note that for reasons of symmetry, it makes no difference
whether \om is minimized (as in the present paper)
or maximized (as in several previous research papers). 

Throughout the paper, all asymptotic notation
will be with respect to the problem size~$n$. 

\subsection{Drift Theorems}
Our results are obtained by drift analysis,
which is also used in previous 
analyses of the \oplea without self-adaptation 
on \om and other linear functions \cite{DoerrKuennemannTCS15,GiessenWittALGO16}.

The first theorems stating upper bounds on the hitting time
using variable drift go back to \cite{Johannsen10,MitavskiyVariable}.
We take a formulation from \cite{LehreWittISAAC14} but simplify it 
to Markov processes for notational convenience.

\begin{theorem}[Variable Drift, Upper Bound]
\label{theo:variable-upper}
Let $(X_t)_{t\ge 0}$, be random variables
describing a Markov process over a finite state space
 $S\subseteq \{0\}\cup [\xmin,\xmax]$, where $\xmin>0$.
Let $T$ be the random variable that denotes the earliest point in time $t \geq 0$
such that $X_t = 0$. 
If there exists a monotone increasing function $h(x)\colon [\xmin,\xmax]\to\R^+$, where   
$1/h(x)$ is integrable on $[\xmin,\xmax]$, such that  for all $x\in S$ with 
$\Prob(X_t=x)>0$  we have 
\[
E(X_t-X_{t+1} \mid X_t=x) \ge h(x)
\]
 then for all $x'\in S$ with $\Prob(X_0=x')>0$
\begin{equation*}
  E(T\mid X_0=x') \le
\frac{\xmin}{h(\xmin)} + \int_{\xmin}^{x'} \frac{1}{h(x)} \,\mathrm{d}x.
\end{equation*}
\end{theorem}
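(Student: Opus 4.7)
The plan is to reduce the variable drift bound to the classical additive drift theorem by working with a suitably transformed potential, so that an integrable reciprocal drift becomes a unit drift in the new coordinates. Concretely, I would define $g\colon S\to\R$ by $g(0):=0$ and
\[
g(x) \;:=\; \frac{\xmin}{h(\xmin)} + \int_{\xmin}^{x}\frac{1}{h(y)}\,\mathrm{d}y \qquad (x\in[\xmin,\xmax]),
\]
which is well-defined and finite because $1/h$ is integrable on $[\xmin,\xmax]$, non-negative because $h>0$, and monotone non-decreasing in $x$. In particular $g(x')$ equals the right-hand side of the claimed bound, so once unit additive drift has been verified for $(g(X_t))_{t\ge 0}$ the theorem follows immediately.

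The central calculation is to establish the pointwise drift estimate $E(g(X_t)-g(X_{t+1})\mid X_t=x)\ge 1$ for every $x\in S$ with $x\ge\xmin$. To fold the boundary term $\xmin/h(\xmin)$ naturally into the integral and to avoid a case distinction for $X_{t+1}=0$, I would extend $h$ to a non-decreasing function $\tilde h\colon[0,\xmax]\to\R^+$ by $\tilde h(y):=h(y)$ on $[\xmin,\xmax]$ and $\tilde h(y):=h(\xmin)$ on $[0,\xmin)$, so that $g(x)=\int_0^x 1/\tilde h(y)\,\mathrm{d}y$ holds uniformly on $S$. For any realisation of $X_{t+1}\in S$ this gives
\[
g(x)-g(X_{t+1}) \;=\; \int_{X_{t+1}}^{x}\frac{1}{\tilde h(y)}\,\mathrm{d}y \;\ge\; \frac{x-X_{t+1}}{h(x)},
\]
because $\tilde h$ is non-decreasing and bounded above by $h(x)$ on the integration range (if $X_{t+1}>x$, the inequality is between non-positive quantities and follows from the same monotonicity). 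Taking conditional expectations and invoking the hypothesis then yields
\[
E\!\left(g(X_t)-g(X_{t+1})\,\middle|\,X_t=x\right)\;\ge\;\frac{E(X_t-X_{t+1}\mid X_t=x)}{h(x)}\;\ge\; 1.
\]

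With unit additive drift of the non-negative potential $g$ on the transformed process, the classical additive drift theorem of He and Yao delivers $E(T\mid X_0=x')\le g(x')$, which is exactly the claimed bound. The main obstacle I expect is the handling of transitions into $X_{t+1}=0$, where the process ``skips over'' the gap $(0,\xmin)$ missing from $S$: this is precisely why the constant offset $\xmin/h(\xmin)$ is added at $\xmin$, and the non-decreasing extension $\tilde h$ is the cleanest bookkeeping device that makes a single uniform integral inequality cover both the continuous descent within $[\xmin,\xmax]$ and the jump to the absorbing state. A secondary technicality is confirming that additive drift applies in the stated generality, but boundedness of $g$ on $S$ is immediate from integrability of $1/h$, so no extra tail assumption on the step sizes is required.
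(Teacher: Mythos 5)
Your proposal is correct, and it coincides with the canonical proof of this result: the paper itself imports the theorem without proof from the cited literature (Lehre--Witt, going back to Johannsen), where the argument is exactly your reduction---define the potential $g(x)=\frac{\xmin}{h(\xmin)}+\int_{\xmin}^{x}\frac{1}{h(y)}\,\mathrm{d}y$ with $g(0)=0$, use monotonicity of the (extended) $h$ to establish drift at least $1$ for $g(X_t)$, and conclude via additive drift. Your handling of the two delicate points, the overshoot case $X_{t+1}>x$ and the jump over the gap $(0,\xmin)$ via the constant extension $\tilde h$, is sound, so there is nothing to fix.
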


The variable drift theorem is often applied in the special case of \emph{additive drift} in 
discrete spaces: 
assuming $E(X_t-X_{t+1} \mid X_t=x; X_t>0) \ge \eps$ for some constant~$\eps$,  
one obtains $E(T\mid X_0=x')\le x'/\eps$.

Since we will make frequent use of it in the following sections as well,
we will also give the version of the \emph{Multiplicative Drift Theorem} for upper bounds,
due to \cite{DoerrJohannsenWinzenALGO12}. Again, 
 this is  implied by the previous variable drift theorem.

\begin{theorem}[Multiplicative Drift~\cite{DoerrJohannsenWinzenALGO12}]
\label{thm:multiDrift}
Let $(X_t)_{t\geq 0}$ be random variables
describing a Markov process over a finite state space $S \subseteq \mathbb{R}^+_0$
and let $\xmin := \min\{x \in S \; | \; x > 0\}$.
Let $T$ be the random variable that denotes the earliest point in time $t \geq 0$
such that $X_t = 0$.
If there exist $\delta > 0$ such that for all $x\in S$ with $\Prob(X_t=x)>0$ we have
\begin{equation*}
  E(X_t-X_{t+1} \mid X_t = x) \geq \delta x\enspace,
\end{equation*}
then for all $x'\in S$ with $\Prob(X_0=x')>0$,
\begin{equation*}
  E(T \mid X_0=x')\leq \frac{1+\ln\left(\frac{x'}{\xmin}\right)}{\delta}\enspace.
\end{equation*}
\end{theorem}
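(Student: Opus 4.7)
The natural plan is to derive Theorem~\ref{thm:multiDrift} as a direct specialization of the Variable Drift Theorem (Theorem~\ref{theo:variable-upper}), as the paper itself already hints. I would choose the drift function $h(x) := \delta x$ on the interval $[\xmin,\xmax]$, where $\xmax := \max S$ is well defined and finite because $S$ is a finite state space. On this interval $h$ is strictly positive (since $\delta>0$ and $\xmin>0$) and monotone increasing, while its reciprocal $1/h(x)=1/(\delta x)$ is continuous and bounded on $[\xmin,\xmax]$, hence Riemann-integrable there. Consequently, the multiplicative drift hypothesis $E(X_t-X_{t+1}\mid X_t=x)\ge \delta x$ is exactly the hypothesis $E(X_t-X_{t+1}\mid X_t=x)\ge h(x)$ required by Theorem~\ref{theo:variable-upper}, and all side conditions (monotonicity, positivity, integrability of $1/h$) are in place.

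Having verified the assumptions, the rest is a short computation. The conclusion of Theorem~\ref{theo:variable-upper} gives
\[E(T\mid X_0=x')\;\le\; \frac{\xmin}{h(\xmin)} \;+\; \int_{\xmin}^{x'}\frac{1}{h(x)}\,\mathrm{d}x \;=\; \frac{\xmin}{\delta\,\xmin} \;+\; \int_{\xmin}^{x'}\frac{1}{\delta x}\,\mathrm{d}x \;=\; \frac{1}{\delta}\bigl(1 + \ln(x'/\xmin)\bigr),\]
where the last step uses the elementary identity $\int_{\xmin}^{x'}\frac{1}{x}\,\mathrm{d}x = \ln x' - \ln\xmin = \ln(x'/\xmin)$. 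This is exactly the claimed bound.

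There is essentially no serious obstacle here; the substantive content is the choice of the right drift function $h(x)=\delta x$, for which the variable-drift integral has a clean closed form yielding the logarithmic dependence on $x'/\xmin$. The only fine point worth checking is that the finiteness of $S$ guarantees $\xmax<\infty$, so that $h$ is defined on a compact interval and all hypotheses of Theorem~\ref{theo:variable-upper} are literally satisfied; once this is noted, the argument reduces to one application of that theorem followed by evaluation of an elementary logarithmic integral.
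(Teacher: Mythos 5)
Your derivation is correct and takes exactly the route the paper intends: the paper gives no separate proof but remarks that Theorem~\ref{thm:multiDrift} ``is implied by the previous variable drift theorem,'' and your instantiation $h(x)=\delta x$ in Theorem~\ref{theo:variable-upper}, together with the evaluation $\frac{\xmin}{\delta\xmin}+\int_{\xmin}^{x'}\frac{1}{\delta x}\,\mathrm{d}x=\frac{1+\ln(x'/\xmin)}{\delta}$, is precisely that derivation spelled out, with the side conditions (monotonicity and positivity of $h$, integrability of $1/h$ on $[\xmin,\xmax]$ with $\xmax=\max S$ finite) correctly checked.
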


\subsection{Chernoff Bounds}


%
%
For reasons of self-containedness,
we state two well-known multiplicative Chernoff bounds
and a lesser known additive Chernoff bound
that is also known in the literature as Bernstein's inequality. These bounds can be found, e.g., in~\cite{AugerDoerr11}.

\begin{theorem}[Bernstein's inequality, Chernoff bounds]\label{thm:chernoff}
	Let $X_1, \ldots, X_n$ be independent random variables and $X = \sum_{i = 1}^n X_i$. 
	\begin{enumerate}
	\item \label{bernstein}Let $b$ be such that $E(X_i) - b \le X_i \le E(X_i)+b$ for all $i = 1, \ldots, n$. Let $\sigma^2 = \sum_{i=1}^n \Var(X_i) = \Var[X]$. Then for all $\Delta \ge 0$,
	\begin{align*}
	\Pr(X \ge E(X) + \Delta) &\le 	\exp\bigg(-\frac{\Delta^2}{2(\sigma^2+\frac 13 b \Delta)}\bigg).
	\end{align*}
  \item Assume that for all $i=1,\dots,n$, the random variable $X_i$ takes values in $[0,1]$ only. Then
	\begin{itemize}
		\item $\Prob(X\leq (1-\delta)E(X))\leq\exp(-\delta^2E(X)/2)$ for all $0\le \delta\le 1$,
		\item $\Prob(X\geq (1+\delta)E(X))\leq\exp(-\delta^2E(X)/(2+\delta))$ for all $\delta>0$.
	\end{itemize}
  \end{enumerate}
\end{theorem}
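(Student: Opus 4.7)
The theorem collects three standard concentration inequalities, all derivable by the same Chernoff--Cram\'er scheme: apply Markov's inequality to $e^{tX}$ for a free parameter $t>0$, exploit independence to factor the moment generating function (MGF), bound each factor by an exponential, and then optimize $t$. I would treat the multiplicative bounds first, since their MGF estimate is simpler, and then give the Bernstein bound.

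For the two multiplicative bounds ($X_i \in [0,1]$), the key observation is that $e^{tx}$ is convex, so for $x \in [0,1]$ one has $e^{tx}\le 1-x+xe^t = 1 + x(e^t-1)$. Taking expectations and using $1+y\le e^y$ gives $E(e^{tX_i}) \le \exp(p_i(e^t-1))$ with $p_i=E(X_i)$, and by independence $E(e^{tX}) \le \exp(\mu(e^t-1))$ for $\mu=E(X)$. For the upper tail, choosing $t=\ln(1+\delta)>0$ and applying Markov yields
\[
\Prob(X\ge (1+\delta)\mu) \le \exp\bigl(\mu(\delta - (1+\delta)\ln(1+\delta))\bigr),
\]
and the analytic inequality $(1+\delta)\ln(1+\delta)-\delta \ge \delta^2/(2+\delta)$, verified by showing that the difference vanishes at $\delta=0$ and has nonnegative derivative, produces the stated bound. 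The lower tail follows symmetrically by applying the same MGF estimate to $-X$ with $t=-\ln(1-\delta)>0$ and invoking $(1-\delta)\ln(1-\delta)+\delta \ge \delta^2/2$, again a routine calculus check (this time one even has a convergent Taylor series with only positive terms).

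For Bernstein's inequality, I would center the summands via $Y_i := X_i - E(X_i)$, so $E(Y_i)=0$, $|Y_i|\le b$ and $\sum_i \Var(Y_i) = \sigma^2$. The crucial MGF bound comes from a Taylor expansion using $|Y_i|^k \le b^{k-2} Y_i^2$ for $k\ge 2$:
\[
E(e^{tY_i}) \le 1 + \sum_{k\ge 2}\frac{t^k\Var(Y_i)\,b^{k-2}}{k!} = 1 + \frac{\Var(Y_i)}{b^2}\bigl(e^{tb}-1-tb\bigr).
\]
Combining this with the elementary inequality $e^u-1-u \le (u^2/2)/(1-u/3)$, valid for $0\le u<3$, gives $E(e^{tY_i}) \le \exp\bigl((\Var(Y_i)\,t^2/2)/(1-tb/3)\bigr)$ whenever $0<tb<3$. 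Independence propagates the bound to $E(e^{t(X-E(X))}) \le \exp\bigl((\sigma^2 t^2/2)/(1-tb/3)\bigr)$, and Markov's inequality produces
\[
\Prob(X\ge E(X)+\Delta) \le \exp\!\left(-t\Delta + \frac{\sigma^2 t^2/2}{1-tb/3}\right).
\]
Plugging in $t = \Delta/(\sigma^2+b\Delta/3)$, which lies in the admissible range (checking $tb<3$ is immediate), the exponent simplifies algebraically to $-\Delta^2/(2(\sigma^2+b\Delta/3))$, yielding exactly the claimed bound.

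The entire argument is textbook material, so there is no genuine obstacle beyond algebraic bookkeeping. The two places that require modest care are the verification of $e^u-1-u \le (u^2/2)/(1-u/3)$ on $[0,3)$ (most transparently done by cross-multiplying and comparing Taylor coefficients) and the sharpening of the exponent $\mu(\delta-(1+\delta)\ln(1+\delta))$ to the more convenient form $-\mu\delta^2/(2+\delta)$. Since all three inequalities and these auxiliary steps appear verbatim in standard references, the paper's choice to only cite \cite{AugerDoerr11} rather than reproduce the computations is fully justified.
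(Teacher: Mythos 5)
Your proposal is correct in every detail: the convexity bound $e^{tx}\le 1+x(e^t-1)$ for the multiplicative tails, the auxiliary inequalities $(1+\delta)\ln(1+\delta)-\delta\ge\delta^2/(2+\delta)$ and $(1-\delta)\ln(1-\delta)+\delta\ge\delta^2/2$, and the Bernstein computation with $|Y_i|^k\le b^{k-2}Y_i^2$, the estimate $e^u-1-u\le (u^2/2)/(1-u/3)$ on $[0,3)$, and the choice $t=\Delta/(\sigma^2+b\Delta/3)$ all verify. The paper offers no proof of this theorem at all, citing only~\cite{AugerDoerr11}, and your argument is exactly the standard Chernoff--Cram\'er derivation found in such references, so the two are in complete agreement.
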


\subsection{Occupation Probabilities}

As mentioned above, we will be analyzing two depending stochastic processes: the 
random decrease of fitness and the random change of the mutation rate. Often, we will 
prove by drift analysis that the rate is drifting towards values that yield an 
almost-optimal fitness decrease. However, once the rate has drifted towards 
such values, we would also like the rates to stay in the vicinity of these 
values in subsequent steps. 
To this end, we apply the following theorem from \cite{KotzingLissWittFOGA15}.
Note that in the paper a slightly more general 
version including a self-loop probability is stated,
which we do not need here.

\begin{theorem}[Theorem~7 in \cite{KotzingLissWittFOGA15}]\label{lem:occupationLemma-cw}
Let a Markov process $(X_t)_{t \geq 0}$ on $\R_0^+$, where 
 $\lvert X_t - X_{t+1}\rvert \leq c$, 
with additive drift of at least $d$ towards $0$ be given (\ie, $E(X_t-X_{t+1}\mid X_t;X_t>0)\ge d$), 
starting at $0$ (i.e.\ $X_0=0$). 
 Then we have, for all $t\in \N$ and $b \in \R_0^+$,
\[
\Pr(X_t \ge b) \leq 
2 e^{\frac{2d}{3c}(1-b/c)}.
\]
\end{theorem}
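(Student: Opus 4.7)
The plan is to use an exponential-moment argument in the style of Hajek. I would define the potential $Y_t := e^{\alpha X_t}$ for a suitably chosen $\alpha > 0$, show that $Y_t$ behaves as a supermartingale whenever $X_t > 0$, and then apply Markov's inequality in the form $\Pr(X_t \ge b) = \Pr(Y_t \ge e^{\alpha b}) \le e^{-\alpha b} E(Y_t)$.

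The core computation is a one-step bound on the conditional exponential moment. Let $D_t := X_{t+1} - X_t$, so $|D_t| \le c$ and $E(D_t \mid X_t = x) \le -d$ for $x > 0$. Using the inequality $e^y \le 1 + y + y^2$, valid for $|y| \le 1$, I obtain for $x > 0$,
\[
E(e^{\alpha D_t} \mid X_t = x) \le 1 + \alpha E(D_t \mid X_t = x) + \alpha^2 E(D_t^2 \mid X_t = x) \le 1 - \alpha d + \alpha^2 c^2,
\]
provided $\alpha c \le 1$. The choice $\alpha := \frac{2d}{3c^2}$ satisfies $\alpha c = \frac{2d}{3c} \le \frac{2}{3}$ (we may assume $d \le c$, since otherwise the claim is trivial) and yields $E(e^{\alpha D_t} \mid X_t = x) \le 1 - \frac{2d^2}{9c^2} \le 1$. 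Hence $E(Y_{t+1} \mid X_t = x) \le Y_t$ whenever $X_t > 0$.

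To handle excursions from the boundary, I would decompose the trajectory by the last time $\tau \le t$ at which $X_\tau \le c$; such a time exists since $X_0 = 0$. Conditional on $\tau$, the supermartingale property holds throughout the interval $(\tau, t]$ (where $X_s > c > 0$), and Markov's inequality then gives
\[
\Pr(X_t \ge b \mid \tau) \le e^{-\alpha b}\, E(Y_t \mid X_\tau) \le e^{\alpha(c-b)} = e^{\frac{2d}{3c}(1 - b/c)}.
\]
Summing over $\tau$ via the law of total probability preserves this bound, matching the claim up to the leading factor of $2$, which I would attribute to a slack absorbing boundary effects.

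The main obstacle lies precisely at $X_t = 0$, where the drift hypothesis is silent and $Y_t = 1$ can grow in expectation up to $e^{\alpha c}$, breaking a plain supermartingale argument. Making the "last return to the boundary" decomposition fully rigorous—uniformly in $t$ and in the stopping time $\tau$, and with the initial condition $X_0 = 0$—is the technical heart of the result and is exactly what Hajek's classical occupation-time lemma formalizes. Adapting Hajek's proof to this discrete-time, single-step-bounded setting with the specific constants $d$ and $c$ is where most of the effort would go, and it is also what I would expect to produce the extra factor of $2$ in the final statement.
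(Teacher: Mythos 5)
Your overall strategy---an exponential potential $e^{\alpha X_t}$, a one-step moment bound, and Markov's inequality---is the right family of argument; note, though, that the paper itself contains no proof of this statement (it is imported, in simplified form, from the cited reference \cite{KotzingLissWittFOGA15}, which argues in exactly this Hajek-style framework), so your sketch must be measured against that reference. Your one-step computation is correct as far as it goes: $d \le c$ is indeed automatic from the step bound, $\alpha c = 2d/(3c) \le 2/3$, and $E(e^{\alpha D_t} \mid X_t = x) \le 1 - \alpha d + \alpha^2 c^2 = 1 - \tfrac{2d^2}{9c^2} =: \rho < 1$ for $x>0$. The genuine gap lies in the decomposition step. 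The last visit time $\tau = \max\{s \le t : X_s \le c\}$ is not a stopping time: the event $\{\tau = s\}$ encodes the \emph{future} information that $X_u > c$ for all $u \in (s,t]$, and conditional on this event the one-step bound $E(e^{\alpha D_u} \mid \mathcal{F}_u) \le \rho$ need not hold, since the conditioning tilts the transition kernel upward. So the displayed inequality $\Pr(X_t \ge b \mid \tau) \le e^{\alpha(c-b)}$ is not justified. The standard repair is to bound the \emph{joint} probabilities $\Pr(\tau = s, X_t \ge b)$ via the process stopped upon return to $[0,c]$, and then sum over $s = 0,\dots,t-1$; uniformity in $t$ is then rescued only by the contraction $\rho < 1$, and the geometric sum contributes a factor of order $\frac{1}{1-\rho} = \frac{9c^2}{2d^2}$. (A smaller slip of the same kind: even granting your conditioning, the supermartingale can only be started at time $\tau+1$, where $X_{\tau+1} \le 2c$, giving $e^{\alpha(2c-b)}$ rather than $e^{\alpha(c-b)}$.)

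Consequently, with your $\alpha = \frac{2d}{3c^2}$ and the bound $e^y \le 1+y+y^2$, the argument---once made rigorous---yields a prefactor of order $(c/d)^2$ (order $c/d$ with sharper bookkeeping), not the constant $2$. This matters: the theorem's bound is non-trivial precisely when $b > c + \frac{3c^2 \ln 2}{2d}$, and in the regime $d \ll c$ there is a whole window of $b$-values, up to order $\frac{c^2}{d}\log\frac{c}{d}$, in which a bound with prefactor $\mathrm{poly}(c/d)$ does not imply the stated one, and which also cannot be covered by trading the prefactor for a larger exponent. So the factor of $2$ is not, as you write, ``slack absorbing boundary effects''; obtaining the explicit constant $2$ together with the exponent $\frac{2d}{3c}(1 - b/c)$ is exactly the non-trivial content of Theorem~7 of \cite{KotzingLissWittFOGA15}, and your final paragraph defers precisely this point to ``Hajek's classical occupation-time lemma''---that is, it assumes the result rather than proving it. In summary: correct method family and correct one-step estimate, but the conditioning step fails as stated, and the constants do not come out of the repaired argument without further ideas.
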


We can readily apply this theorem in the following lemma that will be used throughout the paper to bound the rate~$r_t$.

\begin{lemma}
	\label{lem:occupation-rt}
	If there is a point $a\ge 4$ such that $\Prob(r_{t+1}<r_t\mid r_t>a)\ge 1/2+\eps$ for some constant~$\eps>0$, then 
	for all $t'\ge \min\{t\mid r_t\le a\}$ and all $b\ge 2$ it holds 
	$\Prob(r_{t'}\ge a\cdot 2^{b+1})\le 2e^{-2b\eps/3}$.
\end{lemma}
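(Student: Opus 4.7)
The plan is to apply the occupation-time theorem \ref{lem:occupationLemma-cw} to a logarithmic rescaling of the rate sequence. Let $t_0 := \min\{t \mid r_t \le a\}$ and, for $s \ge 0$, define
\[
Y_s \;:=\; \max\!\bigl\{0,\;\log_2(r_{t_0+s}/a)\bigr\}.
\]
By the strong Markov property, $(r_{t_0+s})_{s \ge 0}$ is still a Markov chain; by definition of $t_0$ we have $Y_0 = 0$; and because Algorithm~\ref{alg:onelambda} modifies the rate only by a factor of $1/2$ or $2$ (with a possible clipping at $n/4$ that can only push $Y$ towards $0$), the increments satisfy $|Y_{s+1} - Y_s| \le 1$, which gives $c = 1$ in the notation of Theorem~\ref{lem:occupationLemma-cw}. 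The assumption $a \ge 4$ ensures that the lower cap at $2$ is never triggered while $r_{t_0+s} > a$.

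Next I would verify the additive drift. When $Y_s > 0$ we have $r_{t_0+s} > a$, so the hypothesis of the lemma gives $\Prob(r_{t_0+s+1} < r_{t_0+s}) \ge 1/2 + \eps$. Since the only way for the rate to strictly decrease is to be halved, this means $Y_{s+1} = Y_s - 1$ with probability at least $1/2 + \eps$, while $Y_{s+1} \le Y_s + 1$ otherwise (doubling, or staying put at the upper cap). Hence
\[
E(Y_s - Y_{s+1} \mid Y_s = y > 0) \;\ge\; (1/2 + \eps)\cdot 1 - (1/2 - \eps)\cdot 1 \;=\; 2\eps \;=:\; d.
\]
For $b \ge 2$ the event $\{r_{t'} \ge a \cdot 2^{b+1}\}$ is the event $\{Y_{t'-t_0} \ge b+1\}$, so Theorem~\ref{lem:occupationLemma-cw} with $c=1$ and $d=2\eps$ yields
\[
\Prob(r_{t'} \ge a\cdot 2^{b+1}) \;\le\; 2\exp\!\Bigl(\tfrac{2\cdot 2\eps}{3}\bigl(1-(b+1)\bigr)\Bigr) \;=\; 2 e^{-4b\eps/3} \;\le\; 2 e^{-2b\eps/3},
\]
which is the claimed bound (in fact slightly stronger).

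The only mildly delicate point I anticipate is the bookkeeping around the stopping time $t_0$ and the ``starts at $0$'' hypothesis of Theorem~\ref{lem:occupationLemma-cw}: one has to observe that the shifted chain is still Markov, that $Y_s$ is a bounded-increment functional of this chain, and that both the drift and the increment bounds are state-wise and therefore survive the restart at $t_0$. Once this is spelled out, everything else is a routine translation of the multiplicative update rule for $r_t$ onto the $\log_2$-scale.
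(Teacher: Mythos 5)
Your overall strategy --- rescale to a $\log_2$ scale, restart the chain at the stopping time $t_0$, and apply Theorem~\ref{lem:occupationLemma-cw} with $c=1$ and $d=2\eps$ --- is exactly the paper's route, but your drift verification has a genuine gap caused by dropping the floor. The paper applies the occupation theorem to $X_t := \max\{0, \lfloor \log_2(r_t/a)\rfloor\}$, not to $\max\{0,\log_2(r_t/a)\}$. The difference matters: the rates reachable by Algorithm~\ref{alg:onelambda} live on lattices of the form $c\cdot 2^j$ (with $c\in\{r^{\init},2,n/4\}$ depending on which cap was last hit), while $a$ is an arbitrary point $\ge 4$, so $\log_2(r_t/a)$ is in general not an integer and your process $Y_s$ can occupy states in the open interval $(0,1)$ (e.g., $a=5$, $r_t=8$ gives $Y_s\approx 0.68$). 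At such a state, halving the rate decreases $Y$ only by $Y_s<1$ because of the truncation at $0$, while doubling still increases it by a full $1$; hence your claimed bound $E(Y_s-Y_{s+1}\mid Y_s=y>0)\ge (1/2+\eps)-(1/2-\eps)=2\eps$ fails there --- for $y$ close to $0$ the drift is roughly $(1/2+\eps)y-(1/2-\eps)$, which is negative. Since Theorem~\ref{lem:occupationLemma-cw} requires the additive drift at \emph{every} positive state, the application is not justified, and the ``slightly stronger'' bound $2e^{-4b\eps/3}$ you obtain is an artifact of this invalid step.

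The fix is precisely the floor, which is what the paper does: with $X_t=\max\{0,\lfloor\log_2(r_t/a)\rfloor\}$, whenever $X_t=x>0$ we have $r_t\ge 2a>a$, a halving moves the state down by exactly $1$, and a doubling moves it up by at most $1$ (the cap at $n/4$ only reduces upward movement), so the drift $\ge 2\eps$ holds at all positive states. The price is one unit of slack in the event translation: $r_{t'}\ge a\cdot 2^{b+1}$ now only yields $X_{t'}\ge\lfloor b+1\rfloor\ge b$ rather than $\ge b+1$; the theorem then gives $2e^{\frac{4\eps}{3}(1-b)}$, and the estimate $1-b\le -b/2$, valid for $b\ge 2$ (this is why the lemma assumes it), produces the stated bound $2e^{-2b\eps/3}$. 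Your remaining observations --- that $a\ge 4$ prevents the lower cap at $2$ from interfering while $r>a$, and that restarting at the stopping time is harmless because the drift and increment bounds are state-wise --- are correct and match the paper's (tacit) reasoning.
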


\begin{proof}
	Apply Theorem~\ref{lem:occupationLemma-cw} to the process $X_{t}:=\max\{0,\lfloor\log_2(r_t/a)\rfloor\}$.  
	Note that this process 
	is on $\N_0$, moves by an absolute value of at most $1$ and has drift $E(X_{t}-X_{t+1}\mid X_t;X_t>0)\ge 2\eps$. 
	We use  $c:=1$ and $d:=2\eps$ in the theorem and estimate $1-b\le -b/2$.
\end{proof}


%

\subsection{Useful Inequalities}

The following well-known estimates will be used regularly in this work. 

\begin{lemma}\label{lem:elem}
\begin{enumerate}
\item\label{it:elemlnx} For all $x > 0$, we have $\ln x \le x/e$.
\item\label{it:elemexp} For all $x \in \R$, we have  $1+x \le e^x$.
\item\label{it:elemexplow} For all $0\le x\le 2/3$, we have 
$1-x\ge e^{-x-x^2} \ge e^{-2x}$.
\end{enumerate}
\end{lemma}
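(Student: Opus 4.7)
The plan is to prove the three inequalities separately, each by elementary calculus, since they are classical standalone facts rather than steps of a larger argument.

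For part (a), I would define $f(x) = x/e - \ln x$ on $(0,\infty)$ and compute $f'(x) = 1/e - 1/x$, which vanishes only at $x=e$ and is negative for $x<e$, positive for $x>e$. So $x=e$ is the unique minimum and $f(e) = 1 - 1 = 0$, giving $f(x) \ge 0$ on the whole domain, which is exactly the claim. For part (b), the same template works: let $g(x) = e^x - 1 - x$, note $g'(x) = e^x - 1$ vanishes only at $0$ and changes from negative to positive, so $g$ attains its global minimum $g(0) = 0$ at the origin, and $e^x \ge 1+x$ follows.

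Part (c) is the substantive one and splits into two inequalities. The right-hand inequality $e^{-x-x^2} \ge e^{-2x}$ is immediate: it is equivalent to $-x - x^2 \ge -2x$, i.e.\ $x(1-x) \ge 0$, which holds throughout $[0,1] \supseteq [0,2/3]$. For the left-hand inequality $1 - x \ge e^{-x-x^2}$, I would take logarithms (valid since $1-x > 0$ on $[0,2/3]$) and prove the equivalent statement $h(x) := \ln(1-x) + x + x^2 \ge 0$. A direct computation gives
\begin{equation*}
h'(x) = -\frac{1}{1-x} + 1 + 2x = \frac{x(1-2x)}{1-x},
\end{equation*}
which is nonnegative on $[0,1/2]$ and nonpositive on $[1/2,1)$. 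Hence $h$ increases from $h(0) = 0$ up to its maximum at $x = 1/2$ and then decreases, so on $[0,2/3]$ the function is minimized at one of the endpoints $0$ or $2/3$. Since $h(0) = 0$, it only remains to verify $h(2/3) \ge 0$, i.e.\ $\ln(1/3) + 2/3 + 4/9 \ge 0$, which rearranges to $\ln 3 \le 10/9$; this follows from $\ln 3 < 1.099 < 1.111 = 10/9$.

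The only subtle step is the endpoint check at $x = 2/3$ in part (c), since that range is nearly tight and a coarser bound such as $\ln 3 \le 10/9$ must be invoked explicitly. Everything else is routine monotonicity analysis, so I do not anticipate any structural difficulty beyond organizing the three items cleanly.
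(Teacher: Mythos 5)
Your proposal is correct, and parts (a) and (b) as well as the right-hand inequality of (c) coincide with the paper's argument (the paper works with the negated functions $\ln x - x/e$ and $1+x-e^x$, which is only cosmetically different). For the left-hand inequality of (c) you take a genuinely smoother route: by passing to logarithms you reduce the claim to $h(x)=\ln(1-x)+x+x^2\ge 0$, whose derivative factors explicitly as $h'(x)=x(1-2x)/(1-x)$, so the sign pattern (increasing on $[0,1/2]$, decreasing on $[1/2,2/3]$) is immediate. The paper instead analyzes $f(x)=1-x-e^{-x-x^2}$ directly, which forces it through a second-derivative argument: it computes $f''(x)=(1-4x-4x^2)e^{-x-x^2}$, invokes an implicitly defined zero $x_0$ of $f''$, and traces the resulting shape of $f'$ to conclude the minimum sits at an endpoint. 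Your version eliminates $f''$ and the implicit $x_0$ entirely, and you also make the endpoint verification explicit ($h(2/3)\ge 0$ amounts to $\ln 3 \le 10/9$, with $\ln 3 < 1.099$), whereas the paper simply asserts $f(2/3)>0$ without a numerical witness. Both proofs have the same skeleton (monotonicity plus endpoint check), but yours is the cleaner execution of the one non-routine step; the paper's direct approach buys nothing here except avoiding the (harmless, since $1-x>0$ on the range) logarithmic transformation.
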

\begin{proof}
	Since $(\ln x-x/e)'=1/x-1/e,x>0$, then $(e,0)$ is the maximum point of $\ln x-x/e,x>0$. Thus part \ref{it:elemlnx} holds. 
	
	We notice that $(1+x-e^x)'=1-e^x$, then $(0,0)$ is the maximal point of $1+x-e^x$. Therefore part \ref{it:elemexp} holds. 
	
	For the proof of part \ref{it:elemexplow}, the second inequality can be easily obtained, since $e^x$ monotonically increases and $-x-x^2\ge -2x$ when $0\le x\le 2/3$. To prove the first inequality, let $f(x)=1-x-e^{-x-x^2}$ for $0\le x\le 2/3$. Then $f'(x)=(1+2x)e^{-x-x^2}-1$ and $f''(x)=(1-4x-4x^2)e^{-x-x^2}$. We notice that $f''(x)$ has only one zero point between $0$ and $2/3$. Let $x_0$ denote the zero point. Thus $f''(x)$ is positive before $x_0$, and then becomes negative after $x_0$. Using the fact $f'(0)=0$ and $f'(2/3)<0$, we can easily see that $f'(x)$ first increases from $0$ to $f'(x_0)>0$, and then decreases below $0$. This means the minimum of $f(x)$ is attained at $0$ or $2/3$. Due to the fact that $f(0)=0$ and $f(2/3)>0$, the first inequality in part \ref{it:elemexplow} holds.
\end{proof}

\section{Proof Overview}
\label{sec:proof-overview}

Since the following runtime analysis of our self-adjusting \oplea on the \onemax function is slightly technical, let us outline the main proof ideas here in an informal manner. Let always $x$ denote the parent individual of the current iterations and $k := f(x)$ its fitness distance from the optimum (recall that we are minimizing the \onemax function, hence the fitness distance equals the objective function value which in turn is the Hamming distance from the optimum $x^* = (0, \dots, 0) \in \{0,1\}^n$ and thus the number of ones in $x$).

The outer proof argument is variable drift, that is, for each fitness value $k$ we estimate the expected fitness gain (``progress'') in an iteration starting with a parent individual $x$ with $f(x) = k$ and then we use the variable drift theorem (Theorem~\ref{theo:variable-upper}) to translate this information on the progress into an expected runtime (number of generations until the optimum is found). 

To obtain sufficiently strong lower bounds on the expected progress, we need to argue that the rate self-adjustment sufficiently often sets the current rate to a sufficiently good value. This is the main technical difficulty as it needs a very precise analysis of the quality of the offspring in both subpopulations. Since this requires different arguments depending on the current fitness distance $k$, we partition the process into three  regimes.

In the \emph{far region} covering the fitness distance values $k \ge n / \ln(\lambda)$, we need a rate~$r$ that is at least almost logarithmic in $\lambda$ to ensure that the average progress is high enough. Note that to gain the $\Theta(n)$ fitness levels from the initial value of approximately $n/2$ to $n / \ln \lambda$ in at most a time of $O(n / \log \lambda)$, we need an average progress of $\Omega(\ln \lambda)$ per iteration. We shall not be able to obtain this progress throughout this region, but we will obtain an expected progress of \[\Omega\bigg(\frac{\log \lambda}{\log\frac{en}{k}}\bigg)\] in an iteration with initial fitness distance $k$. This will be sufficient to reach a fitness distance of $n / \log \lambda$ in the desired $O(n / \log \lambda)$ iterations. 

As said, the main difficulty is arguing that the self-adjusting mechanism keeps the rate sufficiently often in the range we need, which is roughly 
\[\bigg[\frac{1}{2 \ln \frac{en}{k}} \, \ln \lambda, \frac{4n^2}{(n-2k)^2} \, \ln \lambda \bigg]\]
for all $k \in [n / \ln \lambda, n/2-1]$. Note that these range boundaries, in particular the upper one, depend strongly on $k$. Hence for arguing that our rate is sufficiently often in this range, we need to consider both the rate changes from the self-adjustments and the changing $k$-value. In this region we profit from the fact that we work with relatively high rates, which lead to strong concentration behaviors. This allows to argue that, for example, exceeding the upper boundary already by small constant factors is highly unlikely.

The \emph{middle region} covering the fitness distances $k \in [n / \lambda, n/ \ln \lambda]$ is small enough so that we do not require the algorithm to find a near-optimal rate very often. In fact, it suffices that the rate is below $\frac 12 \ln \lambda$ with constant probability. This ensures an expected progress of at least $\min\{\frac 18, \sqrt{\lambda}k / 32n\}$, which is sufficient to traverse also this region in time $O(n / \log \lambda)$. Consequently, in this region we only need to argue that the rate does not become too large, whereas there is no lower bound on $r$ which we require. Further, the upper bound of $\tfrac 12 \ln \lambda$ is large enough so that strong concentration arguments can be exploited, which show that deviations above the upper bound are highly unlikely. Since the upper bound does not change over time, we can now conveniently use an occupation probability argument to show that the rate with constant probability is only a small constant factor over the desired range, which is enough since the random fluctuations of the rate with constant probability reduce the rate further to the desired range. 

In the \emph{near region} covering the remaining fitness distance values $k \le n / \lambda$, the parent individual is already so close to the optimum that any rate higher than the minimal rate $r=2$ is sub-optimal. Hence in this region, we know precisely the optimum value for the rate. Nevertheless, it is not very easy to show that the subpopulation using the smaller rate $r/2$ has a higher chance of containing the new parent individual. Due to the small rates predominant in this region and the fact that progress generally is difficult (due to the proximity to the optimum), often both subpopulations will contain best offspring (which are in fact copies of the parent). Hence the typical reason for the winning offspring stemming from the smaller-rate subpopulation is not anymore that the other subpopulation  contains only worse individuals, but only that the smaller-rate subpopulation contains more best offspring. 

By quantifying this effect we establish a drift of the rate down to its minimum value~$2$ in the near region, and 
another occupation probability argument is used to show that the rate with sufficiently high probability will take this value in subsequent sets.
Nevertheless, the concentration of the rate around this target value is weaker than in the other regions, and additional care has to be taken to handle iterations 
in the near region in which the unlikely, 
but still possible event occurs that the rate exceeds~$\ln \lambda$. This value is a critical threshold in the near region since 
rates larger than $\ln \lambda$  will typically make all offspring worse than the parent.
As an additional obstacle, there is a small interval of rates above the threshold in which we cannot show a drift of the rate to its minimum. 
To show that this small interval nevertheless is left towards its lower end with probability~$\Omega(1)$, the occupation probability argument is combined with a potential function whose shape exploits the random adjustments that the \oplea performs with 50\% probability.

\section{Far Region}
\label{sec:far-region-rate-drift}

In this first of three technical sections, we analyze the optimization behavior of our self-adjusting \oplea in the regime where the fitness distance $k$ is at least $n/\ln\lambda$. Since we are relatively far from the optimum, it is relatively easy to make progress. On the other hand, this regime spans the largest number of fitness levels (namely $\Theta(n)$), so we need to exhibit a sufficient progress in each iteration. Also, this is the regime where the optimal mutation rate varies most drastically. Since it is not important for the following analysis, we remark without proof that the optimal rate is $n$ for $k \ge n/2 + \omega(\sqrt{n \log\lambda})$, $(1+o(1)) n/2$ for $k = n/2 \pm o(\sqrt{n \log\lambda})$, and then quickly drops to $r = \Theta(\log \lambda)$ for $k \le n/2 - \eps n$. Despite these difficulties, our \oplea manages to find sufficiently good mutation rates to be able to reach a fitness distance of $k = n/\ln\lambda$ in an expected number of $O(n / \log \lambda)$ iterations.

\begin{lemma}\label{lem:fardrift}
	Let $n$ be sufficiently large and $0<k<n/2$. We define $c_1(k)= (2\ln(e n/k))^{-1}$ and  $c_2(k)=4n^2/(n-2k)^2$.
	\begin{enumerate}
		\item \label{far-rate-inc}If $n/\ln\lambda\le k$ and $r \le c_1(k)\ln\lambda$, then the probability that a best offspring has been created with rate $2r$ is at least $0.64$.
		
		\item \label{far-rate-dec}Let $\lambda\ge 100$. If $ c_2(k)\ln\lambda\le r\le n/4$, then the probability that all best offspring have been created with rate $r/2$ is at least $0.51$.
		
		\item \label{far-rate-worse}If $r\ge 2(1+\gamma) c_2(k)\ln\lambda$, then the probability that all best offspring are worse than the parent is at least $1-\lambda^{-\gamma}$. 
	\end{enumerate}
\end{lemma}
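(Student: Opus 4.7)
The plan is to cast all three parts in a common framework. For a single offspring generated at mutation probability $p$, write its improvement (decrease in the number of ones) as $\Delta = X - Y$, where $X \sim \Bin(k,p)$ counts the $1\!\to\!0$ flips and $Y \sim \Bin(n-k,p)$ the $0\!\to\!1$ flips. Since $k < n/2$, $E(\Delta) = -p(n-2k) < 0$, so each subpopulation consists of $\lambda/2$ i.i.d.\ copies of a variable with negative mean, and useful progress only lives in the upper tail of the subpopulation maximum. The two relevant rates are $p_S := r/(2n)$ and $p_L := 2r/n = 4 p_S$, and each of the three parts is a comparison of these two $\lambda/2$-fold maxima.

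For Part~\ref{far-rate-inc}, with $r \le c_1(k)\ln\lambda$ the rate $p_L$ still lies below the conjecturally optimal rate from~\cite{BadkobehPPSN14}, so the large-rate subpopulation has the advantage. I would fix an integer $\ell$ of order $\ln\lambda/\ln(en/k)$ and prove two estimates. First, using $\binom{k}{\ell} p_L^\ell (1-p_L)^{n-\ell} = \Omega(1/\lambda)$ (via the definition of $c_1(k)$ and $\binom{k}{\ell} \ge (k/\ell)^\ell$) combined with the bound $1-(1-q)^{\lambda/2} \ge 1-e^{-q\lambda/2}$, that the probability that some large-rate offspring has $X \ge \ell$ and $Y = 0$ is at least $0.64 + \Omega(1)$. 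Second, that no small-rate offspring has $\Delta \ge \ell$ except with probability $o(1)$: halving the rate drops the per-offspring probability of $X \ge \ell$ by a factor of at least $4^\ell$, and with $\ell = \Theta(\ln\lambda/\ln(en/k))$ this swamps the $\lambda/2$ union bound. On the intersection of both events the unique best offspring lies in the large-rate subpopulation, so $x^*$ does too.

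For Part~\ref{far-rate-dec}, with $r \ge c_2(k)\ln\lambda$ both rates push $\Delta$ strongly downward, and the Bernstein bound used in Part~\ref{far-rate-worse} concentrates individual $\Delta$'s tightly. The large-rate maximum over $\lambda/2$ offspring lies near $-p_L(n-2k)$ with fluctuation $O(\sqrt{np_L\ln\lambda})$, while the small-rate maximum lies near $-p_S(n-2k) = -p_L(n-2k)/4$ with fluctuation $O(\sqrt{np_L\ln\lambda}/2)$. A rearrangement of $r \ge c_2(k)\ln\lambda$ via $p_L = 2r/n$ and $c_2(k) = 4n^2/(n-2k)^2$ shows that the mean gap $\tfrac{3}{4} p_L(n-2k)$ dominates the sum of the two fluctuations by a constant factor. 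I would then pick an intermediate threshold $\tau$ and show (a) the large-rate max lies below $\tau$ except with probability $\le 0.49 - \Omega(1)$ (Bernstein upper tail plus union bound over the $\lambda/2$ large-rate offspring), and (b) at least one small-rate offspring exceeds $\tau$ except with probability $o(1)$ (lower-tail anti-concentration for the binomial difference, combined with independence across the $\lambda/2$ small-rate offspring); intersecting the two events gives the target $0.51$.

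For Part~\ref{far-rate-worse}, one application of Bernstein's inequality (Theorem~\ref{thm:chernoff}\ref{bernstein}) with $b=1$, $\sigma^2 \le np_S$, and deviation $p_S(n-2k)$ yields $\Prob(\Delta \ge 0) \le \exp(-3 p_S (n-2k)^2/(8n))$, which for $r \ge 2(1+\gamma) c_2(k)\ln\lambda = 8(1+\gamma) n^2 \ln\lambda/(n-2k)^2$ evaluates to at most $\lambda^{-3(1+\gamma)/2}$. The same bound holds a fortiori at the larger rate $p_L = 4 p_S$ since the Bernstein exponent only grows with $p$ in this regime, and a union bound over the $\lambda$ offspring finishes the part. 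I expect Part~\ref{far-rate-dec} to be the main obstacle: Parts~\ref{far-rate-inc} and~\ref{far-rate-worse} reduce to one-sided tail estimates with generous slack in the exponent, whereas Part~\ref{far-rate-dec} requires a two-sided comparison of the maxima of $\lambda/2$ samples from two distributions that differ only by a factor of four in their mutation rate, and the target $0.51$ leaves only a small margin above $1/2$. The lower-order terms in both concentration statements therefore need careful tracking, and the proof must also handle borderline cases where a large-rate offspring could tie the best small-rate offspring, since the drift analyses built on top of this lemma in the following sections rely on this $0.51$ bound being actually achievable rather than merely approached.
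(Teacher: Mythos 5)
Your part~(a) plan has a genuine gap: the step ``no small-rate offspring has $\Delta\ge\ell$ except with probability $o(1)$'' is false. The per-offspring likelihood ratio between the two rates at progress level $i$ is only about $4^i\bigl((1-2r/n)/(1-r/(2n))\bigr)^n\approx 4^i e^{-1.5r}$, so at your threshold $\ell=\Theta(\ln\lambda/\ln(en/k))\approx 2r$ the small-rate probability is smaller only by a factor of roughly $e^{(2\ln 4-1.5)r}\approx e^{1.27r}$ --- a fixed constant (about $12.7$ at $r=2$), not anything that ``swamps'' a union bound over $\lambda/2$ offspring. Moreover, the large-rate per-offspring probability of reaching $\ell$ is only bounded \emph{below} by $\Theta(1/\lambda)$; when $k$ is close to $n/2$ and $r$ is small it is nearly constant, in which case the $r/2$-subpopulation also reaches level $\ell$ with probability close to $1$. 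So uniqueness of the best offspring in the large-rate subpopulation cannot be established, and your intersection-of-events argument collapses. The paper avoids this entirely by a conditional likelihood-ratio argument: conditioning on the best progress being some $i\ge 2r$, the pointwise ratio $q(k,i,2r)/q(k,i,r/2)\ge(1-o(1))(4/e)^{2r}>4$ shows that any offspring attaining the best value was created with rate $2r$ with conditional probability at least $4/5$, regardless of how many offspring on each side reach that level (note also that the claim is only that \emph{a} best offspring is from the $2r$-side, ties allowed).

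In part~(b) your skeleton (threshold $\tau$; Bernstein plus union bound on the $2r$-side; anti-concentration plus independence on the $r/2$-side) is exactly the paper's, but the quantitative claims fail at the boundary $r=c_2(k)\ln\lambda$, $\lambda=100$, where the lemma must still hold. There the mean gap is $1.5r(n-2k)/n\ge 3\sqrt{r\ln\lambda}$ while the typical maximum fluctuations of the two subpopulations are about $\sqrt{2r}\cdot\sqrt{2\ln\lambda}$ and $\sqrt{r/2}\cdot\sqrt{2\ln\lambda}$, whose sum is again $\approx 3\sqrt{r\ln\lambda}$: the gap \emph{equals} the sum of fluctuations rather than dominating it by a constant factor. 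Concretely, with $\tau=E(X(k,r/2))$ Bernstein yields only $\lambda^{-9/7}$ per large-rate offspring, leaving after the union bound merely $\lambda^{-2/7}/2\le 0.14$ of slack at $\lambda=100$; lowering $\tau$ by a fraction $\theta$ of the gap shrinks the exponent roughly by $(1-\theta)^2$, and already small $\theta$ makes the union bound useless. Hence the small-rate failure probability cannot be driven to $o(1)$ (nor is $o(1)$ meaningful for constant $\lambda$); it must be allowed to be a constant near $0.4$. The paper's resolution is to keep $\tau$ exactly at $E(X(k,r/2))$ and prove the per-offspring anti-concentration bound $\Pr\bigl(X(k,r/2)\ge E(X(k,r/2))\bigr)\ge 0.5\cdot 0.037=0.0185$ via the binomial median (Kaas--Buhrman) for $X^+$ and Doerr's bound $\Prob(\Bin\ge E+1)\ge 0.037$ for $X^-$, giving $1-0.9815^{50}\ge 0.6$ over $\lambda/2\ge 50$ offspring and $0.6\cdot 0.86>0.51$. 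Your part~(c), by contrast, is correct and essentially identical to the paper's proof (Bernstein plus a union bound; your exponent $\tfrac{3}{2}(1+\gamma)$ is even slightly stronger than the paper's $1+\gamma$ for the $r/2$-subpopulation, and the monotonicity remark covering the $2r$-rate is sound).
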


\begin{proof}
	To prove part \ref{far-rate-inc}, let $q(k,i,r)$ and $Q(k,i,r)$ be the probabilities that standard bit mutation with mutation rate $p=r/n$ creates from a parent with fitness distance $k$ an offspring with fitness distance exactly $k-i$ and at most $k-i$. Then
	\[
	q(k,i,r)=\sum_{j=0}^{k-i}\binom{k}{i+j}\binom{n-k}{j}\left(\frac{r}{n}\right)^{i+2j}\left(1-\frac{r}{n}\right)^{n-i-2j}
	\]
	and $Q(k,i,r)=\sum_{j=i}^{k}q(k,j,r)$. We first show that the probability of not achieving $i\ge 2r$ is less than $0.2$. This is because for large enough $n$ and $r\ge 2$ we have $(1-o(1))\binom{k}{2r}\ge (1-o(1))4!(k/(2r))^{2r}>4(k/(2r))^{2r}$ and for $r\le c_1(k)\ln\lambda=o(\sqrt{n})$ by Lemma~\ref{lem:elem}, part~\ref{it:elemexplow} we have $(1-2r/n)^{n}\ge e^{-2r-4r^2/n}= (1-o(1))e^{-2r}$, thus
	\begin{align*}
		&Q(k,2r,2r)\ge q(k,2r,2r)\ge\binom{k}{2r}\left(\frac{2r}{n}\right)^{2r}\left(1-\frac{2r}{n}\right)^{n}\\&\ge 4\left(\frac{k}{2r}\right)^{2r}\left(\frac{2r}{n}\right)^{2r} e^{-2r}\ge 4\left(\frac{k}{en}\right)^{2r}
		\ge 4\left(\frac{k}{en}\right)^{2c_1(k)\ln\lambda}= \frac{4}{\lambda}.
	\end{align*}
	Considering $\lambda/2$ offspring using rate $2r$, the probability that none of them achieves a fitness improvement of at least $2r$ is less than $(1-4/\lambda)^{\lambda/2}<\exp(-4/2)<0.2$. We next argue that an offspring having a progress of $2r$ or more with good probability comes from the $2r$-subpopulation.
	We notice that 
	\[
	\frac{q(k,i,2r)}{q(k,i,r/2)}\ge 4^i \left(\frac{1-2r/n}{1-r/(2n)}\right)^{n}\ge \big(1-o(1)\big)4^ie^{-1.5r}.
	\]
	Thus for $r\ge 2$ and $i\ge 2r$ we have
	\[
	\frac{q(k,i,2r)}{q(k,i,r/2)}\ge\big(1-o(1)\big)\left(\frac{4}{e}\right)^{2r}>4.
	\]
	Therefore if the best progress among all $\lambda$ offspring is $i$ and $i\ge 2r$, the conditional probability that an offspring having fitness distance $k-i$ is generated with rate $2r$ is at least $q(k,i,2r)/(q(k,i,2r)+q(k,i,r/2))\ge 4/5$. In total, the probability that a best offspring has been created with rate $2r$ is at least
	\[
	(1-0.2)\cdot 4/6=0.64.
	\]

	For the proof of part~\ref{far-rate-dec}, let $\lambda\ge 100$ and $c_2(k)\ln\lambda\le r \le n/4$. Our idea is to show a probability of $o(1/\lambda)$ for the event that an offspring with rate $2r$ is not worse than the expected fitness of an offspring with rate $r/2$. Let $X(k,r)$ denote the random decrease of the fitness distance when applying standard bit mutation with probability $p=r/n$ to 
	an individual with $k$ ones. Then 
	\begin{gather*}
		E(X(k,r))=kp-(n-k)p=\frac{(2k-n)r}{n},
		\\\Var(X(k,r))=np(1-p)=r\left(1-\frac{r}{n}\right).
	\end{gather*}
	According to Bernstein's inequality (Theorem~\ref{thm:chernoff} \ref{bernstein}), for any $\Delta>0$ we have
	\[
	\Pr(X\ge E(X)+\Delta)\le\exp\left(\frac{-\Delta^2}{2\Var(X)+2\Delta/3}\right)\le\exp\left(\frac{-\Delta^2}{2\Var(X)+2\Delta}\right)
	\]
	We apply this bound with $X=X(k,2r)$ and $\Delta=E(X(k,r/2))-E(X(k,2r))=(n-2k)1.5r/n>0$. Then,
	\begin{align*}
		& \Pr(X(k,2r)\ge E(X(k,r/2)))\le\exp\left(\frac{-\Delta^2}{2\Var(X(k,2r))+2\Delta}\right)\\
		&=\exp\left(\frac{-9(n-2k)^2r}{4n(7n-8r-6k)}\right)
		\le\exp\left(-\frac{9(n-2k)^2c_2(k)\ln\lambda}{28n^2}\right)=\frac{1}{\lambda^{9/7}}.
	\end{align*}
	We notice that we have $7n-8r-6k>7n-4n-3n=0$ in the second inequality.
	From a union bound we see that with probability less than $\lambda^{-9/7}(\lambda/2)<100^{-2/7}/2<0.14$, the best offspring created with rate $2r$ is at least as good as the expected fitness of an individual created with rate $r/2$. 
	
	We estimate the probability that the best offspring using rate $r/2$ has a fitness distance at most $E(X(k,r/2))$.  Let $y$ be an offspring obtained from $x$ via standard bit mutation with mutation rate $r/(2n)$.  Let $X^+$ and $X^-$ be the number of one-bits flipped and zero-bits flipped, respectively.
	\begin{align*}
	X^+ &:= |\{i \in \{1, \dots, n\} \mid x_i=1 \wedge y_i =0\}|,\\
	X^- &:= |\{i \in \{1, \dots, n\} \mid x_i=0 \wedge y_i =1\}|. 
	\end{align*}
	Both $X^+$ and $X^-$ are binomially distributed and $X(k,r/2)=X^+-X^-$. We aim at a lower bound for $\Pr(X(k,r/2)\ge E(X(k,r/2)))$. We have $\Pr(X^+\ge E(X^+)-1)\ge 1/2$, since the median of $X^+$ is between $\lfloor E(X^+)  \rfloor$ and $\lceil E(X^+)\rceil$ by \cite{Kaas80}.
	It remains to bound $\Prob(X^-\le E(X^-)-1)$. We notice that $E(X^-)=(n-k)r/(2n)>\ln\lambda>1$ and $E(X^-)\le (n-k)/4$.
	Applying Theorem~12 in \cite{Doerr18exceedexp} to binomial random variable $\tilde{X}^-:=(n-k)-X^-$, we obtain
	$\Prob(X^-\le E(X^-)-1)=\Prob(\tilde{X}^-\ge E(\tilde{X}^-)+1)\ge 0.037.$
	Therefore 
	\begin{align*}
		\Pr(X(k,r/2)\ge E(X(k,r/2)))&\ge\Prob(X^-\le E(X^-)-1)\Prob(X^+\ge E(X^+)-1)\\
		&\ge 0.037\cdot 0.5=0.0185.
	\end{align*}
	For $\lambda/2$ offspring using rate $r/2$, the probability that the best one has a fitness distance at most $E(X(k,r/2))$ is more than $1-(1-0.0185)^{\lambda/2}\ge 1-0.9815^{50}\ge 0.6$. Therefore with probability at least $0.6\cdot(1-0.14)>0.51$, all best offspring are from  the $r/2$-subpopulation. This proves the second statement of the lemma.
	
	For proof of part \ref{far-rate-worse}, let $r\ge 2(1+\gamma)c_2(k)\ln\lambda$. An offspring created with mutation rate $r/n$ is at least as good as its parent if and only if $X(k,r)\ge 0$. By using Bernstein's inequality (Theorem~\ref{thm:chernoff}  \ref{bernstein}) with $X=X(k,r)$ and $\Delta=-E(X(k,r))$, we have
	\begin{align*}
	\Pr(X(k,r)\ge 0)&\le\exp\left(-\frac{E(X(k,r))^2}{2\Var(X(k,r))+2E(X(k,r))}\right)\\
	&=\exp\left(-\frac{(n-2k)^2r}{2n(2n-2k-r)}\right)
	\le\exp\left(-\frac{(n-2k)^2r}{4n^2}\right).
	\end{align*}
	Since $r\ge 2(1+\gamma) c_2(k)\ln\lambda$, the corresponding probabilities  for rate $r/2$ and $2r$ are at most  $1/\lambda^{1+\gamma}$ and $1/\lambda^{4+4\gamma}$, respectively. By a union bound, with probability at most $1/\lambda^{\gamma}$, the best offspring is at least as good as its parent. This proves part~\ref{far-rate-worse}.
\end{proof}

The lemma above shows that the rate $r$ is subject to a constant drift towards the interval $[c_1(k) \ln n, c_2(k) \ln n]$. Unfortunately, we cannot show that we obtain a sufficient fitness progress for all $r$-values in this range. However, we can do so for a range smaller only by constant factors. This is what we do now (for large values of $k$) and in Lemma~\ref{lem:far-region-drift} (for smaller values of $k$). This case distinction is motivated by the fact that $c_2(k)$ becomes very large when $k$ approaches $n/2$. Having a good fitness drift only for such a smaller range of $r$-values is not a problem since the random movements of $r$ let us enter the smaller range with constant probability. This is what we will exploit in Theorem~\ref{thm:expected-time-far} and its proof.

 Let $2\le r \le n/4$ be the current rate and let $\tilde r \in \{r/2,2r\}$. Let $\tilde{\Delta}(\lambda/2,k,\tilde{r})$ denote the fitness gain of the best of $\lambda/2$ offspring generated with rate $\tilde{r}$ from a parent $x$ with fitness distance $k := \onemax(x)$ and the parent itself. More precisely, let $x^{(i)}, i \in \{1,\dots,\lambda/2\}$, be independent offspring generated from $x$ by flipping each bit independently with probability $\tilde{r}/n$.  Then the random variable $\tilde{\Delta}(\lambda/2,k,\tilde{r})$ is defined by $\max\{0,k - \min\{\om(x^{(i)}) \mid i \in \{1,\dots,\lambda/2\}\}\}$. We use $\Delta:=\max\{\tilde{\Delta}(\lambda/2,k,r/2),\\\tilde{\Delta}(\lambda/2,k,2r)\}$ to denote the fitness gain in a iteration which uses $x$ as parent and $r$ as mutation rate.
 

We next show that a region contained in $[c_1(k)\ln\lambda,c_2(k)\ln\lambda]$ provides at least a logarithmic (in $\lambda$) drift on fitness. 

\begin{lemma}\label{lem:super-far-region-rate-drift}
	Let $n$ be sufficiently large, $2n/5\le k<n/2$ and $\lambda\ge e^5>148$. If $2\le \ln(\lambda)\le r\le \min\{n^2\ln(\lambda)/(25(n-2k)^2),n/4\}$, then $E(\Delta\mid k) \ge 10^{-3}\ln\lambda$.
\end{lemma}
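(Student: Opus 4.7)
Since $\Delta \ge \tilde\Delta(\lambda/2,k,r/2)$ by definition, it suffices to prove $E(\tilde\Delta(\lambda/2,k,r/2)\mid k) \ge 10^{-3}\ln\lambda$, so I would work exclusively with the rate-$r/2$ subpopulation. Let $X := X(k,r/2)$ be the fitness gain of a single such offspring, written as $X = Y_1 - Y_0$ with $Y_1\sim\mathrm{Bin}(k,r/(2n))$ counting flipped one-bits and $Y_0\sim\mathrm{Bin}(n-k,r/(2n))$ counting flipped zero-bits. Then $E(X) = -(n-2k)r/(2n)$ and $\sigma^2 := \Var(X) = (r/2)(1-r/(2n)) \ge r/4$. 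The upper bound on $r$ translates directly into the key size relation
\begin{equation*}
|E(X)|^2 \;\le\; \frac{r\ln\lambda}{100} \;\le\; \frac{\sigma^2\ln\lambda}{25},
\qquad\text{so}\qquad |E(X)| \;\le\; \tfrac{1}{5}\,\sigma\sqrt{\ln\lambda}.
\end{equation*}
Since $r\ge\ln\lambda$ gives $\sigma \ge \tfrac{1}{2}\sqrt{\ln\lambda}$, any $c\ln\lambda$ with constant $c$ satisfies $c\ln\lambda \le 2c\,\sigma\sqrt{\ln\lambda}$, and hence the event $\{X \ge c\ln\lambda\}$ corresponds to $X$ exceeding its mean by at most $(2c+\tfrac15)\sigma\sqrt{\ln\lambda}$, that is, a moderate deviation of only $O(\sqrt{\ln\lambda})$ standard deviations.

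The technical heart of the proof is to lower-bound $\Pr(X\ge c\ln\lambda)$ by $2/\lambda$ for some constant $c$ with $(1-e^{-1})c \ge 10^{-3}$ (for instance $c = 10^{-2}$). I would pick $t_0,t_1\ge 0$ with $t_0+t_1 \ge c\ln\lambda + |E(X)|$ and use the split
\begin{equation*}
\Pr(X\ge c\ln\lambda) \;\ge\; \Pr\bigl(Y_1\ge E(Y_1)+t_1\bigr)\cdot\Pr\bigl(Y_0\le E(Y_0)-t_0\bigr).
\end{equation*}
For the upper tail of $Y_1$, Slud's inequality applies directly (since $r/(2n)\le 1/8$) and gives a Gaussian lower bound $\Pr(Y_1 \ge E(Y_1)+t_1) \ge 1-\Phi(t_1/\sigma_{Y_1})$. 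For the lower tail of $Y_0$, I would bound $\Pr(Y_0 \le E(Y_0)-t_0)$ from below by the single summand at $j = \lfloor E(Y_0)\rfloor - \lceil t_0\rceil$, which expands via Stirling to $\Omega(\exp(-t_0^2/(2\sigma_{Y_0}^2))/\sigma_{Y_0})$ throughout the moderate-deviation regime $t_0 = O(\sigma_{Y_0}\sqrt{\ln\lambda})$. Balancing $t_0,t_1$ so that both ratios $t_i/\sigma_{Y_i}$ are at most a small constant times $\sqrt{\ln\lambda}$, the product lower bound comes out as $\lambda^{-\beta}/\mathrm{poly}(\sigma)$ with some $\beta<1$, which dominates $2/\lambda$ for $n$ sufficiently large (recall $\lambda = n^{O(1)}$).

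Given the $2/\lambda$ bound, the conclusion is immediate: the $\lambda/2$ offspring generated with rate $r/2$ are independent, so
\begin{equation*}
\Pr\bigl(\tilde\Delta(\lambda/2,k,r/2) \ge c\ln\lambda\bigr) \;\ge\; 1-(1-2/\lambda)^{\lambda/2} \;\ge\; 1-e^{-1},
\end{equation*}
and therefore $E(\tilde\Delta(\lambda/2,k,r/2)\mid k) \ge (1-e^{-1})\,c\ln\lambda \ge 10^{-3}\ln\lambda$, which yields the bound for $\Delta$.

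The main obstacle I expect is precisely the moderate-deviation lower bound in the core step: the target probability $2/\lambda$ sits in a regime where a plain Berry--Esseen comparison with a Gaussian leaves an error $O(1/\sigma)$ that can exceed the very tail value one is trying to lower-bound. The direct binomial pmf/Stirling route outlined above sidesteps this, but keeping the explicit constants tracked carefully enough to verify $(1-e^{-1})c \ge 10^{-3}$ will require some care. A simpler, though less self-contained alternative would be to cite a Bahadur--Rao-type moderate deviation bound for lattice sums of independent Bernoullis, which gives the same $\lambda^{-\beta}$ with $\beta<1$ immediately.
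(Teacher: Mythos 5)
Your overall architecture is the same as the paper's: restrict to the rate-$r/2$ subpopulation, decompose the gain as flipped ones minus flipped zeros, use the upper bound on $r$ to turn the target $c\ln\lambda$ into a deviation of only $O(\sqrt{\ln\lambda})$ standard deviations (your computation $|E(X)|\le\sigma\sqrt{\ln\lambda}/5$ is exactly the paper's use of the hypothesis $r\le n^2\ln(\lambda)/(25(n-2k)^2)$), and amplify over $\lambda/2$ independent offspring. The reductions at both ends are fine. The genuine gap is in the core tail estimate. Your single-summand bound for $\Prob(Y_0\le E(Y_0)-t_0)$ carries the prefactor $1/\sigma_{Y_0}$, and your claim that $\lambda^{-\beta}/\mathrm{poly}(\sigma)$ dominates $2/\lambda$ ``for $n$ sufficiently large (recall $\lambda=n^{O(1)}$)'' is backwards: $\lambda=n^{O(1)}$ bounds $\lambda$ from \emph{above}, not below. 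The lemma permits $r$ as large as $n/4$ (take $k=n/2-1$, where $n^2\ln(\lambda)/(25(n-2k)^2)$ exceeds $n/4$), so $\sigma_{Y_0}=\Theta(\sqrt n)$, while $\lambda$ may be a constant at least $149$. Your product bound is then $O(\lambda^{-\beta}/\sqrt n)\to 0$, far below the required $2/\lambda$; even the weaker amplification target $(\lambda/2)q=\Omega(1)$ fails. This is precisely the corner the lemma exists for ($k$ near $n/2$, where $c_2(k)$ blows up), so the $1/\sigma$ factor must be eliminated, not tolerated.

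Two repairs stay within your framework. First, put the \emph{entire} deviation on $Y_1$: since $p=r/(2n)\le 1/8$, Slud's inequality applies to any upward deviation of $Y_1$ and is an exact Gaussian lower bound with no $1/\sigma$ correction; pair it with the constant bound $\Prob(Y_0\le E(Y_0))\ge 1/4$. Because $k\ge 2n/5$ gives $\sigma_{Y_1}^2=\Theta(r)=\Theta(\sigma^2)$, the deviation is still $O(\sqrt{\ln\lambda})$ standard deviations of $Y_1$, yielding a per-offspring probability $\Omega(\lambda^{-\beta}/\sqrt{\log\lambda})$ with small $\beta$, which suffices. The paper's actual route is a normalization-free variant of this: it places the deviation $\delta=(n-2k)p+0.05\ln\lambda$ on $X^+$, uses $\Prob(X^+\ge E(X^+))\ge 1/4$ and $\Prob(X^-\le E(X^-))\ge 1/4$ (citing a result that a binomial exceeds its expectation with probability at least $1/4$), and then exploits monotonicity of the pmf above the mean to get $F(u+\delta)\ge\lceil\delta\rceil B(\lceil u+2\delta\rceil)$ and $F(u)-F(u+\delta)\le\lceil\delta\rceil B(\lceil u\rceil)$, so that only the \emph{ratio} $B(\lceil u+2\delta\rceil)/B(\lceil u\rceil)$ must be bounded; Robbins--Stirling gives this ratio $\ge\tfrac12\lambda^{-0.8}$, and the $\sqrt{2\pi\,\cdot}$ normalizations cancel, so no $1/\sigma$ appears. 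Note also that the paper never demands per-offspring probability $2/\lambda$: it settles for $\lambda^{-0.8}/36$ per offspring, amplified over $\lambda/2$ offspring to probability $>0.02$ of a gain of $0.05\ln\lambda$, which already yields the stated $10^{-3}\ln\lambda$. Relaxing your $2/\lambda$ demand in the same way would simplify your constant-tracking, but only after the $1/\sigma$ issue is fixed.
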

\begin{proof}
	We first notice that $\ln(\lambda)\ge5$ and $n^2/(25(n-2k)^2)\ge 1/(25\cdot0.2^2)=1$ for all $2n/5\le k<n/2$. We aim to prove $\tilde{\Delta}(\lambda/2,k,\tilde{r})\ge 10^{-3}\ln\lambda$ with $\tilde{r}=r/2$.
	Let $X^+$ and $X^-$ be the number of one-bits flipped and zero-bits flipped, respectively, in an offspring using rate $p=\tilde{r}/n$.
	$X^+$ and $X^-$ follow binomial distributions $\Bin(k,p)$ and $\Bin(n-k,p)$, respectively. Let $u:=E(X^+)=kp$. Then $u\le k/8$ and $u\ge (k/n)\ln(\lambda)/2\ge 0.4\cdot0.5\cdot \ln\lambda=0.2\ln\lambda\ge 1$. Furthermore, let 
	\begin{align*}
		&B(x):=\Pr(X^+=x)=\binom{k}{x}p^x(1-p)^{k-x}\text{ for all } x\in\{0,1,\dots,k\},\\
		&F(y):=\Pr(X^+\ge y)=\sum_{i=\lceil y\rceil}^{k}B(i)\text{ for all } y\in[0,k].
	\end{align*}
	Applying Theorem~10 in \cite{Doerr18exceedexp}, we obtain $F(u)=\Pr(X^+\ge E(X^+))\ge 1/4$. Similarly $\Pr(X^-\le E(X^-))\ge 1/4$. We prove that for $\delta:=E(X^-)-E(X^+)+0.05\ln\lambda=(n-2k)p+0.05\ln\lambda$, we have $F(u+\delta)\ge \lambda^{-0.8}/9$, and thus $\Pr(X^+-X^-\ge 0.05\ln\lambda)\ge \lambda^{-0.8}/36$. Since for any $x\in\Z_{\ge u}$ we have
	\[
	\frac{B(x+1)}{B(x)}=\frac{k-x}{x+1}\cdot\frac{p}{1-p}\le \frac{u-up}{u-up+1-p}<1,
	\] 
	we obtain $B(\lceil u\rceil)>B(\lceil u\rceil+1)>\dots>B(k)$, and thus $F( u+ \delta)\ge   \lceil\delta\rceil B(\lceil u+2 \delta\rceil)$ as well as $ \lceil \delta\rceil  B(\lceil u\rceil)\ge F(u)-F(u +\delta)$. We see that
	\begin{align*}
	\frac{B(\lceil u+2 \delta\rceil)}{B(\lceil u\rceil)}&=\frac{(k-\lceil u\rceil)\cdots(k-\lceil u+2 \delta\rceil+1)}{(\lceil u\rceil+1)\cdots(\lceil u+2\delta\rceil)}\cdot\frac{p^{2\delta}}{(1-p)^{2\delta}}\\
	&\ge  \left(\frac{k-u-2\delta}{k(1-p)}\right)^{2\delta}\frac{u^{2\delta}}{\lceil u+1\rceil\cdots\lceil u+2\delta\rceil}\ge  \left(1-\frac{2\delta}{3u}\right)^{2\delta}\frac{u^{2\delta}}{\lceil u+1\rceil\cdots\lceil u+2\delta\rceil},
	\end{align*}
	where we used $k(1-p)=k-u\ge 8u-u=7u$.
	Using a sharp version of Stirling's approximation due to Robbins~\cite{Robbins55}, we compute
	\begin{align*}	
	&\lceil u+2\delta\rceil!\le \sqrt{2\pi\lceil u+2\delta\rceil}\left(\frac{\lceil u+2\delta\rceil}{e}\right)^{\lceil u+2\delta\rceil}\exp\left(\frac{1}{12(\lceil u+2\delta\rceil)}\right),\\
	&\lceil u\rceil!\ge \sqrt{2\pi \lceil u\rceil}\left(\frac{\lceil u\rceil}{e}\right)^{\lceil u\rceil}\exp\left(\frac{1}{12\lceil u\rceil+1}\right),\\
	&\frac{1}{\lceil u+1\rceil\cdots\lceil u+2\delta\rceil}=\frac{\lceil u\rceil!}{\lceil u+2\delta\rceil!}\ge\sqrt{\frac{\lceil u\rceil}{\lceil u+2\delta\rceil}}\frac{\lceil u\rceil^{\lceil u\rceil}e^{2\delta}}{\lceil u+2\delta\rceil^{\lceil u+2\delta\rceil}}\ge \frac{ \sqrt{\frac{u}{ u+2\delta+1}}u^{ u}e^{2\delta}}{(u+2\delta+1)^{ u+2\delta+1}}.
	\end{align*}
	We notice that $u\ge (k/n)\ln(\lambda)/2\ge 0.4\cdot0.5\cdot \ln\lambda=0.2\ln\lambda$ and $$\delta=(n-2k)p+0.05\ln\lambda\le 0.2np+0.05\ln\lambda\le 0.5u+0.05\ln\lambda<u.$$
	Thus $2\delta/(3u)\le 2/3$ and $2\delta/(u+2\delta)\le 2\delta/(\delta+2\delta)=2/3$. By Lemma~\ref{lem:elem}  \ref{it:elemexplow} we have $\ln(1-x)\ge -x-x^2\ge -2x$ for $0\le x\le 2/3$. Hence
	\begin{align*}
	\frac{B(\lceil u+2\delta \rceil)}{B(\lceil u\rceil)}&\ge \sqrt{\frac{u}{u+2\delta+1}}\left(1-\frac{2\delta}{7u}\right)^{2\delta}\frac{u^{u+2\delta}e^{2\delta}}{(u+2\delta+1)^{u+2\delta+1}}\\
	&\ge  \sqrt{\frac{1}{4}}\exp\left(2\delta\ln\left(1-\frac{2\delta}{7u}\right)+(u+2\delta+1)\ln\left(1-\frac{2\delta}{u+2\delta+1}\right)+2\delta\right)\\
	&\ge  \frac{1}{2}\exp\left(-\frac{2(2\delta)^2}{7u}-\frac{(2\delta)^2}{u+2\delta+1}\right)\ge\frac{1}{2}\exp\left(-\frac{8\delta^2}{7u}-\frac{4\delta^2}{u}\right)\\
	&=\frac{1}{2}\exp\left(\frac{-6\delta^2}{u}\right)\ge \frac{1}{2}\lambda^{-0.8},
	\end{align*}
	where in the last inequality, using $\ln(\lambda)/2\le \tilde{r}\le n^2\ln(\lambda)/(50(n-2k)^2)$, we estimate
	\begin{align*}
	\frac{6\delta^2}{u}&=\frac{6((n-2k)\tilde{r}/n+0.05\ln\lambda)^2}{k\tilde{r}/n}\\	 
	&= \frac{6n}{k}\left(\frac{(n-2k)^2\tilde{r}}{n^2}+\frac{2(n-2k)0.05\ln\lambda}{n}+\frac{(0.05\ln\lambda)^2}{\tilde{r}}\right)\\
	&\le \frac{6}{0.4}\left(\frac{\ln\lambda}{50}+0.2\cdot 0.1\ln\lambda+\frac{0.05^2\ln\lambda}{0.5}\right)<0.8\ln\lambda.
	\end{align*}
	Recalling $F(u+\delta)\ge \lceil\delta\rceil B(\lceil u+2 \delta\rceil)$ and $\lceil\delta\rceil B(\lceil u\rceil)\ge F(u)-F(u+\delta)$, we compute
	 $$F(u+\delta)\ge \lceil\delta\rceil B(\lceil u+2 \delta\rceil)\ge\lceil\delta\rceil (0.5\lambda^{-0.8})B(\lceil u+2 \delta\rceil)\ge(0.5\lambda^{-0.8})(F(u)-F(u+\delta)).$$ 
	Since $F(u)\ge 0.25$ and $\lambda>148$, we obtain 
	\[
	F(u+\delta)\ge \frac{0.5\lambda^{-0.8}F(u)}{1+0.5\lambda^{-0.8}}\ge \frac{0.5\cdot 0.25\cdot\lambda^{-0.8}}{1+0.5\cdot148^{-0.8}}>\frac{\lambda^{-0.8}}{9}.
	\]
	Using $\Pr(X^+-X^-\ge 0.05\ln\lambda )\ge \lambda^{-0.8}/36$, we bound
	\begin{align*}
	\Pr(\tilde{\Delta}(\lambda/2,k,r/2)\ge 0.05\ln\lambda\mid k)&\ge 1-(1-\lambda^{-0.8}/36)^{\lambda/2}\\&\ge 1-(1-148^{-0.8}/36)^{148/2}>0.02.
	\end{align*}
	Finally $E(\Delta\mid k)\ge 0.02\cdot 0.05\ln\lambda=10^{-3}\ln\lambda$.
\end{proof}

We now extend the lemma to the whole region of $n/\ln\lambda\le k< n/2$. If $k<2n/5$ the situation becomes easier because $4\le c_2(k)<100$ and every $r$ in the smaller range $[c_1(k)\ln\lambda,  \ln(\lambda)/2]$ provides at least an expected fitness improvement that is logarithmic in $\lambda$. Together with the previous lemma, we obtain the following statement for the drift in the whole region $n/\ln\lambda\le k<n/2$. 

\begin{lemma}\label{lem:far-region-drift}
	Let $n$ be sufficiently large, $n/\ln\lambda\le k< n/2$ and $\lambda\ge e^5$. If $c_1(k)\ln\lambda\le r\le c_2(k)\ln(\lambda)/100$ with $c_1(k), c_2(k)$ defined as in Lemma~\ref{lem:fardrift}, then 
	\[
	E(\Delta\mid k)\ge 10^{-3}\ln(\lambda)/\ln(e n/k).
	\]
	
\end{lemma}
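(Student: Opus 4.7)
The argument splits on the magnitude of $k$.

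\emph{Subcase $2n/5 \le k<n/2$.} In this range $\ln(en/k)$ is bounded above by the absolute constant $\ln(5e/2)<2$, so the target bound reduces to $E(\Delta\mid k)=\Omega(\ln\lambda)$ with an implicit constant of the order of $10^{-3}/2$. I would invoke Lemma~\ref{lem:super-far-region-rate-drift}, noting that the upper endpoint $c_2(k)\ln(\lambda)/100$ of the allowed range is precisely $n^2\ln(\lambda)/(25(n-2k)^2)$. For $r\ge \ln\lambda$ the lemma applies directly (its proof uses only the $r/2$-subpopulation), giving $E(\Delta\mid k)\ge 10^{-3}\ln\lambda$. For the remaining sub-interval $c_1(k)\ln\lambda\le r<\ln\lambda$, I would run the same Bernstein/Stirling computation as in the proof of Lemma~\ref{lem:super-far-region-rate-drift} but now applied to the high-rate subpopulation, with the role of $\tilde r$ taken by $2r$ instead of $r/2$. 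Because $c_1(k)>1/4$ whenever $k\ge 2n/5$, the rate $2r$ satisfies $2r\ge \ln(\lambda)/2$, which is the lower bound needed for the calculation to go through; a short reverification of the inequality $6\delta^2/u<\alpha\ln\lambda$ with $\alpha<1$ under the upper bound $r\le c_2(k)\ln(\lambda)/100$ is required, exploiting that in this regime $n-2k\le 3n/5$ is bounded away from $n$, and this delivers $E(\tilde\Delta(\lambda/2,k,2r)\mid k)\ge 10^{-3}\ln\lambda$.

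\emph{Subcase $n/\ln\lambda\le k<2n/5$.} Here $c_2(k)$ is bounded above by the absolute constant $100$, so the allowed range for $r$ is contained in $[c_1(k)\ln\lambda,\ln\lambda]$, and the target $10^{-3}\ln(\lambda)/\ln(en/k)$ can be a factor $\Theta(\ln\ln\lambda)$ smaller than $\ln\lambda$. I would redo the proof of Lemma~\ref{lem:super-far-region-rate-drift} for the $r/2$-subpopulation but with the drift target rescaled to $\delta:=(n-2k)r/(2n)+c\ln(\lambda)/\ln(en/k)$ for a sufficiently small absolute constant $c$. The same three ingredients---the anti-concentration bound $\Pr(X^+\ge E(X^+))\ge 1/4$ of~\cite{Doerr18exceedexp}, Robbins' sharp Stirling estimate, and the inequality $1-x\ge e^{-x-x^2}$ from Lemma~\ref{lem:elem}\,\ref{it:elemexplow}---then yield $\Pr(X^+-X^-\ge c\ln(\lambda)/\ln(en/k))\ge\lambda^{-\alpha}/9$ for some constant $\alpha<1$. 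Taking the maximum over the $\lambda/2$ independent offspring of the $r/2$-subpopulation, the probability that at least one achieves this gain is bounded below by a positive constant, giving the claimed bound on $E(\Delta\mid k)$.

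\textbf{Main obstacle.} The delicate step in both subcases is tracking the inequality ``$-6\delta^2/u$ stays above $-\alpha\ln\lambda$ with $\alpha<1$'' from the proof of Lemma~\ref{lem:super-far-region-rate-drift} under the shifted parameters. In the second subcase one must simultaneously invoke the lower bound $r\ge c_1(k)\ln\lambda$ (which, via $c_1(k)=1/(2\ln(en/k))$, lower-bounds $u=kr/(2n)$ by a matching $\ln(en/k)^{-1}$ factor and thereby cancels a factor of $\ln(en/k)$ from the numerator of $\delta^2/u$) and the upper bound $r\le c_2(k)\ln(\lambda)/100$ (to control the $(n-2k)r/n$-contribution to $\delta$). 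Verifying this uniformly over the entire $k$-range $[n/\ln\lambda,2n/5)$, and choosing the constant $c$ in the definition of $\delta$ consistently, is the main bookkeeping task.
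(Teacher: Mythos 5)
Your high-level architecture overlaps with the paper's, but the paper splits on $r$, not on $k$, and this difference is what makes your two patches fail. In the paper, if $r>\ln\lambda$ then the hypothesis $r\le c_2(k)\ln(\lambda)/100$ forces $c_2(k)>100$ and hence $k>2n/5$, so Lemma~\ref{lem:super-far-region-rate-drift} applies verbatim; all remaining cases ($r\le\ln\lambda$, any $k$) are handled by an elementary bound that needs no Bernstein/Stirling machinery at all. Your first subcase does not survive the ``short reverification'' you defer: with $\tilde r=2r$ the exponent $6\delta^2/u$ must stay below $\ln\lambda$, but its dominant term is $\frac{6n}{k}\cdot\frac{(n-2k)^2\tilde r}{n^2}$, and since $\tilde r=2r\le \frac{2n^2\ln\lambda}{25(n-2k)^2}$ (a factor $4$ above the bound on $\tilde r = r/2$ used inside Lemma~\ref{lem:super-far-region-rate-drift}), near the corner $k\approx 2n/5$, $r\approx\ln\lambda$ this term alone reaches $\frac{6}{0.4}\cdot\frac{2\ln\lambda}{25}=1.2\ln\lambda$, and with the cross and third terms about $1.5\ln\lambda$. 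The per-offspring success probability is then of order $\lambda^{-1.5}$, so over $\lambda/2$ offspring you get only $O(\lambda^{-1/2})$ instead of a constant, and the drift bound degenerates to $o(1)$ rather than $\Omega(\ln\lambda)$. (Incidentally, for $k\ge 2n/5$ one has $n-2k\le n/5$, not $3n/5$ as you write; but even the correct, stronger bound does not rescue the estimate, and shrinking the target constant $0.05$ does not help either, since the offending term comes from the irreducible drift-compensation part $(n-2k)\tilde r/n$ of $\delta$.)

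Your second subcase has a more structural problem: for $k$ near $n/\ln\lambda$ and $r$ near $c_1(k)\ln\lambda$, the mean $u=k\tilde r/n$ can be $o(1)$ (e.g.\ $u\approx 1/(4\ln(e\ln\lambda))$), while your target $c\ln(\lambda)/\ln(en/k)=\Theta(\ln\lambda/\ln\ln\lambda)$ is $\gg u$, so $\delta\gg u$. The Stirling/ratio argument of Lemma~\ref{lem:super-far-region-rate-drift} requires $2\delta/(3u)\le 2/3$, i.e.\ $\delta\le u$, to apply Lemma~\ref{lem:elem}\,\ref{it:elemexplow} to $(1-2\delta/(3u))^{2\delta}$, and the anti-concentration ingredient $\Pr(X^+\ge E(X^+))\ge 1/4$ fails when $E(X^+)<1$ (there $\Pr(X^+\ge E(X^+))=\Pr(X^+\ge 1)\approx u$). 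You are in a Poisson-type upper-tail regime, not the Gaussian regime that machinery addresses; the correct and much simpler tool is the paper's direct bound: with $i:=\max\{1,\lfloor c_1(k)\ln(\lambda)/2\rfloor\}\le\tilde r$ and $\tilde r=r/2\le\ln(\lambda)/2$, one has $Q(k,i,\tilde r)\ge\binom{k}{i}\bigl(\tfrac{\tilde r}{n}\bigr)^i\bigl(1-\tfrac{\tilde r}{n}\bigr)^n\ge \bigl(\tfrac{k}{en}\bigr)^i e^{-(1+o(1))\tilde r}\ge \tfrac1\lambda$, whence $\Pr(\tilde\Delta(\lambda/2,k,\tilde r)\ge i\mid k)\ge 1-(1-1/\lambda)^{\lambda/2}>0.3$ and $E(\Delta\mid k)\ge 0.3\cdot c_1(k)\ln(\lambda)/4$. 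This single computation covers both of your problematic parameter ranges simultaneously, which is exactly why the paper's case distinction is on $r$ rather than on $k$.
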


\begin{proof}
	If $r> \ln(\lambda)$, then $c_2(2n/5)=100$ implies $k> 2n/5$ and the claim follows from Lemma~\ref{lem:super-far-region-rate-drift}. Hence let us assume $r\le \ln(\lambda)$ in the remainder and compute $\tilde{\Delta}(\lambda/2,k,\tilde{r})$ with $\tilde{r}=r/2$.
	
	We consider the probability $Q(k,i,\tilde{r})$ of creating from a parent with distance $k$ an offspring with fitness distance at most $k-i$ via standard bit mutation with mutation rate $p=\tilde{r}/n$.  
	Let $i:= \max\{1,\lfloor c_1(k)\ln(\lambda)/2\rfloor\}\le \max\{1,r/2\}=\tilde{r}$. If $\lfloor c_1(k)\ln(\lambda)/2\rfloor<1$,
	using $(1-p)^n\ge e^{-pn-p^2n}\ge (1-o(1))e^{-\tilde{r}}$ by Lemma~\ref{lem:elem}\ref{it:elemexplow}, we compute
	\[
	Q(k,1,\tilde{r})\ge kp(1-p)^n
	\ge \frac{k\tilde{r}}{n }(1-p)^n\ge \frac{(1-p)^n}{\ln\lambda}\ge\frac{(1-o(1))e^{-\ln(\lambda)/2}}{\ln\lambda}\ge \frac{1}{\lambda}.
	\]
	Otherwise if $\lfloor c_1(k)\ln(\lambda)/2\rfloor>1$,
	using $\binom{k}{i}\ge (k/i)^{i}$, we obtain
	\begin{align*}
	Q(k,i,\tilde{r})& \ge\binom{k}{i}p^{i}(1-p)^n
	\ge \left(\frac{k}{i}\cdot\frac{\tilde{r}}{n}\right)^{i}e^{-\tilde{r}-p^2n}\ge\left(\frac{k}{n}\right)^{i}e^{-\tilde{r}-p^2n}\\
	&\ge \left(\frac{k}{en}\right)^{i}e^{-\tilde{r}}\ge \left(\frac{k}{en}\right)^{\frac{\ln\lambda}{4\ln(en/k)}}e^{-\tilde{r}}
	= e^{-\ln(\lambda)/4-\tilde{r}}\ge e^{-\ln \lambda}=\frac{1}{\lambda}.
	\end{align*}
	Hence, $\Pr(\tilde{\Delta}(\lambda/2,k,\tilde{r})\ge i\mid k)\ge 1-(1-1/\lambda)^{\lambda/2}\ge 1-e^{-1/2}> 0.3$. We notice that $i\ge c_1(k)\ln(\lambda)/4$, since $i=1\ge c_1(k)\ln(\lambda)/4$ when $c_1(k)\ln\lambda<2$ and $i=\lfloor c_1(k)\ln(\lambda)/2\rfloor\ge c_1(k)\ln(\lambda)/4$ when $c_1(k)\ln\lambda\ge 2$. Consequently 
	\[
	E(\Delta\mid k)> \Pr(\tilde{\Delta}(\lambda/2,k,\tilde{r})\ge i\mid k)\cdot i\ge 0.3\cdot c_1(k)\ln(\lambda)/4=(0.3/8)\ln(\lambda)/\ln(e n/k).
	\]
	\end{proof}

If we only consider generations that use a rate within the right region, we can bound the expected runtime to reach $k\le n/\ln\lambda$ by $O(n/\log \lambda)$  since the drift on the fitness is of order $\log\lambda$. The following theorem shows that the additional time spent with adjusting the rate towards the right region does not change this bound on the expected runtime.

\begin{theorem}
	\label{thm:expected-time-far}
	The \oplea with self-adjusting mutation rate reaches a \om-value of $k\le n/ \ln\lambda$ within an expected number of $O(n / \log\lambda)$ iterations. This bound is valid regardless of the initial mutation rate.
\end{theorem}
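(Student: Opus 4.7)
The strategy is to apply the variable drift theorem (Theorem~\ref{theo:variable-upper}) to the fitness distance process $k_t$ on the range $[n/\ln\lambda,\, n/2)$, reducing the problem to a lower bound on the expected per-iteration decrease of $k$. I would combine Lemma~\ref{lem:far-region-drift} with a lower bound on the probability that the current rate lies in the ``good interval''
\[
I(k) \;:=\; \bigl[c_1(k)\ln\lambda,\; c_2(k)\ln(\lambda)/100\bigr].
\]
Whenever $r_t \in I(k_t)$, Lemma~\ref{lem:far-region-drift} gives drift at least $h(k) := 10^{-3}\ln\lambda/\ln(en/k)$, and since the \oplea is elitist, the unconditional drift of $k_t$ is at least $p_0 \cdot h(k)$, where $p_0 := \inf_t \Pr(r_t \in I(k_t) \mid k_t \in [n/\ln\lambda,n/2))$. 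Provided $p_0 = \Omega(1)$, the variable drift theorem yields
\[
E(T) \;\le\; O(1)\int_{1}^{n/2} \frac{\ln(en/k)}{\ln\lambda}\,\mathrm{d}k \;=\; O\!\left(\frac{n}{\log\lambda}\right),
\]
where the integral is evaluated by the substitution $y = en/k$ followed by integration by parts.

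The main technical step is showing $p_0 = \Omega(1)$. I would analyze the log-rate $\psi_t := \log_2 r_t$, which performs a $\pm 1$ walk on $\{1,\dots,\lfloor\log_2(n/4)\rfloor\}$. By Lemma~\ref{lem:fardrift}(a), whenever $r_t \le c_1(k_t)\ln\lambda$, the winner's mutation rate equals $2 r_t$ with probability at least $0.64$; combined with the 50/50 random half of the update rule this gives $\Pr(\psi_{t+1} = \psi_t + 1) \ge \tfrac12\cdot 0.64 + \tfrac12\cdot\tfrac12 = 0.57$, i.e., upward drift at least $0.14$. Symmetrically, Lemma~\ref{lem:fardrift}(b) gives a weak downward drift whenever $r_t \ge c_2(k_t)\ln\lambda$, and the Bernstein-type estimate behind Lemma~\ref{lem:fardrift}(c) strengthens this to a constant downward drift once $r_t \ge C c_2(k_t)\ln\lambda$ for a sufficiently large constant $C$, because then with probability $1-o(1)$ every best offspring was created with the smaller rate $r_t/2$.

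The hard part is that $I(k_t)$ itself shifts as $k_t$ decreases. I would handle this by applying Lemma~\ref{lem:occupation-rt} to the shifted process $\max\{0,\lfloor\log_2(r_t/(C c_2(k_t)\ln\lambda))\rfloor\}$ to bound upward deviations of $r_t$. A crucial observation is that $c_2(k_t)$ is monotonically decreasing as $k_t$ falls toward $n/\ln\lambda$, so the drift condition required by Lemma~\ref{lem:occupation-rt} continues to hold as $k_t$ evolves. A symmetric and easier argument (the drift is stronger) controls downward deviations of $r_t$ below $c_1(k_t)\ln\lambda$. For a suitably large constant $b$, both tails then have mass at most a small constant outside the factor-$2^b$ inflation of $I(k_t)$, and a single extra halving or doubling from the random half of the update rule (each occurring with probability $1/4$) brings $r_t$ inside $I(k_t)$ with constant probability. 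This establishes $p_0 = \Omega(1)$.

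Finally, for an arbitrary starting rate $r^{\init} \in [2, n/4]$, the constant-positive drifts above drive the rate into a neighborhood of $I(k_0)$ within $O(\log n)$ generations; since $\lambda = n^{O(1)}$ implies $\log n = O(\log \lambda) = o(n/\log\lambda)$, this burn-in is absorbed into the $O(n/\log\lambda)$ bound, completing the proof.
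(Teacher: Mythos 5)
Your proof skeleton coincides with the paper's (variable drift on the fitness, rate drift from Lemma~\ref{lem:fardrift}(a),(b), an occupation-probability argument, random steps to enter the narrow interval of Lemma~\ref{lem:far-region-drift}, and an $O(\log n)$ burn-in absorbed via $\lambda = n^{O(1)}$), but the step you yourself identify as the hard part --- controlling $r_t$ relative to the \emph{moving} interval $I(k_t)$ --- contains a genuine gap. Lemma~\ref{lem:occupation-rt} rests on Theorem~\ref{lem:occupationLemma-cw}, which requires a fixed threshold, a deterministic step-size bound $\lvert X_t - X_{t+1}\rvert \le c$, and additive drift toward $0$. Your shifted process $\max\{0,\lfloor\log_2(r_t/(C c_2(k_t)\ln\lambda))\rfloor\}$ satisfies none of these automatically: since $c_2(k)=4n^2/(n-2k)^2$, a single fitness improvement of size $\Delta$ increases the process by up to $2\log_2\bigl(1+2\Delta/(n-2k_t)\bigr)$, which near $k\approx n/2$ (where $n-2k_t$ may be a small constant while improvements of size $\Theta(\ln\lambda)$ occur) is unbounded, so the bounded-difference hypothesis fails. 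Moreover, the monotone decrease of $c_2(k_t)$ works \emph{against} you, not for you: every fitness improvement pushes the shifted process upward, and in the band just above $c_2(k_t)\ln\lambda$ (below the threshold $2(1+\gamma)c_2(k_t)\ln\lambda$ of part~(c)) improvements still occur with constant probability, while the downward drift supplied by Lemma~\ref{lem:fardrift}(b) is only $0.505-0.495=0.01$ per step --- so net drift toward $0$ cannot be asserted without new estimates. (Your attempted strengthening via part~(c) also overreaches: that part asserts all offspring are \emph{worse than the parent}, not that the winner stems from the $r/2$-subpopulation, and its guarantee is $1-\lambda^{-\gamma}$, which is not $1-o(1)$ for constant $\lambda$; in any case part~(b)'s $0.51$ is what the occupation argument needs.) The paper circumvents exactly this obstacle differently: it partitions the run into the distinct fitness levels $k_0>k_1>\dots>k_N$, applies the occupation lemma only while the level (hence the threshold) is fixed, and bounds the total readjustment time after level changes by the telescoping sum $\sum_i O\bigl(\log(c_2(k_{s_i})/c_2(k_{s_{i+1}}))\bigr)=O\bigl(\log(c_2(k_0)/c_2(k_N))\bigr)=O(\log n)$; monotonicity is invoked only for $c_1$, to show the \emph{lower} boundary never needs readjusting once met.

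A second, smaller gap: Lemmas~\ref{lem:fardrift} and~\ref{lem:far-region-drift} are stated only for $k<n/2$ (indeed $c_2(k)$ blows up at $k=n/2$), yet the random initial search point has $k\ge n/2$ with probability $1/2$, so your variable-drift integral does not cover the start of the run. The paper closes this with a separate additive-drift argument: for $k\ge n/2$ each iteration gains a fitness level with constant probability at \emph{any} rate, and $E(\max\{0,k_0-n/2\})\le \sqrt{n}/4$ for the binomially distributed initial distance, contributing only $O(\sqrt{n})=O(n/\log\lambda)$ extra iterations. Your drift integral itself, the $0.57$ computation for the upward rate drift, and the absorption of the $O(\log n)$ burn-in are correct and match the paper.
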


\begin{proof}
  Let us denote by $k(x)$ the fitness distance of a search point $x \in \{0,1\}^n$.
  
	We first argue that it takes an expected number of at most $O(\sqrt{n})$ iterations to reach a fitness distance of less than $n/2$. To this end, consider a parent $x$ with fitness distance $k(x) \ge n/2$. Let $2\le r \le n/4$ be the current rate and let $\tilde r \in \{r/2,2r\}$. Let $y$ be an offspring obtained from $x$ via standard bit mutation with mutation rate $\tilde r/n$. Let 
	\begin{align*}
	X^+ &:= |\{i \in \{1, \dots, n\} \mid x_i=1 \wedge y_i =0\}|,\\
	X^- &:= |\{i \in \{1, \dots, n\} \mid x_i=0 \wedge y_i =1\}|. 
	\end{align*}
Then the fitness improvement is $k(x)-k(y) = X^+ - X^-$ and both $X^+$ and $X^-$ follow binomial distributions. From elementary properties of the binomial distribution, see, e.g.,~\cite{Doerr18exceedexp}, we have $\Pr(X^+ \ge E(X^+) + 1) = \Omega(1)$ and $\Pr(X^- \le E(X^-)) = \Omega(1)$, and these are independent events. Hence with constant probability, we have 
\begin{align*}
k(y) &= k(x) - X^+ + X^- \le k(x) - E(X^+) - 1 + E(X^-) \\
&\le k(x) - 1 - (2k(x)-n)\tfrac{\tilde r}n \le k(x) - 1.
\end{align*}
	
	Clearly, for the best offspring $z$ out of the $\lambda$ offspring generated in this iteration, we have $k(z) \le k(y)$. Consequently, in each iteration starting with a parent~$x$ with $k(x) \ge n/2$, with constant probability we gain at least one fitness level. By the additive drift theorem (see Theorem~\ref{theo:variable-upper} and the subsequent discussion), from the initial random search point $x_0$ it takes an expected $O(\max\{0,k(x_0) - n/2 + 1\})$ iterations to reach a parent with fitness distance less than $n/2$.
	Since $X := k(x_0) \sim \Bin(n,1/2)$, we have 
	\begin{align*}
	E(\max\{0,X-n/2\}) &= \tfrac 12 E(|X - E(X)|) \le \tfrac 12 \sqrt{E((X- E(X))^2)} \\
	&= \tfrac 12 \sqrt{\Var(X)} = \tfrac 14 \sqrt n,
	\end{align*}
	where we first exploited the symmetry of $X$ and then the well-known estimate $E(Y)^2 \le E(Y^2)$ valid for all random variables $Y$, in particular, for $Y = |X - E(X)|$. By the law of total expectation, the expected time to reach a search point with $k$-value below $n/2$ is $O(\sqrt n)$.
	
	Without loss of generality we can now assume $k<n/2$ for the initial state. Our intuition is that once we begin to use a rate $r\in [(c_1(k)/2)\ln\lambda,c_2(k)\ln\lambda]$ at some distance level $k$, we will have a considerable drift on the \om-value and the strong drift on the rate keeps $r$ within or close to this interval. After we make progress and $k$ decreases to a new level, the corresponding $c_1$ and $c_2$ decrease, and the algorithm may take some time to readjust $r$ into new bounds.
	
	We consider the stochastic process $X_t=\lfloor\log_2(r_t)\rfloor$ and the current \om-value $Y_t$. According to Lemma~\ref{lem:fardrift} \ref{far-rate-inc} and \ref{far-rate-dec}, there exists $\eps=\Omega(1)$ such that
	\begin{align*}
		&\Pr(X_{t+1}-X_t\mid X_t;X_t < \log_2(c_1(K_t)\ln\lambda)-1)\ge \eps,\\
		&\Pr(X_t-X_{t+1}\mid X_t;X_t > \log_2(c_2(K_t)\ln\lambda))\ge \eps.
	\end{align*} 
		
Let $k_0>k_1>\dots>k_N$ be all the different \om-values taken by $Y_t$ until for the first time $Y_t\le n/\ln\lambda$. 
By the additive drift theorem, it takes at most $O(\log n)$ iterations to have $r_t\in [(c_1(k_0)/2)\ln\lambda,c_2(k_0)\ln\lambda]$, regardless of how we set the initial rate. Since  $c_1(Y_t)$ is non-increasing,  $r_t\ge c_1(Y_t)\ln(\lambda)/2$  implies $r_t\ge c_1(Y_{t+1})\ln(\lambda)/2$. Thus, no readjustments are necessary to satisfy the lower bound $r_t\ge c_1(Y_t)\ln(\lambda)/2$ once we have $r_t\ge c_1(k_0)\ln(\lambda)/2$. We now regard the upper bound condition $r_t\le c_2(Y_t)\ln(\lambda)$.
Let $(k_{s_i})$ be the subsequence consisting of all $k_{s_i}$ such that $\{t\mid r_t\le c_2(Y_t)\ln(\lambda),Y_t=k_{s_i}\}\neq \emptyset$.
If $k_i$ is in the subsequence, once $r_t\le c_2(Y_t)\ln(\lambda)$ is achieved, the runtime until the first time $Y_t=k_{i+1}$ can be computed using the occupation lemma and the variable drift theorem. After that, it takes at most $O(\log(c_2(k_i)/c_2(k_{i+1})))$ more iterations to readjust $r_t$ from $c_2(k_i)\ln(\lambda)$ to $c_2(k_{i+1})\ln(\lambda)$ by the additive drift theorem. Similar arguments hold for $k_{i+1}$ if it  is also in the subsequence. Otherwise let $k_j<k_{i+1}$ denote the next \om-values after $k_i$ in the subsequence. By definition, the fitness distance decreases from $k_{i+1}$ to $k_j$ before the rate being adjusted into the corresponding range. This means that improvements in distance are made during readjustment. Analogous to above, the expected (readjusting) time to decrease the rate from $c_2(k_{i+1})\ln(\lambda)$ to $c_2(k_{j})\ln(\lambda)$ is $O(\log(c_2(k_{i+1})/c_2(k_{j})))$. Therefore, the expected number of total readjusting time is at most

\[
\sum_{ i=1}^{s_{i+1}\le N}O\left(\log\left(\frac{c_2(k_{s_i})}{c_2(k_{s_{i+1}})}\right)\right)=O\left(\log\left(\frac{c_2(k_0)}{c_2(k_N)}\right)\right)=O(\log n).
\]

When computing the expected number of non-adjusting iterations until $Y_t=k_N$, we choose a constant $b\in\N_{\ge 2}$ large enough such that $2e^{-2b\eps/3}\le 1/2-\delta/2$ holds for some positive constant $\delta>0$. When $r_t\in [(c_1(k)/2)\ln\lambda,c_2(k)\ln\lambda]$ and $Y_t=k$ for some $t$, then by Lemma~\ref{lem:occupation-rt}, we obtain 
\begin{align*}
\Prob(r_{t'}\ge 2^{b+1}c_2(k)\ln\lambda)\le 2e^{-2b\eps/3} \text{ and } \Prob(r_{t'}\le 2^{-b-2}c_1(k)\ln\lambda)\le 2e^{-2b\eps/3} \text{ for }t'\ge t.
\end{align*}
Hence, we obtain that $r_{t'}\in[2^{-b-2}c_1(k)\ln\lambda,2^{b+1}c_2(k)\ln\lambda]$ happens with probability at least $\delta$.
We see that there are at most $\lceil\log_2(100)\rceil$ steps between being in the range $[(c_1(k)/2)\ln\lambda, c_2(k)\ln\lambda]$ and being in the smaller range $[c_1(k)\ln\lambda, c_2(k)\ln(\lambda)/100]$ which is described in Lemma \ref{lem:far-region-drift}. If $r_t\in[2^{-b-2}c_1(k)\ln\lambda,2^{b+1}c_2(k)\ln\lambda]$, it takes at most a constant number of iterations $\alpha$ in expectation to reach $[c_1(k)\ln\lambda, c_2(k)\ln(\lambda)/100]$ because our mutation scheme employs a $50\%$ chance to perform a random step of the mutation rate. Based on Lemma~\ref{lem:far-region-drift}, the fitness drift at distance $k$ of all rates within the narrow region is at least $10^{-3}\ln(\lambda)/\ln(en/k)$. This contributes to an average drift of at least
\[
10^{-3}\cdot \frac{\ln(\lambda)}{\ln(en/k)}\cdot\frac{\delta}{1+\alpha}=\Omega\left(\frac{\log(\lambda)}{\log(n/k)}\right)
\] 
for all random rates at distance $k$. Using the variable drift theorem (Theorem~\ref{theo:variable-upper}), we estimate the runtime as

\begin{align*}
&O\left(\frac{\log(\log\lambda)}{\log\lambda}+\int_{n/\log\lambda}^{n/2}\frac{\log(n/k)\mathrm{d}k}{\log\lambda}\right)
=O\left(\frac{1}{\log\lambda}\int_{n/\log\lambda}^{n/2}\bigg(\log(n)-\log (k)\bigg)\mathrm{d}k\right)\\
=&O\left(\frac{1}{\log\lambda}\bigg(\log(n)k-k\log(k)+\log(k)\bigg)\bigg|_{n/\log\lambda}^{n/2}\right)\\
=&O\left(\frac{1}{\log\lambda}\left(\log(n)\left(\frac{n}{2}-\frac{n}{\log\lambda}\right)-\frac{n}{2}\log\left(\frac{n}{2}\right)+\frac{n}{\log\lambda}\log\left(\frac{n}{\log\lambda}\right)+\log\left(\frac{n/2}{n/\log\lambda}\right)\right)\right)\\
=&O\left(\frac{1}{\log\lambda}\left(\frac{n\log2}{2}-\frac{n\log(\log\lambda)}{\log\lambda}+\log(\log\lambda)\right)\right)
=O\left(\frac{n}{\log\lambda}\right).
\end{align*}
We notice that the expected runtime for adjusting $r_t$ is $O(\log n)=O(n/\log\lambda)$. Therefore, the total runtime is $O(n/\log\lambda)$ in expectation.

\end{proof}

\section{Middle Region}

In this section we estimate the expected number of generations until the number of one-bits
 has decreased from $k\le n/\ln\lambda$ to $k\le n/\lambda$. 
We first claim that the right region for $r$ is $1\le r\le \ln(\lambda)/2$. Hence, 
the \oplea is not very sensitive to the choice of~$r$ here. Intuitively, 
this is due to the fact that a total fitness improvement of only $O(n/\log\lambda)$ suffices  
to cross the middle region, whereas an improvement of $\Omega(n)$ is needed for the 
far region.  

We estimate the drift of the fitness in Lemma~\ref{lem:middle-region-drift} and apply 
that result afterwards to estimate the number of generations to cross the region.

\begin{lemma}\label{lem:middle-region-drift}
Let $n/\lambda\le k\le n/\ln\lambda$, $\lambda\geq 26$
and $2\le r\le \ln\lambda$.
Then \[E(\Delta\mid k)\ge \min\left\{\frac{1}{8},\frac{\sqrt{\lambda} k}{32n}\right\}.\]
\end{lemma}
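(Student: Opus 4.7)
The plan is to discard the larger-rate subpopulation entirely and extract the full required drift from one-bit improvements produced by the smaller subpopulation of size $\lambda/2$, which uses mutation probability $\tilde r/n$ with $\tilde r := r/2 \in [1,(\ln\lambda)/2]$. The probability that a single such offspring flips exactly one of the $k$ one-bits and leaves the remaining $n-1$ bits unchanged is
\[
q(k,1,\tilde r) = k\cdot\frac{\tilde r}{n}\cdot\Bigl(1-\frac{\tilde r}{n}\Bigr)^{n-1}.
\]
Applying Lemma~\ref{lem:elem}\ref{it:elemexplow} gives $(1-\tilde r/n)^{n-1} \ge e^{-\tilde r-\tilde r^2/n} = (1-o(1))\, e^{-\tilde r}$, where $\tilde r^2/n = o(1)$ follows from $\lambda = n^{O(1)}$ (the assumption used throughout the paper). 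Hence $q(k,1,\tilde r) \ge (1-o(1))\, k\tilde r e^{-\tilde r}/n$.

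The key geometric step is to lower bound $\tilde r e^{-\tilde r}$ uniformly on $[1,(\ln\lambda)/2]$. Since $t\mapsto te^{-t}$ is decreasing on $[1,\infty)$, its minimum on this interval is attained at the right endpoint $\tilde r = (\ln\lambda)/2$ and equals $(\ln\lambda)/(2\sqrt\lambda)$. Substituting back, $q(k,1,\tilde r) \ge (1-o(1))\, k\ln\lambda/(2n\sqrt\lambda)$. Viewing the $\lambda/2$ offspring of the smaller subpopulation as independent Bernoulli trials and setting $y := (\lambda/2)\, q(k,1,\tilde r) \ge (1-o(1))\, k\sqrt\lambda\, \ln\lambda/(4n)$, the probability that at least one such offspring constitutes a one-bit improvement is at least $1-(1-q(k,1,\tilde r))^{\lambda/2}\ge 1-e^{-y}$. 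Since $\Delta$ is a non-negative integer, this yields $E(\Delta\mid k)\ge \Pr(\Delta\ge 1\mid k) \ge 1-e^{-y}$.

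To close, I would invoke the elementary two-piece bound $1-e^{-y}\ge \min\{y/2,\,1/2\}$, which holds because $1-e^{-y}\ge y/2$ on $[0,1]$ and $1-e^{-y}\ge 1-1/e\ge 1/2$ for $y\ge 1$, and perform a short case split. If $y\ge 1$, then $E(\Delta\mid k)\ge 1/2\ge 1/8\ge \min\{1/8,\sqrt\lambda k/(32n)\}$. If $y<1$, then $E(\Delta\mid k)\ge y/2\ge (1-o(1))\,k\sqrt\lambda\,\ln\lambda/(8n)$, which is at least $\sqrt\lambda k/(32n)$ as soon as $\ln\lambda \ge 1/4 + o(1)$, comfortably satisfied by $\lambda\ge 26$. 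In either case the target bound $E(\Delta\mid k)\ge \min\{1/8,\sqrt\lambda k/(32n)\}$ is established.

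The main obstacle is more conceptual than computational: one must recognize that the upper-rate subpopulation is useless in this region (it typically worsens fitness) and that higher-order improvements from the lower subpopulation are asymptotically dominated by single one-bit flips, so committing to this single event yields the cleanest calculation. Once the worst case of $\tilde r$ is identified as the upper endpoint $(\ln\lambda)/2$ — so that the factor $e^{-\tilde r}$ contributes exactly the $1/\sqrt\lambda$ appearing in the target bound — the remainder is routine bookkeeping of constants.
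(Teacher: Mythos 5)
Your proof is correct, but it takes a genuinely different route from the paper's. The paper argues in two stages: it first applies a Chernoff bound to show that with probability at least $1/4$ the $r/2$-subpopulation contains $\lambda_0=\sqrt{\lambda}/4$ offspring that flip no zero-bit, and then bounds the expected maximum $E(X^*)$ of the corresponding $\lambda_0$ binomial counts of flipped one-bits via a generalization of Lemma~4 of \cite{GiessenWittALGO16} (namely $E(X^*)\ge\alpha/(1+\alpha)$ for $\alpha=\lambda_0 k\tilde{r}/n$), arriving at $E(\Delta\mid k)\ge (1/4)\min\{1/2,\sqrt{\lambda}k/(8n)\}$. You instead bound the probability of the single event that some $r/2$-offspring performs an exact one-bit improvement, using the closed form $q(k,1,\tilde{r})=k(\tilde{r}/n)(1-\tilde{r}/n)^{n-1}$ and the monotonicity of $t\mapsto te^{-t}$ on $[1,\infty)$ to identify the worst rate $\tilde{r}=(\ln\lambda)/2$, and then convert $\Pr(\Delta\ge 1\mid k)\ge 1-e^{-y}$ into the claimed drift via the two-piece bound $1-e^{-y}\ge\min\{y/2,1/2\}$. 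Your route is more elementary and self-contained (no conditioning step, no external lemma on maxima of binomials), and it even yields somewhat stronger constants, including an extra factor of order $\ln\lambda$ in the small-$k$ branch; since the target bound never exceeds $1/8$, certifying only progress of size one loses nothing here, whereas the paper's $E(X^*)$-based argument would be the one to use if progress larger than $1$ had to be captured. One point to flag: your $(1-o(1))$ factors and the appeal to $\lambda=n^{O(1)}$ are not hypotheses of the lemma as stated, so strictly speaking your bound holds for $n$ sufficiently large; this is, however, consistent with the paper's own conventions — its proof of this very lemma asserts $(1-\tilde{r}/n)^{n-k}\ge e^{-\tilde{r}}$, which likewise requires $\tilde{r}^2/n$ to be negligible, and the lemma is only ever invoked under the assumption $\lambda=n^{O(1)}$ of Theorem~\ref{thm:expected-time-middle}.
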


\begin{proof}
We aim at computing $\tilde{\Delta}(\lambda/2,k,\tilde{r})$ with $\tilde{r}=r/2$.
The probability that no zero-bit flips in a single offspring stemming from rate~$\tilde{r}$ 
is $(1-\tilde{r}/n)^{n-k}\ge e^{-\tilde{r}}\ge 1/\sqrt{\lambda}$.
We regard the number $Z$ of offspring of rate~$\tilde{r}$ that have no flipped zeros.
The expectation $E(Z)$ is at least $(1/\sqrt{\lambda})\cdot(\lambda/2)=\sqrt{\lambda}/2$.
Applying Chernoff bounds (Theorem~\ref{thm:chernoff}),
we observe that $Z$ exceeds $\lambda_0:=\sqrt{\lambda}/4$
with probability at least $1-\exp(-\sqrt{\lambda}/16) > 1/4$
since $\lambda\geq 26$.
Assuming this to happen,
we look at the first $\lambda_0$ offspring without flipped zeros.
For $i\in\{1,\dots,\lambda_0\}$ let $X_i$ be the number of flipped ones
in the $i$-th offspring.
Then the $X_i$ are i.i.d.\ with $X_i\sim\Bin(k,\tilde{r}/n)$.
Let $X^*\coloneqq \max\{X_i\}$.
The expectation of~$X^*$ is analyzed in \citet[Lemma~4, part~2]{GiessenWittALGO16};  
more precisely the following statement is shown for constant~$\tilde{r}$:
\[
\text{If~} \frac{\lambda_0k\tilde{r}}{n}\ge \alpha \text{~then~} E(X^*)\ge \frac{\alpha}{1+\alpha}.
\]
Since $\tilde{r}$ is not necessarily assumed constant here, we generalize the result by closely following the proof in~\cite{GiessenWittALGO16}. 
Note that $X^*\ge 1$ if at least one of the offspring does not flip a zero-bit. Hence,
\begin{align*}
E(X^*) & \ge 1- \left(\left(1-\frac{\tilde{r}}{n}\right)^k\right)^{\lambda_0}
 \ge 1- \left(\left(1-\frac{\tilde{r}}{n}\right)^{\alpha n /(\lambda_0 \tilde{r})}\right)^{\lambda_0}\\
& \ge 1- \left(e^{-\alpha/\lambda_0}\right)^{\lambda_0} = 1-e^{-\alpha} \ge 1 - \frac{1}{1+\alpha} = \frac{\alpha}{1+\alpha},
\end{align*}
where the second inequality used the assumption $\lambda_0k\tilde{r}/n\ge \alpha$, the third one $e^{x}\ge 1+x$ for $x\in \R$.  
and the fifth one the equivalent to the previous inequality $e^{-x}\le 1/(1+x)$. Hence, 
Lemma~4, part~2 in \cite{GiessenWittALGO16} holds also for arbitrary~$\tilde{r}$.

Setting $\alpha\coloneqq \lambda_0k\tilde{r}/n$, we now distinguish between two cases. 
If $\alpha\ge 1$, we obtain $E(X^*)\ge \alpha/(1+\alpha)\ge 1/2$; 
otherwise we have $1+\alpha<2$, hence $E(X^*)\ge \alpha/2 = \lambda_0k\tilde{r}/(2n)$.
Thus,
\[E(X^*)\ge \min\{1/2,\sqrt{\lambda}k/(8n)\}.\]
Hence, using the law of total probability,
we obtain the lower bound on the drift for the middle region.
\end{proof}

We now use our result on the drift to estimate the time spent in this region. We notice that $c_2(k)=4+o(1)$ when $k=o(n)$. This means we will often have 
$r_t\in[2,\ln(\lambda)]$ which provides the drift we need.

\begin{theorem}\label{thm:expected-time-middle}
	Let $\lambda\ge 26$ and $\lambda=n^{O(1)}$. Assume $k\le n/\ln\lambda$ for the current \om-value of the self-adjusting \oplea. Then the expected number of generations until $k\le n/\lambda$ is $O( n/\log\lambda)$.
\end{theorem}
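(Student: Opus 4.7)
My plan is a variable-drift analysis (Theorem~\ref{theo:variable-upper}) on the fitness distance $Y_t\in[n/\lambda,n/\ln\lambda]$, with per-step drift taken from Lemma~\ref{lem:middle-region-drift}. Since that lemma requires $r_t\le\ln\lambda$, the technical heart is to show $\Pr(r_t\le\ln\lambda)=\Omega(1)$ uniformly in~$t$.

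First I would bound $r_t$ from above. Throughout the middle region $k=o(n)$, so $c_2(k)=4n^2/(n-2k)^2=4+o(1)$. Hence Lemma~\ref{lem:fardrift}\ref{far-rate-dec} applies whenever $r_t\ge 5\ln\lambda$ (say) and yields that the winning rate is $r_t/2$ with probability at least $0.51$. Combined with the $50\%$ random-adjustment branch we obtain $\Pr(r_{t+1}<r_t\mid r_t\ge 5\ln\lambda)\ge 1/2+\varepsilon$ for some constant $\varepsilon>0$. By Theorem~\ref{thm:expected-time-far} the middle region is entered with $r_t=O(\ln\lambda)$, and Lemma~\ref{lem:occupation-rt} applied with $a=5\ln\lambda$ and a sufficiently large constant $b$ delivers a constant $C$ with $\Pr(r_t\ge C\ln\lambda)\le 1/4$ uniformly in~$t$.

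Next I would upgrade this to $\Pr(r_t\le\ln\lambda)=\Omega(1)$. The $50\%$ random-adjustment branch halves the rate with probability at least $1/4$ per iteration, independently of the history. Starting from any state $r_{t-m}\le C\ln\lambda$ with $m:=\lceil\log_2 C\rceil$, the event of $m$ consecutive random halvings has probability $(1/4)^m=\Omega(1)$ and brings the rate to at most $\ln\lambda$ (clipped at the lower boundary~$2$ if needed). Combined with the previous step this gives $\Pr(r_t\le\ln\lambda)\ge\delta$ for some constant $\delta>0$ and all $t$ after an $O(1)$ burn-in. Since selection is elitist, Lemma~\ref{lem:middle-region-drift} then yields the unconditional drift
\[E(Y_t-Y_{t+1}\mid Y_t=k)\ge\delta\min\!\bigl\{\tfrac{1}{8},\tfrac{\sqrt{\lambda}\,k}{32n}\bigr\}=:h(k).\]

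Plugging $h$ into Theorem~\ref{theo:variable-upper} with $x_{\min}=n/\lambda$ and $x'=n/\ln\lambda$, and splitting the integral at the breakpoint $k^\star=4n/\sqrt{\lambda}$ of the minimum, a routine computation gives
\[E(T)=O\!\bigl(\tfrac{n\ln\lambda}{\sqrt{\lambda}}+\tfrac{n}{\ln\lambda}\bigr)=O\!\bigl(\tfrac{n}{\ln\lambda}\bigr),\]
where the last equality uses that $(\ln\lambda)^2/\sqrt{\lambda}$ is bounded by a universal constant on $[26,\infty)$ (its global maximum $16/e^2$ on $[1,\infty)$ is attained at $\lambda=e^4$); the initial term $x_{\min}/h(x_{\min})=O(n/\sqrt{\lambda})$ is subsumed since $\sqrt{\lambda}\ge\ln\lambda$ for $\lambda\ge 26$. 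The main obstacle is the coupling of the two Markov processes: Step~2 produces a bound on the marginal of $r_t$, whereas variable drift formally requires $\Pr(r_t\le\ln\lambda\mid Y_t=k)=\Omega(1)$. I would resolve this by invoking the strong Markov property at the first hitting time of each fitness level and observing that the rate chain equilibrates on an $O(1)$ timescale while $Y_t$ stays at level $k$ for $\Omega(1/h(k))$ iterations in expectation, so that the marginal occupation bound transfers to the required conditional one.
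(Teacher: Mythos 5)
Your proposal follows essentially the same route as the paper's proof, ingredient for ingredient: the uniform bound $c_2(k)=4+o(1)$ in the middle region so that Lemma~\ref{lem:fardrift}~\ref{far-rate-dec} plus the random-adjustment branch gives a constant downward drift of the log-rate above a threshold of order $\ln\lambda$; Lemma~\ref{lem:occupation-rt} to obtain a constant occupation probability below $C\ln\lambda$; a constant number of forced random halvings to reach $r_t\le\ln\lambda$ with probability $\Omega(1)$; the fitness drift $\delta\min\{1/8,\sqrt{\lambda}k/(32n)\}$ from Lemma~\ref{lem:middle-region-drift}; and variable drift (Theorem~\ref{theo:variable-upper}) with $\xmin=n/\lambda$, split at $k=\Theta(n/\sqrt{\lambda})$, using $(\log\lambda)^2=O(\sqrt{\lambda})$ to absorb the $n\log\lambda/\sqrt{\lambda}$ term. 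Even your final worry about marginal versus conditional occupation bounds is treated in the paper at exactly the level of your sketch (``an average drift \ldots over all random rates at distance $k$''); the substantive point making this sound in the middle region---which both you and the paper rely on---is that the threshold $c_2(k)\ln\lambda\le 11\ln\lambda$ does not move with $k$, so the rate drift holds conditioned on the entire history, in contrast to the far region. (One cosmetic omission: the paper truncates the process by identifying all states below $n/\lambda$ with $0$ so that Theorem~\ref{theo:variable-upper} applies with $\xmin=n/\lambda$; your computation implicitly assumes this.)

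The one step that fails as written is your claim that ``by Theorem~\ref{thm:expected-time-far} the middle region is entered with $r_t=O(\ln\lambda)$,'' and the consequent ``$O(1)$ burn-in.'' Theorem~\ref{thm:expected-time-far} bounds only the number of generations to reach $k\le n/\ln\lambda$ and says nothing about the rate at that moment; moreover, the statement you are proving assumes only $k\le n/\ln\lambda$ for the current search point, so the current rate may be as large as $n/4$. Since Lemma~\ref{lem:occupation-rt} only applies from the first hitting time of $r_t\le a$, you must first bring the rate down, and from $r=n/4$ this takes $\Theta(\log n)$ expected halvings, not $O(1)$. The paper closes exactly this gap by an additive drift argument on $\lceil\log_2 r_t\rceil$ (constant downward drift above $c_2(k)\ln\lambda$, again by Lemma~\ref{lem:fardrift}~\ref{far-rate-dec}), paying $O(\log n)$ expected generations---and this is precisely where the hypothesis $\lambda=n^{O(1)}$ is used, to conclude $O(\log n)=O(n/\log\lambda)$. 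You already have all the tools for this repair; with it, your argument coincides with the paper's proof.
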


\begin{proof}
For $k\le n/\ln\lambda$ the upper bound from Lemma~\ref{lem:fardrift} is $c_2(k)=4/(1-2/\ln\lambda)<11$.
According to the lemma
we have for $X_t\coloneqq \lceil\log_2 r_t\rceil$ that
$E(X_t-X_{t+1}\mid X_t;X_t > \log_2(c_2(k)\ln\lambda))\ge 2\eps=\Omega(1)$.
The additive drift theorem yields that in $O(\log n)$ time
we have $r_t\le c_2(k)\ln\lambda$.
We choose $b$ large enough
such that $2e^{-2b\eps/3}\le 1-\delta$
holds for some positive constant $\delta>0$
and note that $b$ is constant.
Applying Lemma~\ref{lem:occupation-rt}
we obtain $\Prob(r_{t}\ge 2^bc_2(k)\ln\lambda)\le 2e^{-2b\eps/3}$
and $r_t\le 2^bc_2(k)\ln\lambda$ happens with probability at least $\delta$.
Once $r_t<2^bc_2(k)\ln\lambda<2^b(11\ln\lambda)$
it takes at most a constant number of iterations $\alpha$ in expectation
to draw $r_t$ to $\ln(\lambda)$ or less.
According to Lemma~\ref{lem:middle-region-drift}
this ensures a drift of at least $(1/4)\min\{1/2,\sqrt{\lambda}k/(8n)\} \ge (1/32)\min\{1,\sqrt{\lambda}k/n\}$,
which implies an average drift of at least $c\min\{1,\sqrt{\lambda}k/n\}$
over all random rates at distance $k$,
where $c>0$ is a constant.  Considering $\min\{1,\sqrt{\lambda}k/n\}$, 
the minimum is taken on the first argument if $k>n/\sqrt{\lambda}$,
and on the second if $k<n/\sqrt{\lambda}$.

We are interested in the expected time to reduce the \om-value to at most $n/\lambda$.
To ease the application of drift analysis,
we artificially modify the process 
and make it create the optimum
when the state ($\om$-value) is strictly less than $n/\lambda$.
Clearly, the first hitting time of state 
at most $n/\lambda$ does not change 
by this modification.
Applying the variable drift theorem (Theorem~\ref{theo:variable-upper})
with $\xmin=n/\lambda$, $X_0=k\le n/\ln\lambda$ and 
$h(x)=c\min\{1,\sqrt{\lambda}k/n\}$,
the expected number of generations to reach state at most $n/\lambda$
is bounded from above by 
\[
\frac{n/\lambda}{c \sqrt{\lambda} (n/\lambda)/n} + \int_{n/\lambda}^{n/\sqrt{\lambda}} \frac{n}{c\sqrt{\lambda}x}\,\mathrm{d}x + 
\int_{n/\sqrt{\lambda}}^{n/\ln\lambda} \frac{1}{c}\,\mathrm{d}x = O\left(\frac{n}{\sqrt{\lambda}}\right) + 
O\left(\frac{n\log\lambda }{\sqrt{\lambda}}\right) + O\left(\frac{n}{\log \lambda}\right),
\]
which is $O(n/\log \lambda)$.
The overall expected number of generations spent is 
$O(\log n + n/\log \lambda) = O(n/\log \lambda)$
since $\lambda=n^{O(1)}$ by assumption.
\end{proof}

\section{Near Region}
\label{sec:near}

In the near region, we have $k\le n/\lambda$. Hence, the fitness is so low that 
we can expect only a constant number 
of offspring to flip at least one of the remaining one-bits. This assumes constant rate. However, higher rates 
are detrimental since they are more likely to 
destroy the zero-bits of the few individuals flipping one-bits. Hence, we expect the rate to drift towards 
constant values, as shown in the following lemma.


\begin{lemma}
\label{lem:near-region-rate-drift}
Let $k\leq n/\lambda$, $\lambda\geq 45$ and $4\leq r_t\leq\ln(\lambda)/4$.
Then the probability that $r_{t+1}=r_t/2$ is at least $0.5099$.
\end{lemma}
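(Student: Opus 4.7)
The rate-update step splits into a success-based and a random action, each with probability $1/2$, so
\[
\Pr(r_{t+1}=r_t/2) \;=\; \tfrac14 \;+\; \tfrac12\, q, \qquad q := \Pr(x^* \text{ was created with rate } r_t/2),
\]
where ties among best offspring are broken uniformly over all of them. Hence it suffices to show $q\ge 0.5198$, and (using $4\le r\le \ln(\lambda)/4 \le n/4$) neither the initial nor the target rate $r_t/2\ge 2$ is affected by the boundary cap.

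The plan is to exploit a very favourable likelihood ratio between the two subpopulations. For any fixed set of $f$ bits, the probability that a given offspring flips exactly this set equals $(r/(2n))^f(1-r/(2n))^{n-f}$ in the small-rate subpop and $(2r/n)^f(1-2r/n)^{n-f}$ in the large-rate subpop, and their ratio is
\[
4^{-f}\left(\frac{1-r/(2n)}{1-2r/n}\right)^{\!n-f} \;\ge\; 4^{-f}\,e^{3r/2}(1-o(1)) \;\ge\; 4^{-f}\,e^{6}(1-o(1)),
\]
using Lemma~\ref{lem:elem}\ref{it:elemexplow} and $r\ge 4$. Thus copies ($f=0$) are favoured in the small subpop by a factor $\ge e^6>403$; 1-bit flips by $>100$ and 2-bit flips by $>25$; while patterns with $f\ge 5$ contribute negligibly to actual improvements because a $j$-bit improvement needs flipping $\ge j$ of only $k\le n/\lambda$ one-bits, contributing $O((\tilde r/\lambda)^j)$ per offspring (summable in $j$ via $\tilde r\le \ln(\lambda)/2$).

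Based on this, I would decompose $q$ according to the minimum \om-value $v^*$ over all $\lambda$ offspring by writing $S_i$ for the minimum and $M_i$ for the number of minimizers in subpop $i\in\{1,2\}$, so that uniform tie-breaking gives
\[
q \;=\; \Pr(S_1<S_2) \;+\; E\!\bigl[\mathbf{1}[S_1=S_2]\cdot M_1/(M_1+M_2)\bigr].
\]
In the dominant \emph{no-improvement} case, $v^*=k$ and each $M_i$ coincides, up to the negligible ``balanced-flip'' contribution excluded above, with the number of copies $C_i\sim\Bin(\lambda/2,(1-\tilde r/n)^n)$. Since $E[C_1]/E[C_2]=(1-r/(2n))^n/(1-2r/n)^n \ge e^{3r/2}(1-o(1))\ge e^6(1-o(1))$ and since $r\ge 4$ together with $r\le \ln(\lambda)/4$ forces $\lambda\ge e^{16}$, Theorem~\ref{thm:chernoff} applied to $C_1$ and $C_2$ gives $M_1/(M_1+M_2)\ge e^6/(1+e^6)>0.997$ with overwhelming probability. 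In the \emph{improvement} case, the same likelihood argument applied to 1- and 2-bit flips shows the winning offspring is in the small subpop with conditional probability very close to one, while the (small) probability that the large subpop produces a strict improvement at all can be bounded by a union bound using $k\le n/\lambda$ and $2r\le \ln(\lambda)/2$. Summing the contributions yields $q\ge 0.5198$ with substantial slack.

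The main obstacle is that the events ``the large subpop has more copies than the small one'' and ``the large subpop achieves a strict improvement which the small one fails to match'' both oppose the bound and are not independent of each other; one cannot simply multiply tail probabilities. Additionally, one must verify that the non-copy offspring with $\om=k$ (balanced simultaneous flips of equally many one- and zero-bits) really contribute negligibly to $M_1,M_2$, which uses $k\le n/\lambda$ in an essential way. Once these two points are handled, the numerical margin between $e^6/(1+e^6)\approx 0.997$ and the required $0.5198$ is so large that only very crude bounds on the small error terms are needed to conclude.
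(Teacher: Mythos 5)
Your plan is correct and is essentially the paper's own proof: the reduction to showing $q\ge 0.5198$ via $\Pr(r_{t+1}=r_t/2)=\tfrac14+\tfrac12 q$, the comparison of copy counts in the two subpopulations through Chernoff bounds with expected-count ratio $(1-r/(2n))^{n-k}/(1-2r/n)^{n-k}\ge(1-o(1))e^{3r/2}\ge(1-o(1))e^{6}$, the union bounds over the rare adverse events in the $2r$-subpopulation (a strict improvement, a one-one/one-zero balanced flip, or a $\ge 2$ one-bit flip, all controlled via $k\le n/\lambda$ and $r\le\ln(\lambda)/4$), and the final uniform tie-breaking ratio are precisely the ingredients the paper packages as $p_{\text{err}_1},\dots,p_{\text{err}_4}$ together with the bound $N_{r/2}/(N_{r/2}+N_{2r})>0.988$. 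The only notable difference is that you exploit the fact that the hypotheses $4\le r_t\le \ln(\lambda)/4$ force $\lambda\ge e^{16}$, which gives you much stronger concentration (e.g.\ $E(C_2)\ge(1-o(1))\sqrt{\lambda}/2$, so your $0.997$ ratio is indeed attainable), whereas the paper works with explicit constants valid already from $\lambda\ge 45$; this is a legitimate simplification but changes nothing structural.
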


\begin{proof}
To prove the claim we exploit the fact
that only few one-bits are flipped in both subpopulations.
Using $r\coloneqq r_t$, we shall argue as follows. With sufficiently high (constant) probability, (i)~the $2r$-subpopulation contains no individual strictly better than the parent, that is, with fitness less than $k$, and (ii)~all $2r$-offspring with fitness at least~$k$ are identical to the parent. Conditional on this, either the $r/2$-population contains individuals with fitness less than $k$ and the winning individual surely stems from this subpopulation, or the $r/2$-population contains no better offspring. In the latter case, we argue that there are many more individuals with fitness exactly $k$ in the $r/2$-population than in the $2r$-population, which gives a sufficiently high probability for taking the winning individual from this side (as it is chose uniformly at random from all offspring with fitness $k$).
%

Let $N_{r/2}$ and $N_{2r}$ be the number of offspring
that did not flip any zero-bits using rate $r/2$ and $2r$, respectively.
Then $E(N_{r/2})=(\lambda/2)(1-r/(2n))^{n-k}\geq (1-o(1))\lambda e^{-r/2}/2$,
since $k\geq 1$ and
\begin{equation*}
\left(1-\frac{r}{2n}\right)^{n-1}
\geq e^{-\frac{r}{2}}\left(1-\frac{r}{2n}\right)^{\frac{r}{2}-1}
\geq e^{-\frac{r}{2}}\left(1-\frac{c\ln n}{8n}\right)^{\frac{c\ln n}{8}-1}
=(1-o(1))e^{-\frac{r}{2}},
\end{equation*}
where we used that $r\leq\ln\lambda/4$ and $\lambda=n^{O(1)}$,
\ie $\ln\lambda\leq c\ln n$ for some constant $c$.
Using $k\leq n/\lambda$ we get
$E(N_{2r})=(\lambda/2)(1-2r/n)^{n-k}\leq(\lambda/2)e^{-2r(1-1/\lambda)}$.
In fact, we can discriminate $N_{r/2}$ and $N_{2r}$ by using Theorem~\ref{thm:chernoff} in the following way:
we have
\begin{equation*}
\begin{split}
\Prob\left(N_{r/2}\leq\frac{\lambda}{4} e^{-\frac r2}\right)
&\leq\exp\left(-(1-o(1))^3\frac{\lambda}{16} e^{-\frac r2}\right)\\
&\leq\exp\left(-(1-o(1))\frac{1}{16}\lambda^{7/8}\right)\\
&< 0.175,
\end{split}
\end{equation*}
for sufficiently large $n$,
since $r\leq(\ln\lambda)/4$ and $\lambda\geq 45$.
Similarly, we obtain
\begin{equation*}
\Prob\left(N_{2r}\geq\lambda e^{-2r(1-\frac{1}{\lambda})}\right)
\leq \exp\left(-\frac{1}{6}\lambda^{1-\frac 12\left(1-\frac 1\lambda\right)}\right)
< 0.312.
\end{equation*}
Note that $e^{-2r(1-1/\lambda)}<e^{-r/2}/4$ holds for all $r\geq 4$ and $\lambda\geq 45$.
Since the offspring are generated independently,
the events  $N_{r/2}>\lambda e^{-r/2}/4$
and $N_{2r}<\lambda e^{-2r(1-\frac{1}{\lambda})}$
happen together with probability at least $(1-0.175)\cdot (1-0.312)\geq 0.567\eqqcolon 1-p_{\text{err}_1}$. 
Conditioning on this 
and by using a union bound
the probability $p_{\text{err}_2}$ that at least one of the $N_{2r}$ offspring
that do not flip any zero-bits
flips at least one one-bit can be upper bounded by
\begin{equation*}
p_{\text{err}_2}\coloneqq N_{2r}\cdot\frac{2kr}{n}
\leq 2re^{-2r\left(1-\frac 1\lambda\right)}
\leq 0.004
\end{equation*}
using $k/n\leq 1/\lambda$ in the first
and $r\geq 4$ and $\lambda\geq 45$
in the last inequality.
By using a union bound,
we find the probability $p_{\text{err}_3}$
that at least one $2r$-offspring flips exactly one one-bit
and exactly one zero-bit
to be at most
\begin{equation*}
\begin{split}
p_{\text{err}_3}&\coloneqq\frac{\lambda}{2}\frac{2rk}{n}\left(1-\frac{2r}{n}\right)^{k-1}
\frac{2r(n-k)}{n}\left(1-\frac{2r}{n}\right)^{n-k-1}\\
&\leq 2r^2\left(1-\frac{2r}{n}\right)^{n-2}
\leq (1+o(1))2e^{2(\ln(r)-r)}
< (2+o(1))e^{-\frac{6}{5}r} < 0.017,
\end{split}
\end{equation*}
for sufficiently large~$n$, 
using $k/n\leq 1/\lambda$ for the first inequality.
The third inequality is due to
$\ln x-x\leq -(1-e^{-1})x < -(3/5)x$ for all $x>0$ (see Lemma~\ref{lem:elem}.a)
and the last inequality stems from $r\geq 4$.
The second inequality follows from
\begin{equation*}
\left(1-\frac{2r}{n}\right)^{n-2}
\le e^{-\frac{2r}{n}(n-2)} = e^{-2r\left(1-\frac{2}{n}\right)}
=(1+o(1))e^{-2r},
\end{equation*}
using again $r\le \ln\lambda\leq c\ln n$ for some constant $c$.
Let $M_r$ be the number of such offspring. 
Any other fitness-decreasing flip-combinations of zeroes and ones in the $2r$-subpopulation
require an offspring to flip at least two one-bits. The probability that such an offspring is created is at most
\begin{equation*}
p_{\text{err}_4}\coloneqq\frac\lambda 2\binom{k}{2}\left(\frac{2r}{n}\right)^2
\leq\frac{\ln^2\lambda}{16\lambda}
< 0.021,
\end{equation*}
using $k\leq n/\lambda$ and $r\leq\ln\lambda/4$ and the fact that $(\ln^2 x)/ x$ is decreasing
for $x\geq e^2$ and $\lambda \geq 45 > e^2$.

The events $N_{r/2}> \lambda e^{-r/2}/4$, $N_{2r}<\lambda e^{-2r(1-\frac{1}{\lambda})}$, $M_r=0$,
and the event that no fitness-decreasing offspring is created in the $2r$-subpopulation
are sufficient to ensure
that the best individual is either surely from the $r/2$-population or chosen uniformly at random
from the $N_{r/2}+N_{2r}$ offspring.
Conditioning on these events, we have that the probability
that the best offspring is chosen from the $r/2$-population is at least
\begin{equation*}
\frac{N_{r/2}}{N_{r/2}+N_{2r}}
\geq\frac{1}{1+4e^{-r\left(2\left(1-\frac 1\lambda\right)-\frac 12\right)}}
>0.988.
\end{equation*}
Hence, using a union bound for the error probabilities,
the unconditional probability is at least
\begin{equation*}
\frac{\left(1-p_{\text{err}_1}-p_{\text{err}_2}-p_{\text{err}_3}-p_{\text{err}_4}\right)\cdot 0.988+\frac 12}{2}>0.5099.
\end{equation*}

\end{proof}

We note that the restriction $r_t \ge 4$ in the lemma above is not strictly necessary. Also for smaller $r_t$, the probability that the winning individual is chosen from the $r_t/2$-population is by an additive constant larger than $1/2$. Showing this, however, would need additional proof arguments as for smaller $r_t$, the event that both subpopulations contain individuals with fitness $k-1$ becomes more likely. We avoid this additional technicality by only arguing for $r_t \ge 4$, which is enough since any constant $r_t$ is sufficient for the fitness drift we need (since we do not aim at making the leading constant precise). 

In the following proof of the analysis of the near region, we use the above lemma (with quite some additional arguments) to argue that the $r$-value quickly reaches $4$ or less and from then on regularly returns to this region. This allows to argue that in the near region we have a speed-up of a factor of $\Theta(\lambda)$ compared to the \ooea, since every offspring 
only has a probability of $O(1/\lambda)$ of making progress (see also \cite{DoerrKuennemannTCS15,GiessenWittALGO16}).

\begin{theorem}
\label{thm:expected-time-near}
Assume $k\le n/\lambda$ for the current \om-value of the self-adjusting \oplea. Then the expected number 
of generations until the optimum is reached is $O( n\log(n)/\lambda + \log n)$.
\end{theorem}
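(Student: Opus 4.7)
My strategy decomposes the analysis into a \emph{rate-descent phase}, during which the mutation rate $r_t$ is brought down to a small constant from wherever it starts, and a \emph{progress phase}, during which the rate is already constant and multiplicative drift on the fitness distance $k_t$ produces the main bound. Elitist selection guarantees that $k_t$ is non-increasing, so Phase~1 cannot deteriorate the fitness.

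First, I would show that from any initial rate $r^{\init}\in[2,n/4]$ the rate $r_t$ reaches $4$ within $O(\log n)$ generations in expectation. In the range $r_t\in[4,\ln(\lambda)/4]$, Lemma~\ref{lem:near-region-rate-drift} yields $\Prob(r_{t+1}=r_t/2)\ge 0.5099$, so $\lceil\log_2 r_t\rceil$ has constant negative drift and the additive drift theorem handles this range in $O(\log n)$ steps. For $r_t\ge 2(1+\gamma)c_2(k)\ln\lambda$, which is $O(\log\lambda)$ since $c_2(k)=4+o(1)$ for $k\le n/\lambda=o(n)$, part~\ref{far-rate-worse} of Lemma~\ref{lem:fardrift} gives that with probability $1-\lambda^{-\gamma}$ all offspring are strictly worse than the parent; among those losers the expected fitness is an increasing function of the creating rate, so the $r_t/2$-subpopulation is more likely to contain $x^*$, and together with the $50\%$ random-adjustment step this still yields a constant negative drift on $\log_2 r_t$. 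The delicate sub-case is the \emph{gap} $r_t\in(\ln(\lambda)/4,\Theta(\log\lambda))$, where neither lemma applies directly; here I would design a small non-linear potential $\Phi(r_t)$ on which the random-adjustment step (halving or doubling $r_t$ with equal probability) already contributes a strictly negative drift, exactly as foreshadowed in Section~\ref{sec:proof-overview}. Once $r_t\le 4$ is first reached, Lemma~\ref{lem:occupation-rt} applied to $\max\{0,\lfloor\log_2(r_t/2)\rfloor\}$ shows that $r_t\le c$ holds with probability $\Omega(1)$ in every subsequent generation, for some constant $c$.

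For the progress phase, I bound the expected fitness decrease conditional on $r_t\le c$. In the $r_t/2$-subpopulation, a single offspring flips exactly one one-bit and no zero-bit with probability $k\cdot (r_t/(2n))(1-r_t/(2n))^{n-1}=\Omega(k/n)$, so at least one improvement among the $\lambda/2$ such offspring occurs with probability $1-(1-\Omega(k/n))^{\lambda/2}=\Omega(\min\{1,\lambda k/n\})$, which equals $\Omega(\lambda k/n)$ throughout the near region ($k\le n/\lambda$). Combined with the $\Omega(1)$ occupation bound from Phase~1, the unconditional expected one-step fitness drift is $\Omega(\lambda k_t/n)$. Applying the Multiplicative Drift Theorem (Theorem~\ref{thm:multiDrift}) with $\delta=\Omega(\lambda/n)$, $x_{\min}=1$ and $X_0\le n/\lambda$ yields an expected $O((n/\lambda)(1+\ln(n/\lambda)))=O((n\log n)/\lambda)$ generations until $k_t=0$.

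Summing the two phases gives the claimed $O((n\log n)/\lambda+\log n)$. The chief obstacle is rigorously handling the rate \emph{gap} $(\ln(\lambda)/4,\Theta(\log\lambda))$: neither the near-region lemma nor the far-region high-rate estimate covers it, and proving that the rate leaves this gap downward with positive probability requires exploiting the random $50/50$ adjustments through a tailored potential function. A secondary technicality is coupling the rate and fitness processes: since the fitness drift bound holds only conditional on small $r_t$, one must integrate the occupation-probability estimate for $r_t$ into the multiplicative drift argument so that rate excursions above the constant window contribute only lower-order terms in the total runtime.
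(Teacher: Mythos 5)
Your two-phase architecture (potential-driven descent of $r_t$ to a constant in $O(\log n)$ generations, then multiplicative drift with $\delta=\Theta(\lambda/n)$ combined with an occupation bound keeping $r_t=O(1)$ with probability $\Omega(1)$) is exactly the skeleton of the paper's proof, including the identification of the problematic gap $((\ln\lambda)/4,\Theta(\log\lambda))$ and the idea of a concave potential exploiting the $50\%$ random adjustments; the paper's explicit construction assigns potential increments $4^{-i}$ per doubling inside the gap, so that even the guaranteed halving probability of only $1/4$ yields constant positive drift, and the verification you defer is routine. However, two of your steps fail as written. First, for $r_t\ge 2(1+\gamma)c_2(k)\ln\lambda$ you argue via part~(c) of Lemma~\ref{lem:fardrift} that, conditional on all offspring being worse than the parent, the $r_t/2$-subpopulation is more likely to contain $x^*$ because the losers' \emph{expected} fitness increases with the creating rate. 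Comparing expectations does not control the distribution of the \emph{minimum}: offspring created with rate $2r$ have larger variance, so their best member can beat the best $r/2$-offspring despite a worse mean, and your argument yields no quantified constant advantage above $1/2$. The correct tool is part~(b) of the same lemma, which holds for all $c_2(k)\ln\lambda\le r\le n/4$ and directly gives probability at least $0.51$ that all best offspring stem from rate $r/2$; its proof (Bernstein concentration pushing every $2r$-offspring below the expectation of an $r/2$-offspring, plus an anti-concentration bound for the $r/2$ side) is precisely the rigorous version of the claim you assert, and it makes the detour through part~(c) unnecessary.

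Second, your occupation step misapplies Lemma~\ref{lem:occupation-rt}: its hypothesis demands $\Prob(r_{t+1}<r_t\mid r_t>a)\ge 1/2+\eps$ for \emph{all} $r_t>a$ (with $a\ge 4$, whereas you take base $2$), but inside the gap only a halving probability of $1/4$ is available, so the lemma's conclusion does not follow and excursions of the rate back up through the gap after time $T$ are not controlled. The repair is the one the paper uses: apply the underlying occupation result (Theorem~\ref{lem:occupationLemma-cw}) to the concave potential $X_t$ itself, which has constant additive drift $\kappa$ whenever $r_t>a$ and steps bounded by $1$, and then translate the event $X_t\le b$ back into $r_t=O(1)$ via a lower bound of the form $X_t\ge (L/U)^2\log_2(r_t/(2a))$, valid because $U/L$ is a constant. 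Since you already construct the potential for the descent phase, this is a local fix, but without it your claim that $r_t\le c$ holds with probability $\Omega(1)$ at every generation $t\ge T$ is unsupported. The remainder of your argument (the $\Omega(\lambda k_t/n)$ fitness drift conditional on constant rate, bounding the improvement by $0$ otherwise, and the multiplicative drift conclusion $O((n/\lambda)\log n)$) coincides with the paper's.
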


\begin{proof}
The aim is to estimate the \om-drift at the points in time (generations)~$t$ where $r_t=O(1)$. To bound the 
expected number of generations until the mutation rate has entered this region, 
we basically consider the stochastic process 
$Z_t:=\max\{0,\lfloor\log_2(r_t/a)\rfloor\}$, where $a\coloneqq 4$, 
which is   
the lower bound on $r_t$ from Lemma~\ref{lem:near-region-rate-drift}. However, as we do not have proved 
a drift of $Z_t$ towards smaller values in the region $L\coloneqq (\ln \lambda)/4\le r_t\le 16\ln \lambda \eqqcolon U$ (where $16$ is an 
upper bound on $c_2(k)$ from Lemma~\ref{lem:far-region-drift}), 
we use the potential function
\[
X_t(Z_t)\coloneqq \begin{cases}
Z_t & \text{ if $a\le r_t\le L$}\\
\log_2(L/a) + \sum_{i=1}^{\lceil\log_2(r_t/L)\rceil} 4^{-i} & \text{ if $L<r_t<U$}\\
\log_2(L/a) + \sum_{i=1}^{\log_2(U/L)+1} 4^{-i} + 4^{-\log_2(U/L)-1} \lfloor \log_2(r_t/a) \rfloor & \text{ otherwise}.
\end{cases}
\]
assuming that $L$ and $U$ have been rounded down and up to the closest power of~$2$, respectively. We note that 
$X_t \ge \log_2(r_t/a)-1 = \log_2(r_t/(2a))$ if $r_t\le L$ and $X_t\ge (L/U)^2 \log_2(r_t/(2a))$ in all three cases.
 
The potential function has a slope of $1$ for $a\le r_t\le L$. 
Lemma~\ref{lem:near-region-rate-drift} gives us the 
drift $E(X_t-X_{t+1}\mid X_t;a\le r_t\le L)\ge (0.5099 - 0.4901)/(2a) = \Omega(1)$. The function 
 satisfies 
$X_t(Z_t)-X_t(Z_t-1) \ge 4 (X_t(Z_{t+1})-X_t(Z_t))$ if $L<r_t<U$, which 
corresponds to the region where  
the probability of decreasing $r_t$ by a factor of~$1/2$ has  only be bounded from below 
by $1/4$ due to the random steps. Still, $E(X_t-X_{t+1}\mid X_t; L<r_t<U)\ge (1/4) 4^{-\log_2(U/L)-1} - (3/4) 4^{-\log_2(U/L)}=\Omega(1)$ 
in this region due to the concavity of the potential function. Finally, 
$E(X_t-X_{t+1}\mid X_t;r_t\ge U)=4^{-\log_2(U/L)-1} (1/(2a)) \Omega(1)=\Omega(1)$ by Lemma~\ref{lem:far-region-drift}. Hence, 
altogether $E(X_t-X_{t+1}\mid X_t;r_t\ge a)\ge \kappa$ for some constant~$\kappa>0$. 
 As $X_0 = O(\log n)$, additive drift analysis 
yields an expected number of $O(\log n)$ generations until for the first time  $X_t= 0$ holds, 
corresponding 
to $r_t\le a$. We denote this hitting time by~$T$.

We now consider 
an arbitrary point of time~$t\ge T$. The aim is to show a 
drift on the \om-value, depending on the current \om-value $Y_{t}$, which satisfies 
$Y_t\le n/\lambda$ with probability~$1$. To this end, we will 
use Lemma~\ref{lem:occupation-rt}. We choose $b$ large enough such that $2e^{-2b\cdot \kappa/4}\le 1-\delta$ holds  
for some positive constant~$\delta>0$ and note that $b$ is constant. We consider two 
cases. If $X_t\le b$, which happens with probability at least~$\delta$ according to the lemma, then the bound $X_t \ge (L/U)^2 (\log_2(r_t/(2a)))$ implies 
$r_t \le 2^{(U/L)^2} 2a 2^b$. Hence, we have $r_t=O(1)$ in this case and obtain a probability  of at least 
\[
1-\left(1-\binom{Y_t}{1} \left(\frac{2r_t}{n}\right) \left(1-\frac{2r_t}{n}\right)^{n-1}\right)^\lambda 
\ge 1-\left(1-\Theta\left(\frac{Y_t}{n}\right)\right)^\lambda = \Omega(\lambda Y_t/n)
\]
to improve the \om-value by~$1$, 
using that $Y_t=O(n/\lambda)$ and pessimistically assuming a rate of~$2r_t$ in all offspring. 
If $X_t>b$, we bound the improvement from below by~$0$.
 Using the law of total probability, we obtain 
\[
E(Y_t-Y_{t+1} \mid Y_t; Y_t \le n/\lambda) = \delta \Omega(\lambda Y_t/n) = \Omega(\lambda Y_t/n).
\]
Now a multiplicative drift analysis with respect to the stochastic process on the $Y_t$, more precisely 
Theorem~\ref{thm:multiDrift} using 
$\delta=\Theta(\lambda/n)$ and minimum state~$1$, gives an expected number of  
$O((n/\lambda)\log Y_0)=O(n\log(n)/\lambda)$ generations until the optimum is found. Together 
with the expected number $O(\log n)$ until the $r$-value becomes at most~$a$, this proves the theorem.
\end{proof}

\section{Putting Everything Together}
\label{sec:main-theo}
In this section, we put together the analyses of the different regimes to prove our main result.

\begin{proofof}{Theorem~\ref{thm:main}}
The lower bound actually holds for all unbiased parallel black-box algorithms, as shown in 
\cite{BadkobehPPSN14}. 

We add up the bounds on the expected number 
of generations spent in the three regimes, more precisely we add up the bounds from 
Theorem~\ref{thm:expected-time-far}, Theorem~\ref{thm:expected-time-middle} and Theorem~\ref{thm:expected-time-near}, which gives 
us $O(n/\log \lambda +  n\log(n)/\lambda + \log n )$ generations. Due to our assumption $\lambda=n^{O(1)}$ the bound 
is dominated by $O(n/\log \lambda +  n\log(n)/\lambda )$ as suggested.
\end{proofof}

\begin{proofof}{Lemma~\ref{lem:main-2}}
We basically revisit the regions of different \om-values analyzed in this paper and bound the time spent in these 
regions under the assumption $r=\ln(\lambda)/2$. In the far region, Lemmas~\ref{lem:super-far-region-rate-drift} 
and~\ref{lem:far-region-drift}, applied with this value of~$r$, 
imply a fitness drift of $\Omega(\log(\lambda)/\log(en/k))$ per generation, 
so the expected number of generations spent in the far region is $O(n/\log\lambda)$ as 
computed by variable drift analysis in the proof of Theorem \ref{thm:expected-time-far}.

The middle region is shortened at the lower end. For $k\ge n/\sqrt{\lambda}$, 
Lemma~\ref{lem:middle-region-drift} gives a fitness drift of 
$\Omega(1)$, implying by additive drift analysis $O(n/\log\lambda)$ generations to reduce 
the fitness to at most $n/\sqrt{\lambda}$.

In the near region, which now starts at $n/\sqrt{\lambda}$, we have to argue slightly differently. Note that every offspring 
has a probability of at least $(1-r)^{n} \ge e^{-\ln(\lambda)/2+O(1)}=\Omega(\lambda^{-1/2})$ 
of not flipping a zero-bit. Hence, we expect $\Omega(\sqrt{\lambda})$ such offspring. 
We pessimistically assume that the other individuals do not yield a fitness improvement; 
conceptually, this reduces the population size to $\Omega(\sqrt{\lambda})$ offspring, all of which 
are guaranteed not to flip a zero-bit. Adapting the arguments from the proof of 
Theorem~\ref{thm:expected-time-near}, the probability that at least of one of these individuals flips at least a one-bit 
is at least 
\[
1-\left(1-\binom{Y_t}{1} \left(\frac{r_t}{n}\right) \right)^{\Omega(\sqrt{\lambda})} 
\ge 1-\left(1-\Theta\left(\frac{Y_t}{n}\right)\right)^{\Omega(\sqrt{\lambda})}  = \Omega(\sqrt{\lambda} Y_t/n),
\]
which is a lower bound on the fitness drift. Using the multiplicative drift analysis, the expected number 
of generations in the near region is $O(n\log(n)/\sqrt{\lambda})$. Putting the times for the regions together, we obtain 
the lemma.
\end{proofof}

\section{Experiments}
\label{sec:experiments}

Since our analysis is asymptotic in nature
we performed some elementary experiments in order to see
whether besides the asymptotic runtime improvement (showing an improvement for an unspecified large problem size $n$) we also see an improvement for realistic problem sizes.
For this purpose we implemented the \oplea in C
using the GNU Scientific Library (GSL) for the generation
of pseudo-random numbers.
In this section our performance measure is the runtime,
represented by the number of generations until the optimum is found for the first time.

\begin{figure}
\centering
\includegraphics{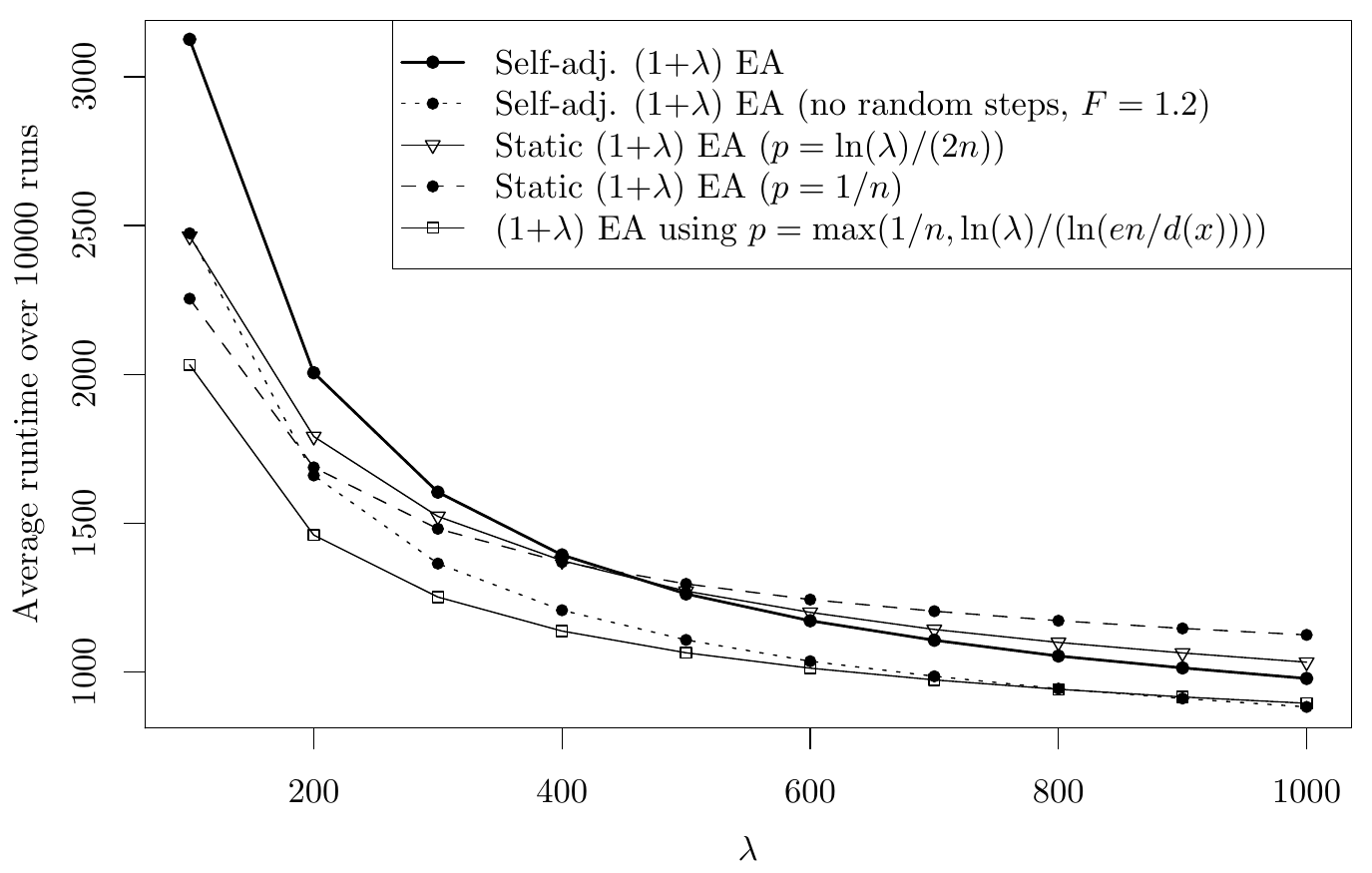}
\includegraphics{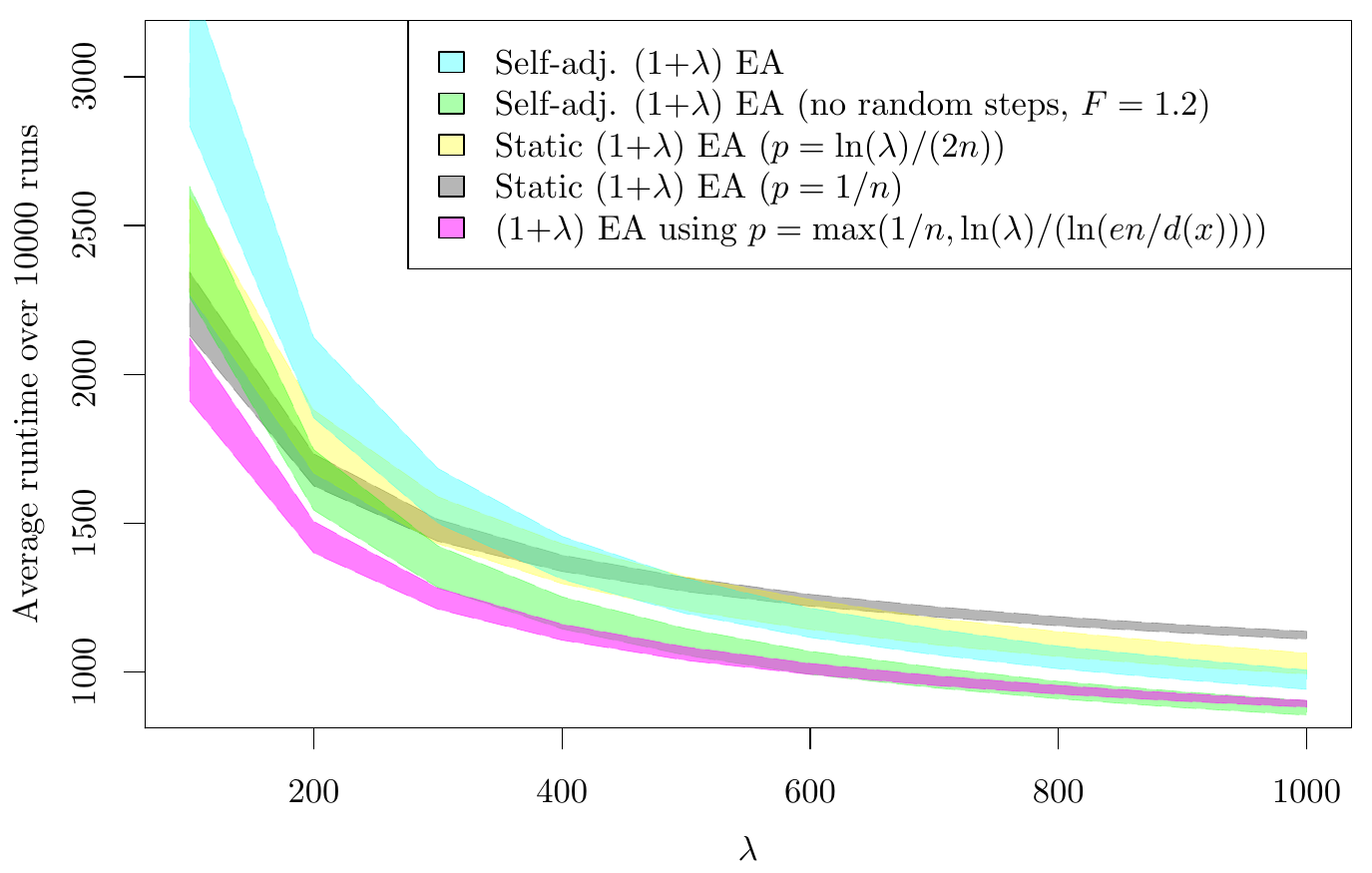}
\caption{Static and Self-adjusting \oplea average runtime comparison ($n = 5000$) on $\om$ and the corresponding interquartile ranges.}
\label{figure:compstatic}
\end{figure}

The first plot in Figure~\ref{figure:compstatic} displays the average runtime over 10000 runs
of the self-adjusting \oplea on \om for $n=5000$ as given in Algorithm~\ref{alg:onelambda}
over $\lambda=100,200,\ldots,1000$; the second plot shows the corresponding interquartile ranges to 
support that the results are statistically significant.
We set the initial mutation rate to $2$,
\ie, the minimum mutation rate the algorithm can attain.
Moreover, the plot displays the average runtimes of the classic \oplea 
using static mutation probabilities of $1/n$ and of $(\ln \lambda)/(2n)$, 
the latter of which  is asymptotically optimal for large~$\lambda$ according to Lemma~\ref{lem:main-2}.

The average runtimes of both algorithms profit
from higher offspring population sizes $\lambda$
leading to lower average runtimes as $\lambda$ increases.
Interestingly, the two static settings of the classic \oplea outperform the self-adjusting \oplea
for small values of $\lambda$ up to $\lambda = 400$.
For higher offspring population sizes
the self-adjusting \oplea outperforms the classic ones,
indicating that the theoretical performance gain of $\ln\ln\lambda$
can in fact be relevant in practice.
Furthermore, we implemented the self-adjusting \oplea without the random steps, that is, when the rate is always adjusted according to how the best offspring are distributed over the two subpopulations. The experiments show that this variant of the self-adjusting \oplea performs generally slightly better on \onemax. Since the \onemax fitness landscape is structurally very simple, this result is not totally surprising. It seems very natural that the fitness of the best of $\lambda/2$ individuals, viewed as a function in the rate, is a unimodal function. In this case, the advantage of random steps to be able to leave local optima of this function is not needed. On the other hand, of course, this observation suggests to try to prove our performance bound rigorously also for the case without random rate adjustments. We currently do not see how to do this.
Lastly, we implemented the \oplea using the fitness-depending mutation rate
$p = \max\{\frac{\ln \lambda}{n \ln(en/d)}, \frac 1n\}$
as presented in~\cite{BadkobehPPSN14}.
The experiments suggest that this scheme outperforms all other variants considered.

Additionally we implemented another variant of the self-adjusting \oplea
using three equally-sized subpopulations
\ie the additional one is using (that is, exploiting) the current mutation rate.
We compared this variant with the self-adjusting \oplea,
both with and without using random steps.
The results are shown in Figure~\ref{figure:threepop}, again with respect to average runtimes 
and interquartile ranges.
The experiments suggest that the variant using three subpopulations
outperforms the self-adjusting \oplea slightly for small population sizes.
For very high population sizes,
using just two subpopulations seems to be a better choice.

\begin{figure}
  \centering
\includegraphics{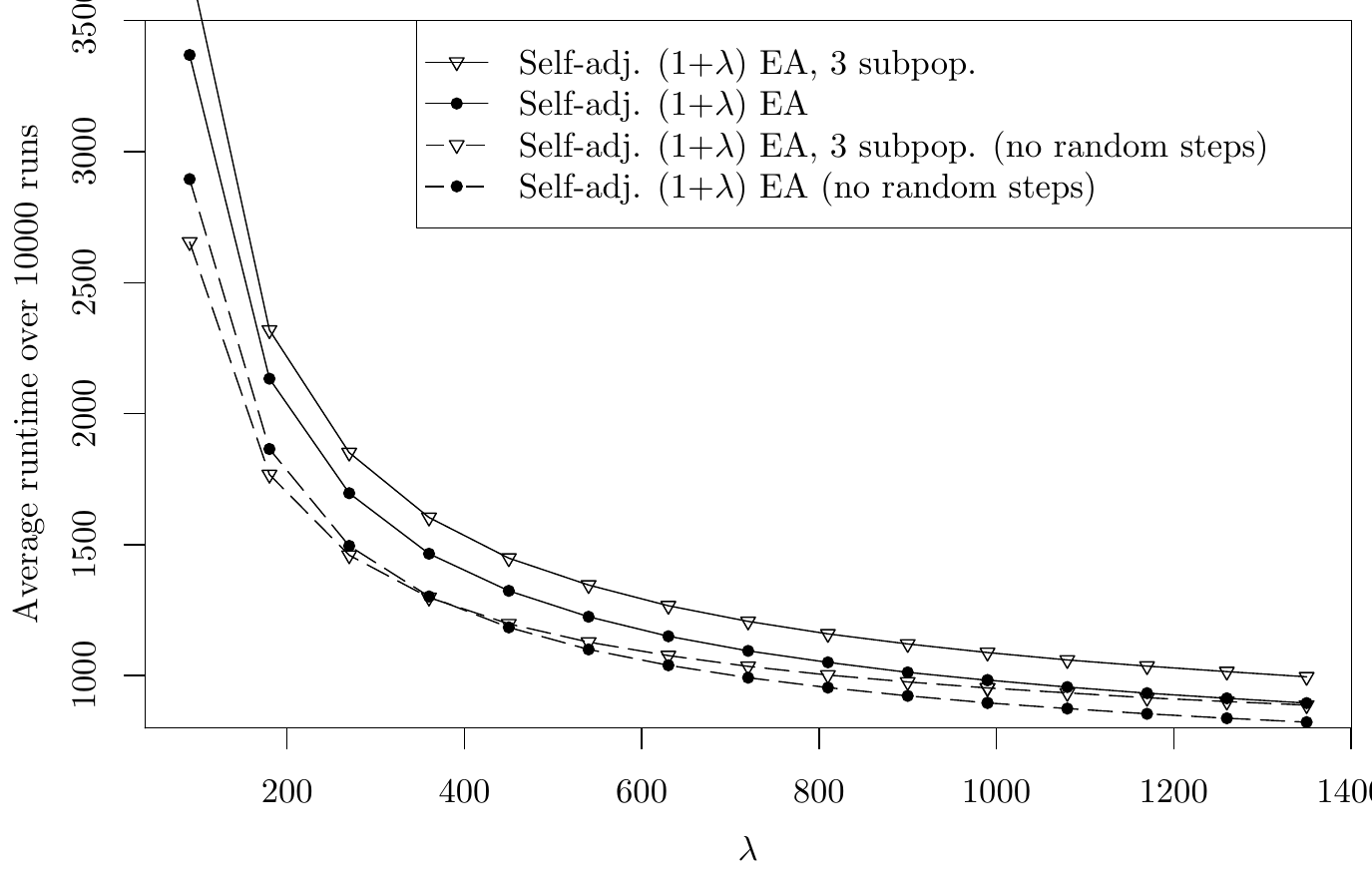}
\includegraphics{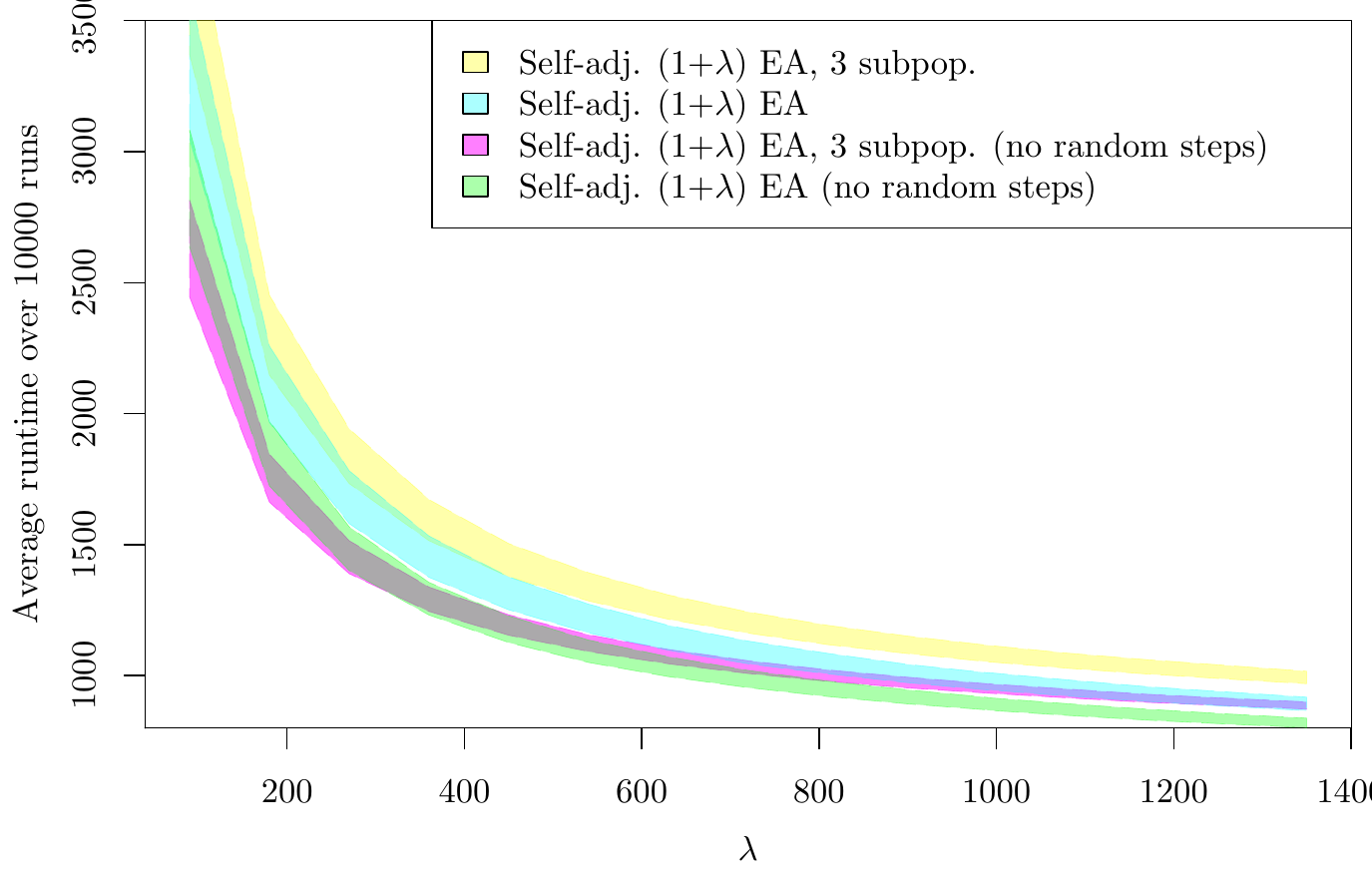}
\caption{Average runtime of the self-adjusting \oplea with two and three subpopulations each with and without random steps on \om ($n=5000$) with corresponding interquartile ranges.}
\label{figure:threepop}
\end{figure}

\begin{figure}
\centering
\includegraphics{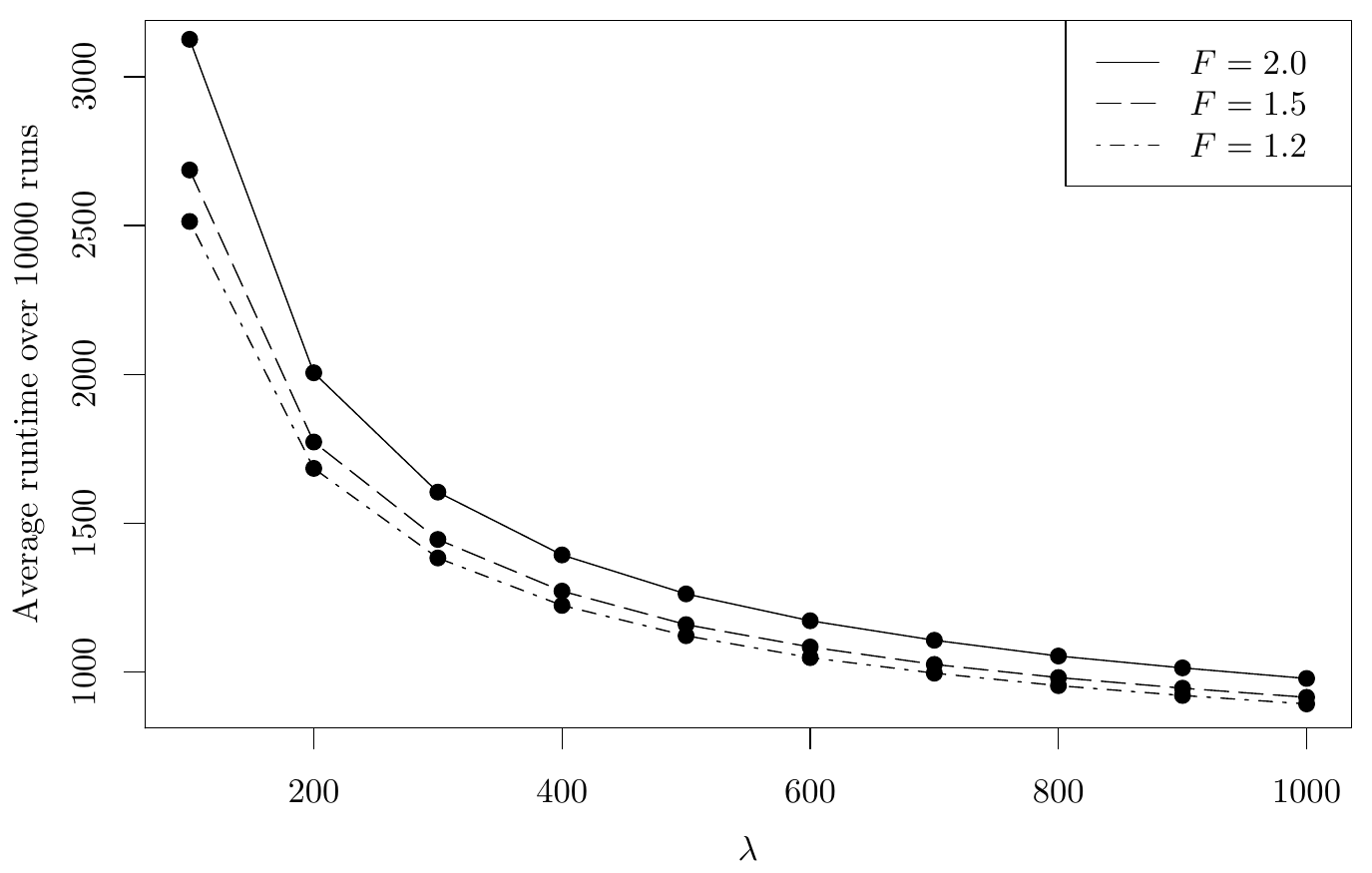}
\includegraphics{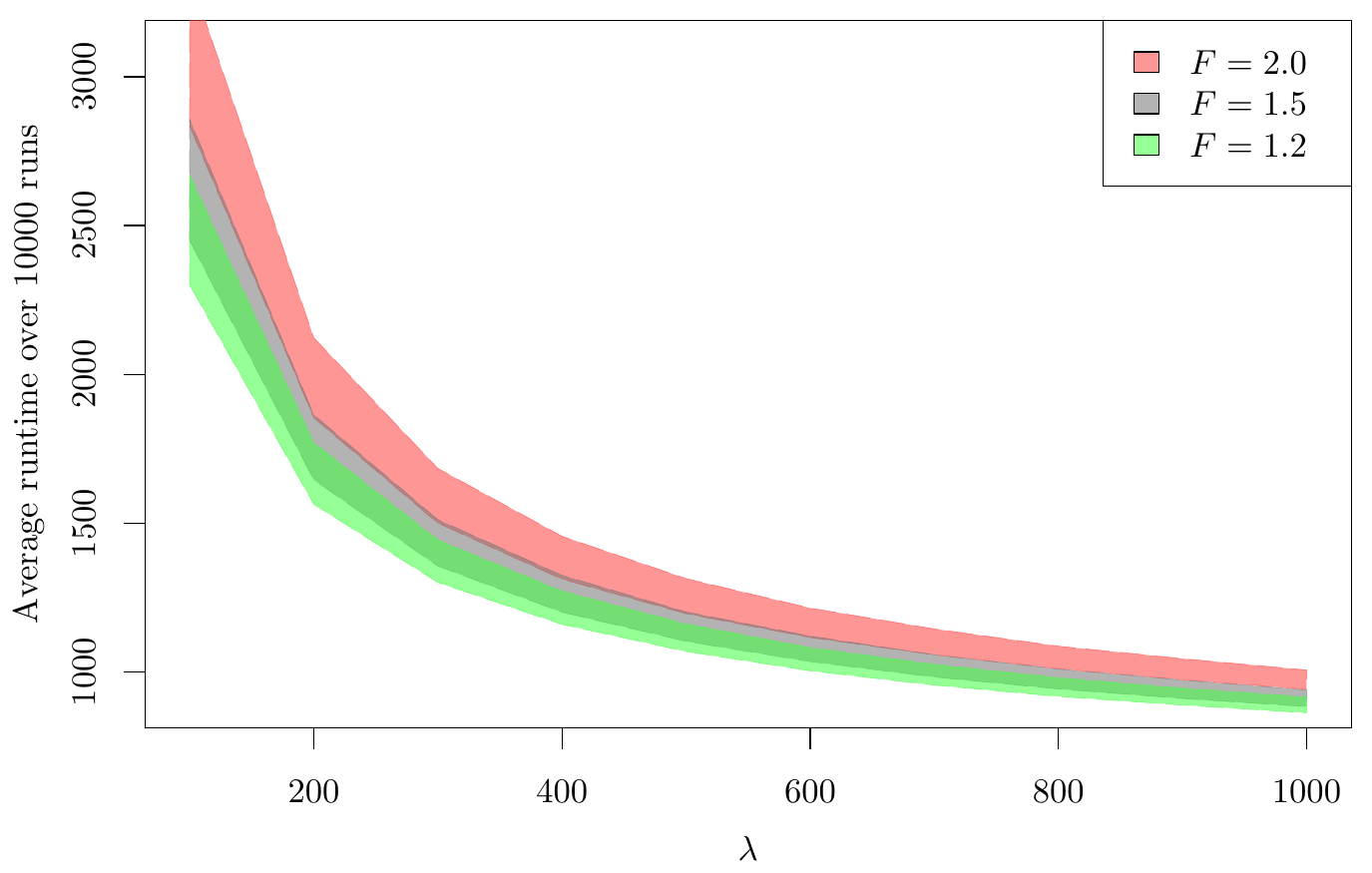}
\caption{Average runtime of the self-adjusting \oplea with different mutation rate update factors $F$ on \om ($n=5000$) with corresponding interquartile ranges}
\label{figure:compF}
\end{figure}

To gain some understanding on how the parameters influence the runtime, we implemented the self-adjusting \oplea
using different mutation rate update factors, that is,
we consider the self-adjusting \oplea as given in Algorithm~\ref{alg:onelambda}
where the mutation rate $r_t$ is increased or decreased by some factor $F$
(instead of the choice $F=2$ made in Algorithm~\ref{alg:onelambda}).
Note that we do not change the rule that we use the rates $r/2$ and $2r$
to create the subpopulations.
Furthermore, after initialization, the algorithm starts with rate $F$
and the rate is capped below by $F$ and above by $1/(2F)$ during the run, accordingly.

The results are shown in Figure~\ref{figure:compF}.
The plot displays the average runtime over 10000 runs
of the self-adjusting \oplea on \om for $n=5000$
over $\lambda=100,200,\ldots,1000$ using the update factors $F = 2.0, 1.5, 1.01$.
The plot suggests that lower values of $F$ yield a better performance. This result is not immediately obvious. Clearly, a large factor $F$ implies that the rate changes a lot from generation to generation (namely by a factor of $F$). These changes prevent the algorithm from using a very good rate for several iterations in a row. On the other hand, a small value for $F$ implies that it takes longer to adjust the rate to value that is far from the current one. We also performed experiments for $F>2$ and observed even worse runtimes than for $F=2.0$. The figure does not display the outcomes of these additional experiments since this would impair the readability of the diagram.

\begin{figure}
\begin{center}
\includegraphics{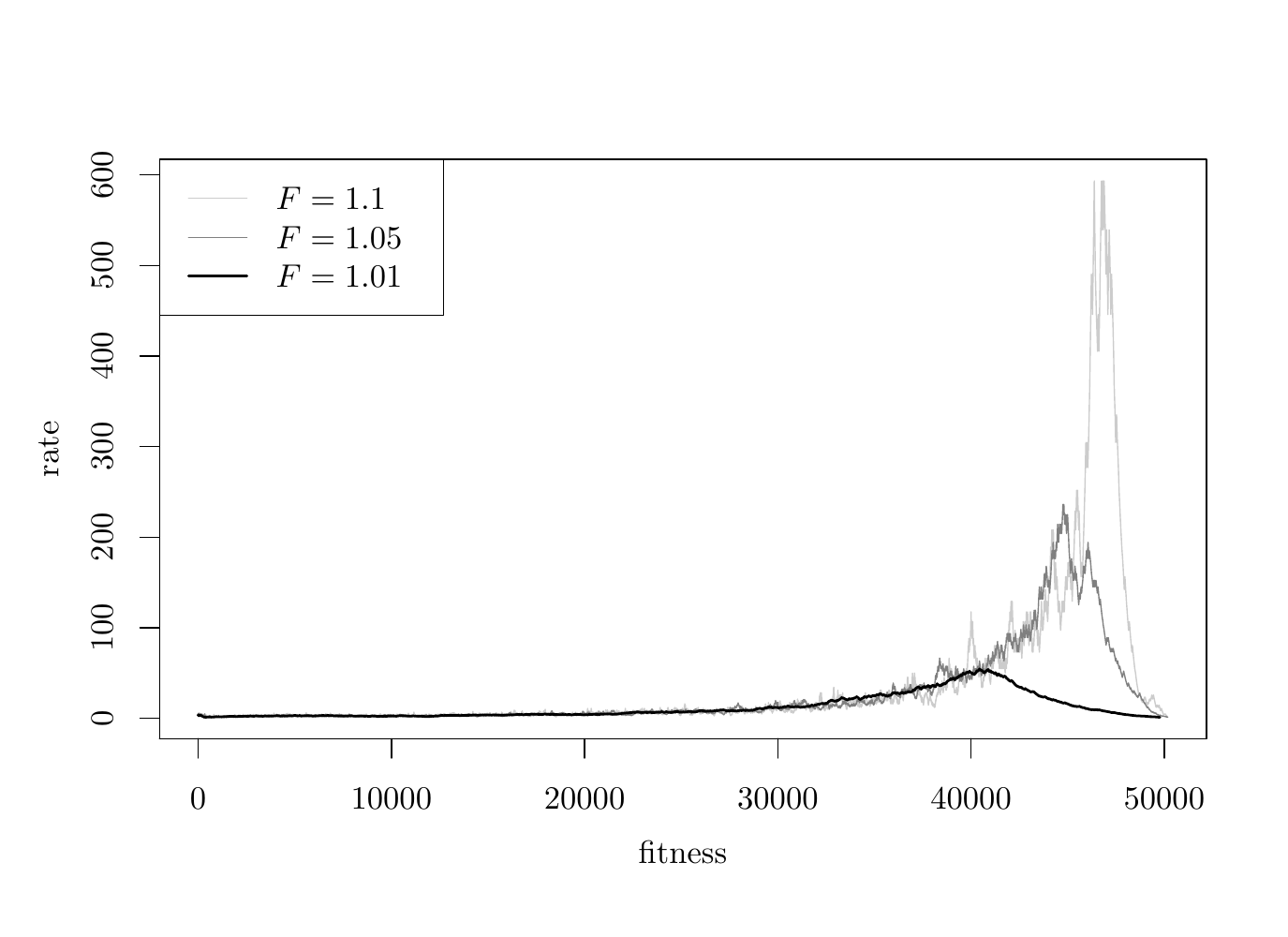}
\end{center}
\caption{Development of the rate over the fitness of three example runs of the self-adjusting \oplea on \om ($n=100000$, $\lambda=1000$), using different factors $F$}
\label{figure:nontrivial}
\end{figure}

Finally, to illustrate the nontrivial development of the rate
during a run of the algorithm
we plotted the rate of three single runs of the self-adjusting \oplea
using different factors $F$ over the fitness in Figure~\ref{figure:nontrivial}.
Since the algorithm initialized with the rate $F$,
the rate increases after initialization
and decreases again with decreasing fitness-distance to the optimum.
The plot suggests that for higher values of $F$ the rate is more unsteady due to the greater impact of the rate adjustments
while smaller rate updates yield a more stable development of the rate.
Interestingly, for all three values of $F$,
the rates seem to correspond to the same rate
after the initial increasing phase from $F$.
Note that this illustration does not indicate the actual runtime.
In fact, the specific runtimes are $19766$ for $F=1.01$,
$19085$ for $F=1.05$ and $19857$ for $F=1.1$.
A similar, more pronounced behaviour can be seen for $F=2.0$;
we chose these particular values of $F$ for illustrative purposes
since for $F=2.0$ the variance in the rate can be visually confusing
for the reasons given above.

While we would draw from this experiment the conclusion that a smaller choice of $F$ is preferable in a practical application of our algorithm, the influence of the parameter on the runtime is not very large. So it might not be worth optimizing it and rather view Algorithm~\ref{alg:onelambda} as a parameter-less algorithm.

\section{Conclusions}
\label{sec:conclusion}

We proposed and analyzed a new simple self-adjusting mutation scheme for the \oplea. It consists of creating half the offspring with a slightly larger and the rest with a slightly smaller mutation rate. Based on the success of the subpopulations, the mutation rate is adjusted. This simple scheme overcomes difficulties of previous self-adjusting choices, e.g., the careful choice of the exploration-exploitation balance and the forgetting rate in the learning scheme of~\cite{DoerrYangPPSN16}.

We proved rigorously that this self-adjusting \oplea optimizes the \onemax test function in an expected number of $O(n \lambda / \log \lambda + n \log n)$ fitness evaluations. This matches the runtime shown in~\cite{BadkobehPPSN14} for a careful fitness-dependent choice of the mutation rate, which was also shown to be asymptotically optimal among all $\lambda$-parallel black-box optimization algorithms. Hence our runtime result indicates that the self-adjusting mechanism developed in this work is able to find very good mutation rates. To the best of our knowledge, this is the first time that a self-adjusting choice of the mutation rate speeds up a mutation-based algorithm on the \onemax test function by more than a constant factor.

The main technical challenge in this work is to analyze the quality of the best offspring. In contrast to most previous runtime analyses, where only the asymptotic order of the fitness gain was relevant, we needed a much higher degree of precision as we needed to make statements about in which sub-population the best offspring is, or, in case of multiple best offspring, how they are distributed over the two subpopulations. Note that the quality of the best offspring is not as strongly concentrated around its expectation 
as, e.g., the average quality.

As a side-result of our analyses, 
we have observed that using  a fixed rate of 
$r=\ln(\lambda)/2$ gives the 
 bound $O(n/\log \lambda +  n\log(n)/\lambda^{1/2} )$, which is also asymptotically optimal 
unless $\lambda$ is  small. However, this setting is far off the usual constant choice of~$r$. It is the first time that a significantly larger mutation rate was shown to be useful in a simple mutation-based algorithm for a simple fitness landscape. Previously, it was only observed that larger mutation rates can be helpful to leave local optima~\cite{DoerrLMN17}.

From this work, a number of open problems arise. A technical challenge is to prove that our algorithm also without the random rate adjustments performs well. This requires an even more precise analysis of the qualities of the offspring in the two sub-populations, for which we currently do not have the methods. From the view-point of understanding the mutation rate for population-based algorithms, two interesting questions are (i)~to what extent our observation that larger mutation rates are beneficial for the \oplea on \onemax generalizes to other algorithms and problems, and (ii)~for which other problems our self-adjusting choice of the mutation rate gives an improvement over the classic choice of $1/n$ or other static choices.

\paragraph{Acknowledgments}

This work was supported by a public grant as part of the
Investissement d'avenir project, reference ANR-11-LABX-0056-LMH,
LabEx LMH, and by a grant by the Danish Council for Independent Research  (DFF-FNU 4002--00542).

\bibliographystyle{plainnat}

\bigskip

\end{document}